\PassOptionsToPackage{cleveref}{jmlrutils}
\documentclass[12pt]{colt2026}
\usepackage{amsmath,amsfonts,amssymb,commath,dsfont,nicefrac,xfrac,mathtools,mathrsfs}
\usepackage{float}
\usepackage{bm,bbm}
\makeatletter
\expandafter\let\csname enumerate*\endcsname\relax
\expandafter\let\csname endenumerate*\endcsname\relax
\expandafter\let\csname itemize*\endcsname\relax
\expandafter\let\csname enditemize*\endcsname\relax
\expandafter\let\csname description*\endcsname\relax
\expandafter\let\csname enddescription*\endcsname\relax
\makeatother
\usepackage[inline]{enumitem}
\usepackage{framed}
\usepackage{xspace}
\usepackage{microtype}

\usepackage{hyperref}
\usepackage{cleveref}

\Crefname{equation}{Eq.\!}{Eqs.\!}
\usepackage[dvipsnames]{xcolor}
\usepackage[toc,page]{appendix}
\usepackage{svg}
\usepackage{booktabs}       

\usepackage{graphicx}
\usepackage{wrapfig}

\renewcommand{\parallel}{\mathrel{/\mkern-5mu/}}
\makeatletter
\newcommand{\notparallel}{%
  \mathrel{\mathpalette\not@parallel\relax}%
}
\newcommand{\not@parallel}[2]{%
  \ooalign{\reflectbox{\(\m@th#1\smallsetminus\)}\cr\hfil\(\m@th#1\parallel\)\cr}%
}
\makeatother

\DeclareMathOperator{\diag}{diag}

\DeclareMathOperator{\range}{range}

\DeclareMathOperator{\Var}{Var}
\DeclareMathOperator{\Span}{span}
\DeclareMathOperator{\rad}{rad}
\DeclareMathOperator{\rank}{rank}
\DeclareMathOperator{\tr}{tr}

\usepackage{tcolorbox}
\tcbuselibrary{theorems,skins}

\newtcolorbox[auto counter]{mainresult}[1]{%
  enhanced,
  title={\textbf{Main Result~\thetcbcounter\if\relax\detokenize{#1}\relax\else~(#1):\fi}},
  attach title to upper={\ },
  coltitle=black,
  boxrule=0.5pt,
  arc=4pt,
  left=1pt,
  right=1pt,
  bottom=2pt,
  top=2pt,
  grow to left by=-0.01cm,
  grow to right by=-0.01cm,
  rounded corners,
}

\makeatother

\newenvironment{proofsketch}{%
  \begingroup
  \renewcommand{\proofname}{Proof sketch}%
  \begin{proof}%
}{%
  \end{proof}%
  \endgroup
}

\newtheorem*{theorem*}{Theorem}

\everypar{\looseness=-1}

\title[A Unified Theory of Random Projection for Influence Functions]{A Unified Theory of Random Projection for Influence Functions\(^{\ast}\)}
\usepackage{times}

\coltauthor{%
 \Name{Pingbang Hu} \Email{pbb@illinois.edu}\\
 \Name{Yuzheng Hu}\(^{\dagger}\) \Email{yh46@illinois.edu}\\
 \Name{Jiaqi W.\ Ma} \Email{jiaqima@illinois.edu}\\
 \Name{Han Zhao}\(^{\dagger}\) \Email{hanzhao@illinois.edu}\\
 \addr University of Illinois Urbana-Champaign, Urbana, IL, USA%
}

\begin{document}
\maketitle

\begingroup
\makeatletter
\insert\footins{%
  \footnotesize
  \interlinepenalty\interfootnotelinepenalty
  \hsize\columnwidth
  \@parboxrestore
  \noindent {\(^{\ast}\)} Authors ordered alphabetically.\par
  \noindent {\(^\dagger\)} Corresponding to: Yuzheng Hu and Han Zhao.\par
}%
\makeatother
\endgroup

\begin{abstract}
    Influence functions and related data attribution scores take the form of inverse-sensitive bilinear functionals \(g^{\top}F^{-1}g^{\prime}\), where \(F\succeq 0\) is a curvature operator and \(g,g^{\prime}\) are training and test gradients.
    In modern overparametrized models, forming or inverting \(F\in\mathbb{R}^{d\times d}\) is prohibitive, motivating scalable influence computation via \textit{random projection} with a sketch \(P \in \mathbb{R}^{m\times d}\).
    This practice is commonly justified via the Johnson--Lindenstrauss (JL) lemma, which ensures approximate preservation of Euclidean geometry for a fixed dataset.
    However, preserving pairwise distances does not address how sketching behaves under inversion. Furthermore, there is no existing theory that explains how sketching interacts with other widely-used techniques, such as ridge regularization (replacing \(F^{-1}\) with \((F+\lambda I)^{-1}\)) and structured curvature approximations.

    We develop a unified theory characterizing when projection provably preserves influence functions, with a focus on the required sketch size \(m\). When \(g,g^{\prime}\in\range(F)\), we show that:
    \begin{enumerate*}[label=(\roman*), itemjoin={{; }}, itemjoin*={{; and }}]
        \item \textbf{Unregularized projection}: exact preservation holds if and only if \(P\) is injective on \(\range(F)\), which necessitates \(m\geq \rank(F)\)
        \item \textbf{Regularized projection}: ridge regularization fundamentally alters the sketching barrier, with approximation guarantees governed by the \emph{effective dimension} of \(F\) at the regularization scale \(\lambda\). This dependence is both sufficient and worst-case necessary, and can be substantially smaller than \(\rank(F)\)
        \item \textbf{Factorized influence}: for Kronecker-factored curvatures \(F=A\otimes E\), the guarantees continue to hold for decoupled sketches \(P=P_A\otimes P_E\), even though such sketches exhibit structured row correlations that violate canonical i.i.d.\ assumptions; the analysis further reveals an explicit computational–statistical trade-off inherent to factorized sketches.
    \end{enumerate*}
    Beyond this range-restricted setting, we analyze \textbf{out-of-range test gradients} and quantify a sketch-induced \emph{leakage} term that arises when test gradients have components in \(\ker(F)\). This yields guarantees for influence queries on general, unseen test points.

    Overall, this work develops a novel theory that characterizes when projection provably preserves influence and provides principled, instance-adaptive guidance for choosing the sketch size in practice.
\end{abstract}

\begin{keywords}
  Influence functions; random projection; effective dimension
\end{keywords}

\section{Introduction}\label{sec:introduction}
Data attribution aims to explain a trained model's behavior by tracing its predictions back to the training examples~\citep{hammoudeh2024training,deng2025survey}. A classical tool is the \emph{influence function}~\citep{hampel1974influence,koh2017understanding}, which measures how reweighting a training example changes the loss at a test point. In modern neural networks, computing influence involves extremely high-dimensional per-example gradients and ill-conditioned (often singular) curvature operators \(F\). Consequently, scalable influence methods rely on \emph{random projection}, which compresses gradients and curvature to a much smaller dimension before carrying out influence computations~\citep{wojnowicz2016influence,park2023trak,choe2024your,hu2025grass}.  

In these works, projection is often heuristically justified via the Johnson--Lindenstrauss (JL) lemma~\citep{lindenstrauss1984extensions}, since common sketches (Gaussian, Rademacher, and sparse JL) approximately preserve Euclidean geometry~\citep{ailon2009fast,kane2014sparser,nelson2013osnap,cohen2016nearly}. However, influence depends on an \emph{inverse-sensitive} bilinear form induced by \(F^{-1}\), so JL-style arguments do not, on their own, guarantee that projection preserves influence. Furthermore, while recent empirical evidence suggests that the quality of projected influence is sensitive to the sketch size and other hyperparameters, such as the regularization strength~\citep{wang2025taming}, a formal theory explaining these effects remains lacking.

We develop a unified theoretical analysis of projection across three widely used influence-function variants in large-scale neural networks:
\begin{enumerate*}[label=(\arabic*), itemjoin={{; }}, itemjoin*={{; and }}]
    \item \textbf{Unregularized projection}~\citep{wojnowicz2016influence,park2023trak}, which applies sketching directly to influence computations without explicit regularization
    \item \textbf{Regularized projection}~\citep{zheng2024intriguing,mlodozeniec2025influence}, which combines sketching with ridge regularization to stabilize inverse curvature computations~\citep{koh2017understanding}
    \item \textbf{Kronecker-factored influence}~\citep{choe2024your,hu2025grass}, which applies factorized projection on top of structured curvature approximations such as K-FAC~\citep{martens2015kfac,george2018fast}.
\end{enumerate*}

\paragraph{Setup and Notation.}
Let \(g\) and \(g^{\prime}\) denote training and test gradients with respect to the trained model parameters \(\theta\in\mathbb{R}^d\), and let \(F\succeq 0\) be a curvature matrix evaluated at \(\theta \), with \(r \coloneqq \rank(F)\). Typical choices of \(F\) include the generalized Gauss--Newton matrix~\citep{bae2022if,mlodozeniec2025influence} and the empirical Fisher \(\frac{1}{n}\sum_{i=1}^n g_i g_i^{\top}\)~\citep{grosse2023studying,kwon2024datainf}, both standard approximations to the Hessian. We study the inverse-sensitive bilinear form with a ridge parameter \(\lambda \geq 0\), denoted as \(\tau_{\lambda}(g,g^{\prime}) \coloneqq g^{\top}(F+ \lambda I_d)^{-1}g^{\prime}\), where \(F^{-1}\) denotes either the matrix inverse or the Moore--Penrose pseudoinverse when \(F\) is singular. Unless otherwise stated, we let \(P \in \mathbb{R}^{m \times d}\) denote a sketch whose rows are i.i.d. \(1/\sqrt{m}\)-scaled isotropic sub-Gaussian vectors~\citep[Chapter 2]{vershynin2018high}.\footnote{A mean-zero random variable \(X\) is \emph{sub-Gaussian} with parameter \(\sigma^2\) if \(\mathbb{E}[\exp(tX)] \leq \exp(\sigma^2 t^2/2)\) for all \(t\in\mathbb{R}\); a random vector is sub-Gaussian if all one-dimensional marginals are sub-Gaussian.} Such matrices are commonly referred to as \emph{oblivious sketching matrices} and include Gaussian, Rademacher, and sparse JL transforms widely used in practice. The resulting projected (possibly regularized) influence is defined \(\widetilde{\tau}_{\lambda}(g,g^{\prime}) \coloneqq (Pg)^{\top}(PFP^{\top}+\lambda I_m)^{-1}(Pg^{\prime})\).

\paragraph{Our Contributions.}
We present a sequence of results characterizing when projection provably preserves influence functions across a range of settings. Under the assumption \(g, g^{\prime} \in \range(F)\), we precisely delineate when projection \emph{can} and \emph{cannot} succeed without regularization, show how ridge regularization alters the required sketch size, and extend the analysis to Kronecker-factored curvature approximations. We then relax the assumption on \(g^{\prime}\) and quantify an additional sketch-induced \emph{leakage} term arising from components of the test gradient in \(\ker(F)\), yielding guarantees for influence queries at general, unseen test points.

First, we ask whether sketching can preserve the unregularized influence \(\tau_0(g,g^{\prime})\). We show a dichotomy: unless the sketch is injective on \(\range(F)\), uniform multiplicative approximation is impossible, in the sense that no bound of the form \(\lvert \tau_0 (g, g^{\prime}) - \widetilde{\tau}_0 (g, g^{\prime}) \rvert \leq \varepsilon \tau_0(g, g^{\prime}) \) can hold for all \(g, g^{\prime}\) and any \(\varepsilon > 0\). Conversely, injectivity on \(\range(F)\) guarantees \emph{exact} preservation.

\begin{mainresult}{Unregularized projection, \Cref{thm:unregularized-projection}}
    Let \(F\succeq 0\) with \(r\coloneqq \rank(F)\). For \(\lambda=0\), for all \(g, g^{\prime} \in \range(F)\), \(\widetilde{\tau}_0(g,g^{\prime})=\tau_0(g,g^{\prime})\) \emph{if and only if} \(P\) is injective on \(\range(F)\). If \(P\) is not injective on \(\range(F)\) (in particular if \(m<r\)), then for any constant factor, no uniform multiplicative approximation guarantee is possible over \(g, g^{\prime} \in\range(F)\setminus\{0\}\).
\end{mainresult}

\Cref{thm:unregularized-projection} shows that, without regularization, influence preservation requires \(m\) to scale on the order of \(r\). In contrast, when ridge regularization is employed, we show that the required sketch size is no longer governed by \(r\) but instead by the \emph{effective dimension} \(d_{\lambda}(F) \coloneqq \tr(F(F+\lambda I)^{-1})\), a classical notion in Bayesian model selection~\citep{gull1989developments,mackay1991bayesian}.
This quantity is always bounded above by \(r\) and can be substantially smaller when the spectrum of \(F\) decays quickly.

\begin{mainresult}{Regularized projection: \Cref{thm:regularized-projection-upper-bound,thm:regularized-projection-lower-bound}}
    Fix \(\lambda>0\) and define \(d_\lambda(F)=\tr(F(F+\lambda I)^{-1})\). If \(m = \Omega\big((d_\lambda(F)+\log(1/\delta))/\varepsilon^2\big)\), then with probability at least \(1-\delta\), for all \(g,g^{\prime}\in\range(F)\),
    \[
        \big\lvert \widetilde{\tau}_{\lambda}(g,g^{\prime})-\tau_{\lambda}(g,g^{\prime})\big\rvert
        \leq \varepsilon \sqrt{\tau_0(g,g)}\sqrt{\tau_0(g^{\prime},g^{\prime})}
    \]
    Conversely, for Gaussian oblivious sketches, there exist \(F \succeq 0\) such that if \(m=o(d_\lambda(F)/\varepsilon^2)\), there exists some \(g, g^{\prime}\in\range(F)\) admits an \(\Omega(\varepsilon)\) error with constant probability.
\end{mainresult}

In large neural networks, influence computation hinges on curvature inversion, yet forming or inverting the full empirical Hessian or Fisher is infeasible. Consequently, practical pipelines adopt structured curvature approximations, most notably Kronecker-factored approximate curvature (K-FAC). This motivates us to develop a projection theory tailored to this setting.

As reviewed in \Cref{subsec:factorized-influence}, K-FAC models the curvature as \(F = A \otimes E\) (in a layerwise manner), where \(A\) and \(E\) capture the empirical covariances of forward activations and backpropagated gradients, respectively. To exploit this structure, one natural idea is to enforce the sketch to share the same factorization \(P = P_A \otimes P_E\), where \(P_A\) and \(P_E\) are oblivious sketching matrices~\citep{choe2024your}. While this yields substantial computational savings, the Kronecker structure breaks the i.i.d.\ row assumption on \(P\), rendering a direct adaptation of \Cref{thm:unregularized-projection,thm:regularized-projection-upper-bound} inapplicable. We overcome this technical challenge through a fine-grained analysis and establish rigorous approximation guarantees.

\begin{mainresult}{Factorized influence, \Cref{thm:unregularized-factorized-projection,thm:regularized-factorized-projection-upper-bound}}
    Assume \(F=A\otimes E\succeq 0\) and a Kronecker sketch \(P=P_A\otimes P_E\) with factor sketch sizes \(m_A\) and \(m_E\).
    \begin{enumerate}[label=(\roman*), leftmargin=*]
        \item \textbf{Unregularized barrier.} For \(\lambda=0\), exact invariance on \(\range(F)\) holds if and only if \(P_A\) is injective on \(\range(A)\) and \(P_E\) is injective on \(\range(E)\), which in particular necessitates \(m_A\geq \rank(A)\) and \(m_E\geq \rank(E)\).
        \item \textbf{Regularized approximation.} Let \(P_A\) and \(P_E\) each to be oblivious sketch. For \(\lambda>0\), letting \(\lambda_E\coloneqq \lambda/\lVert E\rVert_2\) and \(\lambda_A\coloneqq \lambda/\lVert A\rVert_2\), if \(m_A = \Omega\big((d_{\lambda_E}(A)+\log(1/\delta))/\varepsilon^2\big)\) and \(m_E = \Omega\big((d_{\lambda_A}(E)+\log(1/\delta))/\varepsilon^2\big)\), then with probability at least \(1-\delta\), for all \(g,g^{\prime}\in\range(F)\),
              \[
                  \lvert \widetilde{\tau}_{\lambda}(g,g^{\prime})-\tau_{\lambda}(g,g^{\prime}) \rvert
                  \leq \varepsilon \sqrt{\tau_0(g,g)}\sqrt{\tau_0(g^{\prime},g^{\prime})}.
              \]
    \end{enumerate}
\end{mainresult}

Finally, we note that all of the above guarantees are stated for gradients lying in \(\range(F)\), which, when \(F\) is the empirical Fisher, includes all training gradients used for attribution. In practice, however, a test gradient \(g^{\prime}\) may have a component in \(\ker(F)\). We show that, in both the unregularized and regularized settings, these components do not affect the true (unsketched) influence, while sketching introduces an additional \emph{leakage} term. We quantify this ``out-of-range leakage'' and show it decays at the usual \(O(m^{-1/2})\) rate with explicit dependence on \(\lambda\) and the spectrum of \(F\).

\begin{mainresult}{Projection leakage, \Cref{thm:projection-leakage,thm:factorized-influence-leakage}}
    For a general \(g^{\prime} \in \mathbb{R}^{d}\), write \(g^{\prime}=g^{\prime}_{\parallel}+g^{\prime}_{\perp}\) with \(g^{\prime}_{\parallel}\in\range(F)\) and \(g^{\prime}_{\perp}\in\ker(F)\). We show that in this case, for either \(\lambda = 0\) or \(\lambda > 0\),
    \[
        \lvert \widetilde{\tau}_\lambda(g,g^{\prime}) - \tau_\lambda(g,g^{\prime}) \rvert
        \leq \lvert \widetilde{\tau}_\lambda(g,g^{\prime}_{\parallel}) - \tau_\lambda(g,g^{\prime}_{\parallel}) \rvert + \lvert \widetilde{\tau}_\lambda(g,g^{\prime}_{\perp}) \rvert ,
    \]
    with an additional leakage error \(\lvert \widetilde{\tau}_\lambda(g,g^{\prime}_{\perp}) \rvert\) beyond \Cref{thm:regularized-projection-upper-bound}. We then prove  in \Cref{thm:projection-leakage} that  for a collection of \(k\) test gradients \(\{g^{\prime}_{j}\}_{j=1}^{k}\), with sketch size \(m = \Omega\big((r+\log(k/\delta))/\varepsilon^2\big)\),\footnote{Or alternatively linear in \(k^{\prime} = \dim\bigl(\Span(\{g_{j,\perp}^{\prime}\}_{j=1}^{k})\bigr)\), which in practice is usually worse than \(\log(k)\).}
    \begin{enumerate}[label=(\roman*)]
        \item \textbf{Unregularized}: \(\lvert \widetilde{\tau}_{0}(g,g^{\prime}_{\perp}) \rvert \leq \varepsilon \lVert g\rVert_2 \lVert g^{\prime}_{\perp}\rVert_2/\lambda_{\min}^{+}(F)\).
        \item \textbf{Regularized}: \(\lvert \widetilde{\tau}_\lambda(g,g^{\prime}_{\perp}) \rvert \leq \varepsilon \lVert g\rVert_2 \lVert g^{\prime}_{\perp}\rVert_2 (1 / \lambda + 2\lVert F\rVert_2 / \lambda^2 )\).
    \end{enumerate}
    Moreover, in \Cref{thm:factorized-influence-leakage}, we show that a similar leakage guarantees extend to the factorized influence setting.
\end{mainresult}

Taken together, we develop a unified theory for when projection can provably approximate influence-style data attribution scores of the form \(g^{\top}(F+\lambda I)^{-1}g^{\prime}\). Specifically, without regularization, projection preserves influence for all \(g,g^{\prime}\in\range(F)\) only when the sketch is injective on \(\range(F)\), which essentially forces \(m \geq \rank(F)\); otherwise, uniform multiplicative approximation is impossible. With regularization, the required sketch size is instead governed by the effective dimension \(d_{\lambda}(F)\). We further extend these guarantees to Kronecker-factored (K-FAC-style) curvature and sketches. Finally, we quantify an additional sketch-induced leakage term that can appear when test gradients have components in \(\ker(F)\). Overall, our results provide principled, instance-adaptive guidance for choosing \(m\) and clarify how projection interacts with regularization and structured curvature approximations.

\subsection{Related Works}\label{subsec:related-works}
Influence functions were originally introduced as a classical tool in robust statistics~\citep{hampel1974influence} and later adapted to machine learning by \citet{koh2017understanding}. Owing to their flexibility and generality, influence-based methods have since been widely applied to tasks such as data cleaning~\citep{teso2021interactive}, model debugging~\citep{guo2021fastif}, and subset selection~\citep{hu2024most}, and have been extended to large-scale models, including large language models~\citep{grosse2023studying} and diffusion models~\citep{mlodozeniec2025influence}. However, applying influence functions to modern neural networks poses significant computational challenges due to the need to invert a high-dimensional, often rank-deficient, curvature matrix \(F\)~\citep{koh2017understanding,schioppa2022scaling}.

Several recent works propose scalable approximations based on random projection and related sketching techniques, where they typically project per-sample gradients into a lower-dimensional space before computing influence scores~\citep{wojnowicz2016influence,schioppa2022scaling,park2023trak}, sometimes in combination with explicit regularization~\citep{choe2024your,hu2025grass}. Despite their empirical success, the theoretical guarantees underlying these methods remain limited, and their correctness is often justified heuristically.

Specifically, existing theoretical justifications for projection-based influence methods typically appeal to the Johnson–Lindenstrauss (JL) lemma~\citep{lindenstrauss1984extensions} in the data attribution literature~\citep{wojnowicz2016influence,park2023trak,deng2025survey}. Given a finite set of vectors of size \(n\) in \(\mathbb{R}^d\), the JL lemma guarantees that \(m = O(\log(n) / \varepsilon^2)\) suffices to approximately preserve the pairwise distances between the \(n\) points up to a \((1\pm \varepsilon)\) factor. While powerful, this guarantee is fundamentally misaligned with the structure of influence functions. Influence scores are not determined by Euclidean distances between gradients, but by \emph{inverse-sensitive} bilinear forms \(\tau_0(g,g^{\prime}) = g^{\top} F^{-1} g^{\prime}\) involving the inverse (or pseudoinverse) of a second-order matrix \(F\), and sketching changes the operator to be inverted. Thus, preserving \(\lVert Pg \rVert _2\) (even uniformly over a finite set) does not directly control either the stability of matrix inversion after projection, nor the resulting bilinear form.

Consistent with this mismatch, empirical studies on hyperparameter sensitivity show that the quality of projected influence does not improve monotonically with the sketch size in certain scenarios~\citep{park2023trak}. More detailed ablation analyses further attribute this behavior to a coupled interaction between sketch size and regularization strength~\citep{wang2025taming}. Taken together, these observations underscore the need for a formal theoretical understanding of how projection interacts with the curvature operator, in order to guide the principled use of influence function methods in practice.
\section{Projection-Based Influence Approximation}\label{sec:projection-based-influence-approximation}

\subsection{Unregularized Projection}\label{subsec:unregularized-projection}
In this section, we show that in the absence of regularization, projection alone encounters a fundamental barrier in the sketch size \(m\). In particular, there is a sharp phase transition: when \(m < r\), no multiplicative approximation guarantee is possible; whereas when \(m \geq r\), a continuous sketch yields exact invariance with probability one.

\begin{theorem}[Barrier of unregularized projection]\label{thm:unregularized-projection}
    The equality \(\tau_0(g, g^{\prime}) = \widetilde{\tau}_0(g, g^{\prime})\) holds for any \(g, g^{\prime} \in\range(F)\) \emph{iff} \(P\) is injective on \(\range(F)\), i.e. \(\rank(PU) = \rank(F) = r\) where \(F=U\Lambda U^{\top}\) is the compact eigendecomposition of \(F\) with \(U\in\mathbb{R}^{d\times r}\) orthonormal and \(\Lambda\in\mathbb{R}^{r\times r}\) positive definite. Subsequently, for any PSD \(F \in \mathbb{R}^{d \times d}\) and \textbf{any} matrix \(P \in \mathbb{R}^{m \times d}\), one cannot hope to obtain any multiplicative approximation of \(\tau_0(g, g^{\prime})\) via \(\widetilde{\tau}_0(g, g^{\prime})\) when \(\rank(PU) < r\).
\end{theorem}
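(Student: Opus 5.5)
The plan is to reduce everything to the \(r\)-dimensional coordinates of \(\range(F)\). Write \(F=U\Lambda U^{\top}\), set \(B\coloneqq PU\in\mathbb{R}^{m\times r}\), and parametrize \(g=U\Lambda a\), \(g^{\prime}=U\Lambda b\) with \(a,b\in\mathbb{R}^r\). This is possible exactly because \(g,g^{\prime}\in\range(F)\). Then
\[
\tau_0(g,g^{\prime})=a^{\top}\Lambda U^{\top}U\Lambda^{-1}U^{\top}U\Lambda b=a^{\top}\Lambda b .
\]
Similarly, \(Pg=B\Lambda a\) and \(PFP^{\top}=B\Lambda B^{\top}\), so
\[
\widetilde{\tau}_0(g,g^{\prime})=a^{\top}\Lambda B^{\top}(B\Lambda B^{\top})^{+}B\Lambda b .
\]
Put \(C\coloneqq B\Lambda^{1/2}\) and \(x=\Lambda^{1/2}a\), \(y=\Lambda^{1/2}b\). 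Then \(\tau_0=x^{\top}y\) and \(\widetilde{\tau}_0=x^{\top}C^{\top}(CC^{\top})^{+}Cy=x^{\top}\Pi y\). Here \(\Pi\) is the orthogonal projector onto \(\range(C^{\top})=\range(\Lambda^{1/2}B^{\top})\). This uses the standard identity \(C^{\top}(CC^{\top})^{+}C=C^{+}C\).

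For the ``if'' direction, suppose \(\rank(PU)=r\). Then \(\rank(C)=r\) because \(\Lambda^{1/2}\) is invertible, so \(\range(C^{\top})=\mathbb{R}^r\), \(\Pi=I_r\), and \(\widetilde{\tau}_0=\tau_0\) for all \(x,y\).

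For the ``only if'' direction, suppose \(\rank(B)<r\). Then \(\ker(C)\neq\{0\}\), and \(\ker(C)=\range(C^{\top})^{\perp}\). Choose a unit vector \(x\in\ker(C)\) and take \(y=x\). Then \(\Pi x=0\), so \(\widetilde{\tau}_0=0\) while \(\tau_0=\lVert x\rVert^2=1\). Mapping back via \(g=g^{\prime}=U\Lambda^{1/2}x\neq 0\) gives a nonzero pair in \(\range(F)\) on which equality fails.

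The same pair settles the second claim. Any multiplicative guarantee \(\lvert\tau_0-\widetilde{\tau}_0\rvert\le\varepsilon\tau_0\) with \(\varepsilon<1\), or any \(c\,\tau_0\le\widetilde{\tau}_0\le C\tau_0\) with \(c>0\), fails because \(\widetilde{\tau}_0=0\) while \(\tau_0=1>0\). To exclude arbitrary constant factors, including \(\varepsilon\ge1\), I would choose \(y\) with a component in \(\range(C^{\top})\) as well, if that space is nonzero. Take \(y=x+tz\), where \(z\in\range(C^{\top})\) is a unit vector, which is orthogonal to \(x\). Then \(\tau_0=x^{\top}y=1\) while \(\widetilde{\tau}_0=x^{\top}\Pi y=0\). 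Alternatively, take \(y=x-sz\) and \(x^{\prime}=x+z\); then \(\tau_0=1\) and \(\widetilde{\tau}_0=-s\), and this ratio can be made arbitrarily bad by letting \(s\to\infty\). If \(C=0\), then \(\widetilde{\tau}_0\equiv 0\) and the failure is already immediate.

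The main obstacle is the pseudoinverse bookkeeping: justifying \(\bigl(B\Lambda B^{\top}\bigr)^{+}\) manipulations and the projector identity when \(B\) is rank-deficient or \(m>r\). Everything else is linear algebra once the problem is placed in \(\Lambda^{1/2}\)-whitened coordinates.
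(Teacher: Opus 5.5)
Your argument for both directions is correct and is essentially the paper's proof. The paper also passes to \(A = PU\Lambda^{1/2}\) (your \(C\)) and uses \(A^{\top}(AA^{\top})^{\dagger}A = I_r\) when \(A\) has full column rank. For the converse it takes \(g = Uz\) with \(PUz = 0\), so that \(\widetilde{\tau}_0(g,g) = 0 < \tau_0(g,g)\). Your projector form \(\widetilde{\tau}_0 = x^{\top}\Pi y\) is a tidier packaging of the same computation. Your sentence ruling out guarantees with \(\varepsilon<1\), or two-sided ratio bounds with \(c>0\), is exactly the paper's justification of the second claim.

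The optional extension to \(\varepsilon \ge 1\), however, does not work as written.

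The choice \(y = x + tz\) again gives \(\tau_0 = 1\) and \(\widetilde{\tau}_0 = 0\). Any bound with \(\varepsilon \ge 1\) tolerates that pair.

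With \(x' = x+z\) and \(y = x - sz\), you actually get \(\tau_0 = (x+z)^{\top}(x-sz) = 1 - s\), not \(1\). Then \(\widetilde{\tau}_0/\tau_0 = s/(s-1) \to 1\) as \(s\to\infty\), so nothing is excluded. Ironically \(s=1\) would have worked, since then \(\tau_0 = 0 \neq \widetilde{\tau}_0 = -1\). A correct version is \(x' = x+z\), \(y = (1+s)x - sz\), which gives \(\tau_0 = 1\) and \(\widetilde{\tau}_0 = -s\). Either construction needs \(\range(C^{\top}) \neq \{0\}\).

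When \(C = 0\), "failure is immediate" holds only for \(\varepsilon < 1\). There \(\widetilde{\tau}_0 \equiv 0\) satisfies \(\lvert \tau_0 - \widetilde{\tau}_0 \rvert \le 1\cdot\lvert \tau_0\rvert\), for example when \(P = 0\). So an impossibility result covering every constant factor is false in that case.

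None of this affects the statement as the paper proves it.
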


The proof can be found in \Cref{adxsec:unregularized-projection}. The key intuition is that, without regularization, influence depends on exact inversion over \(\range(F)\). Any collapse of directions within \(\range(F)\) renders \(F^{-1}\) ill-defined after sketching, hence no multiplicative control is possible.
Consequently, exact preservation of unregularized influence requires the sketch to be injective on \(\range(F)\), which in turn forces \(m\geq r\). In overparameterized regimes where high-dimensional per-sample gradients are in general position, one typically has \(r \approx n\), and thus \(m\) must scale with the dataset size. In contrast, we will show that introducing ridge regularization (\(\lambda > 0\)) fundamentally changes this requirement, with the sketch size governed instead by the effective dimension \(d_{\lambda}(F)\), which can be substantially smaller than \(r\).


\subsection{Regularized Projection}\label{subsec:regularized-projection}
Unlike the unregularized case, in this section, we show that for the projected influence function, the extra damping term \(\lambda I_d\) helps control the effective dimension by shrinking small eigenvalues of the curvature operator \(F\), effectively reducing the Gaussian complexity governing the uniform concentration bound. In particular, we show that the sketch size requires \emph{only} to scale with the \emph{effective dimension} of \(F\) with \(\lambda > 0\):
\[
    d_\lambda(F)
    \coloneqq \tr\big(F(F+\lambda I_d)^{-1}\big)
    =\sum_{j=1}^{r}\frac{\lambda_j(F)}{\lambda_j(F)+\lambda}
    \leq r.
\]
In practice, as we shall observe in \Cref{sec:experiment-discussion}, the spectrum of \(F\) decays rapidly, and thus \(d_\lambda \ll r \ll d\) for moderate \(\lambda\). Hence, having the sketch size \(m\) to only scale with the effective dimension \(d_\lambda\) at scale \(\lambda\) rather than the ambient dimension \(d\) or the rank \(r\) of \(F\) makes the regularized projection approach feasible at scale. We now state the theorem and sketch the proof below.

\begin{theorem}[Upper bound of regularized projection]\label{thm:regularized-projection-upper-bound}
    Let \(P \in \mathbb{R}^{m \times d}\) be a oblivious sketching matrix with rows \(P_{i}^{\top} = \frac{1}{\sqrt{m}} W_{i}^{\top}\), where \(\{W_{i}\}_{i=1}^m\sim W\) are i.i.d.\ sub-Gaussian random vectors in \(\mathbb{R}^d\) satisfying \(\mathbb{E}[W]=0\) and \(\mathbb{E}[W W^{\top}]=I_d\).\footnote{Since we only assume bounded sub-Gaussian norm on the random vectors \(W_i\), the result applies to a wide range of random projection matrices, including Gaussian, Rademacher, and sparse JL transform.} For any \(\varepsilon, \delta \in (0,1)\), if the sketch size satisfies
    \[
        m
        = \Omega \left( \frac{d_{\lambda}(F)+\log(1/\delta)}{\varepsilon^2} \right) ,
    \]
    then with probability at least \(1-\delta\), the following bounds hold for all \(g, g^{\prime} \in \range(F)\):
    \[
        \lvert \widetilde{\tau}_{\lambda}(g, g^{\prime}) - \tau_{\lambda}(g, g^{\prime}) \rvert
        \leq \varepsilon \sqrt{\tau_0(g, g)} \sqrt{\tau_0(g^{\prime}, g^{\prime})}.
    \]
\end{theorem}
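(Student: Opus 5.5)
Write $F=U\Lambda U^{\top}$ and $F_\lambda \coloneqq F+\lambda I_d$. We compare $\widetilde{\tau}_\lambda$ and $\tau_\lambda$ through a push-through identity. Since $g,g^{\prime}\in\range(F)$, set $g=F^{1/2}a$ and $g^{\prime}=F^{1/2}a^{\prime}$ with $a,a^{\prime}\in\range(F)$, so that $\lVert a\rVert_2^2=\tau_0(g,g)$ and $\lVert a^{\prime}\rVert_2^2=\tau_0(g^{\prime},g^{\prime})$. With $B\coloneqq PF^{1/2}\in\mathbb{R}^{m\times d}$, the push-through identity gives
\[
\widetilde{\tau}_\lambda(g,g^{\prime}) = a^{\top}B^{\top}(BB^{\top}+\lambda I_m)^{-1}Ba^{\prime} = a^{\top}(B^{\top}B+\lambda I_d)^{-1}B^{\top}B\,a^{\prime},
\]
and likewise $\tau_\lambda(g,g^{\prime})=a^{\top}(F+\lambda I)^{-1}F\,a^{\prime}$. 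The difference is therefore the bilinear form of $\Delta \coloneqq h(F^{1/2}P^{\top}PF^{1/2}) - h(F)$, where $h(x)=x/(x+\lambda)$. The claim reduces to showing $\lVert \Delta\rVert_2\le\varepsilon$ with probability $1-\delta$, uniformly in $a,a^{\prime}$.

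To bound $\Delta$, we whiten at scale $\lambda$. Let $S\coloneqq F^{1/2}P^{\top}PF^{1/2}$ and $K\coloneqq F_\lambda^{-1/2}F^{1/2}$. Then $h(S)=I-\lambda(S+\lambda I)^{-1}$, and
\[
S+\lambda I = F_\lambda^{1/2}(I+E)F_\lambda^{1/2}, \qquad E\coloneqq K^{\top}(P^{\top}P-I)K.
\]
Rewriting $h(S)-h(F)$ with this factorization gives
\[
\Delta = \lambda F_\lambda^{-1/2}\bigl[I-(I+E)^{-1}\bigr]F_\lambda^{-1/2} = \lambda F_\lambda^{-1/2}(I+E)^{-1}E\,F_\lambda^{-1/2}.
\]
If $\lVert E\rVert_2\le\varepsilon/2$, then $\lVert (I+E)^{-1}E\rVert_2\le\varepsilon$. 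Since $\lambda\lVert F_\lambda^{-1}\rVert_2\le1$, this yields $\lVert\Delta\rVert_2\le\varepsilon$.

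It remains to show $\lVert E\rVert_2\le\varepsilon/2$, which is the central probabilistic step and the main obstacle. We have $E=\frac1m\sum_i(K^{\top}W_i)(K^{\top}W_i)^{\top}-K^{\top}K$, a sample-covariance deviation for the sub-Gaussian vectors $K^{\top}W_i$. These have covariance $\Sigma=K^{\top}K=F(F+\lambda I)^{-1}$, with $\tr\Sigma=d_\lambda(F)$ and $\lVert\Sigma\rVert_2\le1$. We invoke the effective-rank covariance estimation bound for sub-Gaussian vectors (Koltchinskii--Lounici, or the Gaussian-complexity/matrix-deviation bound in Vershynin's \emph{High-Dimensional Probability}):
\[
\lVert E\rVert_2\lesssim \lVert\Sigma\rVert_2\Bigl(\sqrt{\tfrac{\tr\Sigma/\lVert\Sigma\rVert_2+\log(1/\delta)}{m}}+\tfrac{\tr\Sigma/\lVert\Sigma\rVert_2+\log(1/\delta)}{m}\Bigr).
\]
The effective rank $\tr\Sigma/\lVert\Sigma\rVert_2$ may exceed $d_\lambda(F)$ when $\lVert\Sigma\rVert_2<1$, so we instead apply the bound in the unnormalized form: a generic chaining argument over the unit sphere gives Gaussian width $\sqrt{\tr\Sigma}$, which yields $\lVert E\rVert_2\lesssim\sqrt{(d_\lambda+\log(1/\delta))/m}+(d_\lambda+\log(1/\delta))/m$. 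The chaining constant, and making sure the sub-Gaussian norm enters only through a constant, are where care is needed.

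Taking $m=C(d_\lambda(F)+\log(1/\delta))/\varepsilon^2$ makes both terms at most $\varepsilon/2$, using $\varepsilon<1$. Finally, since $\Delta$ is bounded in operator norm, $\lvert a^{\top}\Delta a^{\prime}\rvert\le\varepsilon\lVert a\rVert_2\lVert a^{\prime}\rVert_2$ holds simultaneously for all $g,g^{\prime}\in\range(F)$, with no union bound over gradients.
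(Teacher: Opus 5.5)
Your proposal is correct and follows essentially the same route as the paper: the same push-through reduction to bounding \(\lVert h(S)-h(F)\rVert_2\), with your \(I+E\) being exactly the paper's whitened matrix \((F+\lambda I)^{-1/2}(S+\lambda I)(F+\lambda I)^{-1/2}\), controlled by the same covariance-deviation bound for \(\Sigma=F(F+\lambda I)^{-1}\) with \(\tr\Sigma=d_\lambda(F)\) and \(\lVert\Sigma\rVert_2\le 1\). Your stable-rank worry is harmless, since multiplying the normalized bound through by \(\lVert\Sigma\rVert_2\le 1\) already yields the \(d_\lambda(F)\) form (which is also what the paper's Gaussian-complexity argument gives), and requiring \(a\in\range(F)\) so that \(\lVert a\rVert_2^2=\tau_0(g,g)\) is a small bit of care the paper leaves implicit.
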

\begin{proof}
    Let \(g, g^{\prime} \in \range(F)\) and write \(g = F^{1/2} y\) and \(g^{\prime} = F^{1/2} y^{\prime}\). Using the push-through identity \(A(A^{\top} A + \lambda I)^{-1} = (A A^{\top} + \lambda I)^{-1} A\) with \(A = P F^{1/2}\), and defining \(G \coloneqq F^{1/2} P^{\top} P F^{1/2}\) yields
    \[
        \begin{split}
            \widetilde{\tau}_\lambda(g, g^{\prime})
             & = (Pg)^{\top} (PFP^{\top} + \lambda I)^{-1} (Pg^{\prime})                        \\
             & = y^{\top} F^{1/2} P^{\top} (P F P^{\top} + \lambda I)^{-1} P F^{1/2} y^{\prime}
            = y^{\top} G (G+\lambda I)^{-1} y^{\prime} .
        \end{split}
    \]
    On the other hand, define \(B \coloneqq F^{1/2} (F + \lambda I)^{-1/2}\) as the \textit{\(\lambda\)-whitened influence subspace}. Since \(F\) and \(F+\lambda I\) are simultaneously diagonalizable (they share the eigenbasis of \(F\)), all matrix functions of these operators commute; in particular, \(F^{1/2}\), \((F+\lambda I)^{-1/2}\), and \((F+\lambda I)^{-1}\) commute and \(BB^{\top} = F^{1/2}(F+\lambda I)^{-1}F^{1/2} = F(F+\lambda I)^{-1}\). Hence, we have
    \[
        \tau_\lambda(g, g^{\prime})
        = g^{\top} (F + \lambda I)^{-1} g^{\prime}
        = y^{\top} B B^{\top} y^{\prime}
        = y^{\top} F (F + \lambda I)^{-1} y^{\prime},
    \]
    which gives \(\lvert \widetilde{\tau}_\lambda(g,g^{\prime}) - \tau_\lambda(g,g^{\prime}) \rvert = \lvert y^{\top} G(G+\lambda I)^{-1} y^{\prime} - y^{\top} F (F + \lambda I)^{-1} y^{\prime} \rvert\). Thus, it suffices to control the spectrum of \(F(F+\lambda I)^{-1} - G(G+\lambda I)^{-1}\).

    Note \(B^{\top}B = (F+\lambda I)^{-1/2}F(F+\lambda I)^{-1/2}\) and \(\lVert B \rVert_2^2 = \lVert B^{\top}B \rVert_2 \leq 1\). Applying \Cref{lma:covariance-concentration} with \(M=B\) and \(m = \Omega\big(\varepsilon^{-2}\big(d_\lambda(F)+\log(1/\delta)\big)\big)\) yields \(\lVert B^{\top}(P^{\top}P-I)B \rVert_2 \leq \varepsilon/2\). Conjugating by \((F+\lambda I)^{1/2}\) and using \(G=F^{1/2}P^{\top}PF^{1/2}\), this implies a PSD sandwich
    \[
        \left(1-\tfrac{\varepsilon}{2}\right)(F+\lambda I)
        \preceq
        (G+\lambda I)
        \preceq
        \left(1+\tfrac{\varepsilon}{2}\right)(F+\lambda I).
    \]
    Inverting the sandwich gives \(\lVert (G+\lambda I)^{-1} - (F+\lambda I)^{-1} \rVert_2 \leq \frac{1}{\lambda}\cdot \frac{\varepsilon/2}{1-\varepsilon/2} \leq \varepsilon / \lambda\). Finally, using the identity \(A(A+\lambda I)^{-1} = I - \lambda(A+\lambda I)^{-1}\) for any PSD \(A\), we get
    \[
        \left\lVert F(F+\lambda I)^{-1} - G(G+\lambda I)^{-1} \right\rVert_2
        = \lambda\left\lVert (G+\lambda I)^{-1} - (F+\lambda I)^{-1} \right\rVert_2
        \leq \varepsilon,
    \]
    which is the desired operator control (formal details are in \Cref{lma:concentration-covariance-operator-norm}). Therefore,
    \[
        \lvert \widetilde{\tau}_\lambda(g,g^{\prime}) - \tau_\lambda(g,g^{\prime}) \rvert
        = \left\lvert y^{\top} \left[ G(G+\lambda I)^{-1} - F (F + \lambda I)^{-1} \right] y^{\prime} \right\rvert
        \leq \varepsilon \lVert y \rVert _2 \lVert y^{\prime} \rVert _2.
    \]
    As \(\lVert y \rVert _2^2 = \tau_0(g, g)\) and \(\lVert y^{\prime} \rVert _2^2 = \tau_0(g^{\prime}, g^{\prime})\), we conclude the proof.
\end{proof}

\begin{remark}
    The core technical challenge in the proof of \Cref{thm:regularized-projection-upper-bound} is to bound \(\lVert F(F + \lambda I)^{-1} - G(G + \lambda I)^{-1} \rVert _2\). A natural alternative is to invoke an oblivious subspace embedding (OSE)~\citep{woodruff2014sketching}. For a fixed matrix \(A\in\mathbb{R}^{d\times r}\), \(P\in\mathbb{R}^{m\times d}\) is an \(\varepsilon\)-OSE for \(\range(A)\) if
    \[
        -\varepsilon A^\top A
        \preceq
        A^\top(P^\top P-I)A
        \preceq
        \varepsilon A^\top A .
    \]
    Instantiating \(A=F^{1/2}\) yields a sandwich \((1-\varepsilon)F\preceq G\preceq(1+\varepsilon)F\), which implies \(\|F(F+\lambda I)^{-1}-G(G+\lambda I)^{-1}\|_2=O(\varepsilon)\) by operator monotonicity of \(t\mapsto t/(t+\lambda)\) (see \Cref{adxsubsec:OSE}). 
    However, OSE enforces uniform multiplicative accuracy over \(\range(F^{1/2})\), so even directions with \(\lambda_j(F) \ll \lambda\) must be preserved up to a \((1 \pm \varepsilon)\) factor, leading to \(m=\Omega(r/\varepsilon^{2})\)~\citep[Theorems~2.3 and~6.10]{woodruff2014sketching}.

    Our proof instead exploits the weaker, \(\lambda\)-dependent requirement: it suffices for \(P\) to be an approximate isometry on the \(\lambda\)-whitened influence subspace \(B = F^{1/2}(F+\lambda I)^{-1/2}\), i.e., \(\lVert B^{\top}(P^{\top}P-I)B \rVert_2 \leq O(\varepsilon)\). This yields the \textbf{ridge-regularized sandwich} \((1-\varepsilon)(F+\lambda I)\preceq G+\lambda I\preceq(1+\varepsilon)(F+\lambda I)\). 
    Importantly, this condition controls \(F+\lambda I\) rather than \(F\) itself: in directions where \(\lambda_j(F)\ll\lambda\), both \(F+\lambda I\) and \(G+\lambda I\) are dominated by \(\lambda\), so even large relative errors in \(F\) have negligible impact on the inverse. Consequently, such low-eigenvalue directions need not be preserved multiplicatively, and the required sketch size is governed by the effective dimension at scale \(\lambda\),
    yielding the sharper bound \(m=\Omega(d_\lambda(F)/\varepsilon^2)\).
\end{remark}

We now complement \Cref{thm:regularized-projection-upper-bound} with a worst-case matching lower bound, showing that the effective dimension \(d_\lambda(F)\) characterizes the tight dependence of \(m\) for oblivious sketching in regularized influence. Concretely, we show that for Gaussian oblivious sketches, if the sketch size is smaller than \(\Theta(d_\lambda(F)/\varepsilon^2)\), then there exist problem instances on which the sketched influence incurs \(\Omega(\varepsilon)\) error with constant probability.

\begin{theorem}[Lower bound for regularized projection]\label{thm:regularized-projection-lower-bound}
    Let \( P \in \mathbb{R}^{m \times d} \) be a Gaussian oblivious sketch with rows i.i.d.\ \(\mathcal{N}(0,I_d)\). There exists a family of \(F \in \mathbb{R}^{d \times d}\) such that if \(m = o(d_\lambda(F)/\varepsilon^2)\), then there exists \(g \in \range(F)\) with \(\lvert \widetilde{\tau}_\lambda(g,g) - \tau_\lambda(g,g) \rvert = \Omega(\varepsilon) \tau_0(g,g)\) with constant probability.
\end{theorem}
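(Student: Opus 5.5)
We construct a hard instance in which \(F\) is a scaled identity on a \(k\)-dimensional subspace. Take \(F=\mu\,\Pi_k\), where \(\Pi_k\) projects onto the first \(k\) coordinates, and set \(\mu=\lambda\). Then \(d_\lambda(F)=k\mu/(\mu+\lambda)=k/2\asymp k\). The rows of \(P\) are rescaled by \(1/\sqrt{m}\), following the convention of \Cref{thm:regularized-projection-upper-bound}.

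Write \(g=F^{1/2}y\) with \(y\in\range(\Pi_k)\), so that \(\tau_0(g,g)=\|y\|^2\). By the push-through computation in the proof of \Cref{thm:regularized-projection-upper-bound},
\[
\widetilde{\tau}_\lambda(g,g)-\tau_\lambda(g,g)=y^{\top}\bigl[G(G+\lambda I)^{-1}-F(F+\lambda I)^{-1}\bigr]y .
\]
Here \(G=\mu\,\Pi_k P^{\top}P\Pi_k\), and its nonzero block is \(\mu S\) with \(S\coloneqq P_k^{\top}P_k\in\mathbb{R}^{k\times k}\). The matrix \(P_k\) collects the first \(k\) columns of \(P\), so \(mS\) is a Wishart\((m,I_k)\) matrix. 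Restricted to \(\range(\Pi_k)\), the error operator is therefore \(h(S)-h(I)\) with \(h(t)=\mu t/(\mu t+\lambda)=t/(t+1)\).

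It suffices to show that, when \(m=o(k/\varepsilon^2)\), the operator \(h(S)-h(I)\) has an eigenvalue of magnitude \(\Omega(\varepsilon)\) with constant probability. We then choose \(g\) along the corresponding eigenvector. This is legitimate because the statement only asks that some \(g\in\range(F)\) exist, and \(g\) may depend on \(P\).

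\begin{enumerate}
\item \textbf{Spectral spread.} Assume first \(m\ge k\). By Bai--Yin / Davidson--Szarek type bounds, together with a matching lower-deviation estimate (e.g.\ the Marchenko--Pastur edge with Tracy--Widom fluctuations, or a direct small-ball argument), the extreme eigenvalues of \(S\) are \(\bigl(1\pm\sqrt{k/m}\bigr)^2+o(\cdot)\) with constant probability. Hence \(\lambda_{\max}(S)\ge 1+c\sqrt{k/m}\). If \(m<k\), then \(S\) is singular, and a vector in its kernel gives error \(h(0)-h(1)=-1/2\), which is \(\Omega(1)\ge\Omega(\varepsilon)\).
\item \textbf{Transfer through \(h\).} The map \(h\) is smooth with \(h'(1)=1/4\). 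Writing \(\delta\coloneqq\sqrt{k/m}\), in the regime \(\delta\le 1\) we get \(|h(1+c\delta)-h(1)|\ge c'\delta\). When \(m=o(k/\varepsilon^2)\) we have \(\delta=\omega(\varepsilon)\), so the error is \(\Omega(\varepsilon)\|y\|^2=\Omega(\varepsilon)\,\tau_0(g,g)\). If \(\delta>1\), the error is \(\Omega(1)\) by monotonicity of \(h\).
\item \textbf{General \(\lambda\) family.} To realize arbitrary effective dimensions, let \(\mu\) vary over \(\mu=c\lambda\) with \(c\) fixed, and add arbitrary eigenvalues \(\ll\lambda\) if desired. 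Those extra eigenvalues contribute little to \(d_\lambda\), and \(g\) is chosen in the top block, so they do not affect the argument.
\end{enumerate}

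The main obstacle is Step 1. Upper-tail bounds on Wishart extreme eigenvalues are standard, but the lower bound we need, \(\lambda_{\max}(S)-1\gtrsim\sqrt{k/m}\) with constant probability, requires a lower-deviation statement. I would obtain it elementarily, without relying on Tracy--Widom.

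The starting point is \(\tr(S)=k\) in expectation, while \(\tr(S^2)/k=1+k/m+1/m\) in expectation. The variance of the eigenvalues of \(S\) is then about \(k/m\). Concentration of \(\tr(S)\) and \(\tr(S^2)\) (Gaussian chaos, or Hanson--Wright) shows both traces are near these means with constant probability. On that event some eigenvalue deviates from the mean by at least \(\sqrt{k/m}/2\). That deviation may go in either direction, but \(h\) is strictly monotone, so either direction produces an \(\Omega(\varepsilon)\) error.
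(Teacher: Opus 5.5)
Your reduction is the same as the paper's treatment of its \(T_1\) term. With \(F=\lambda\Pi_k\), the error on \(\range(F)\) is \(y^{\top}(h(S)-h(I_k))y\) with \(S=P_k^{\top}P_k\), you take \(y\) along an eigenvector, and everything reduces to showing \(\lVert S-I_k\rVert_2\gtrsim\sqrt{k/m}\) with constant probability, uniformly in \(m\) and \(k\). The paper's extra \(\eta\lambda\) block is not needed for the statement, so your simpler family is fine.

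The gap is in how you get that probabilistic lower bound. Marchenko--Pastur edge and Tracy--Widom results are asymptotic (proportional regime, or at least \(k\to\infty\)). They do not give a bound uniform in \(m,k\), and the theorem must also cover, for example, fixed \(k\) with \(\varepsilon\to0\) and \(m\to\infty\).

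Your elementary fallback fails as written. Write \(S=I_k+m^{-1/2}Z\). Both \(\tr(S)\) and \(\tr(S^2)\) fluctuate at scale \(\sqrt{k/m}\) through the linear term \(\tr Z\). The quantity you need to certify, \(\sum_i(\lambda_i-1)^2\), has mean only \(k(k+1)/m\), which is smaller than that scale whenever \(m\gtrsim k^3\). That regime lies inside \(m=o(k/\varepsilon^2)\) once \(\varepsilon\lesssim 1/k\). So knowing each trace is near its mean does not determine the eigenvalue spread to the required precision.

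Suppose you instead pass to the combination \(Q\coloneqq\tr(S^2)-2\tr(S)+k=\lVert S-I_k\rVert_F^2\), where the linear fluctuations cancel. Concentration still points the wrong way. For \(k=O(1)\), the standard deviation of \(Q\) is comparable to its mean; for \(k=1\), \(mQ\) is asymptotically \(2\chi^2_1\). Any concentration tail bound is therefore vacuous at the scale \(\tfrac12\mathbb{E}Q\). What you need is an anti-concentration (lower-tail) statement.

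The missing ingredient is the second-moment method applied to \(Q\) itself, which is exactly the paper's anti-concentration lemma.
\begin{enumerate}
\item Compute \(\mathbb{E}Q=k(k+1)/m\).
\item Show \(\mathbb{E}Q^2\le C(\mathbb{E}Q)^2\) uniformly in \(m,k\). This needs a fourth-moment computation for the i.i.d.\ centered terms \(X_i=w_iw_i^{\top}-I_k\): namely \(\mathbb{E}\lVert X_1\rVert_F^2\), \(\mathbb{E}\lVert X_1\rVert_F^4\) and \(\mathbb{E}\langle X_1,X_2\rangle^2\). You also need to enumerate which index patterns survive in \(\mathbb{E}\lVert \tfrac{1}{m}\sum_i X_i\rVert_F^4\). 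The paper obtains \(C=15\).
\item Apply Paley--Zygmund to get \(Q\ge\tfrac14\mathbb{E}Q\) with probability at least \(3/80\).
\item Conclude \(\lVert S-I_k\rVert_2^2\ge Q/k\ge (k+1)/(4m)\), which is the spread you need. Your Step~2 then finishes the proof.
\end{enumerate}

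A smaller point on Step~3: added eigenvalues \(\eta\lambda\) are not inert. Splitting \(P\) into column blocks \(P_L\) (top block) and \(P_S\) (small block) gives \(PFP^{\top}+\lambda I=\lambda(P_LP_L^{\top}+\eta P_SP_S^{\top}+I)\), which couples them to the top block. The paper bounds this effect by \(\eta\lVert P_S\rVert_2^2\le 9\eta\) via Woodbury. Since your pure-block family already suffices, you can simply drop Step~3.
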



Full details are in \Cref{adxsec:regulairzed-projection}. We see that \Cref{thm:regularized-projection-lower-bound} formalizes a worst-case limitation for this class of sketches: with Gaussian oblivious projections, one cannot uniformly beat the \(d_\lambda(F)/\varepsilon^2\) scaling. Combined with the instance-adaptive upper bound in \Cref{thm:regularized-projection-upper-bound}, this identifies \(d_\lambda(F)\) as the fundamental complexity parameter governing regularized projection.


\subsection{Factorized Influence}\label{subsec:factorized-influence}
In many large-scale settings, explicitly forming or inverting the empirical Fisher/Hessian \(F\) is infeasible, and second-order methods instead rely on structured approximations. A common choice is a Kronecker factorization (e.g., K-FAC~\citep{martens2015kfac,grosse2023studying}), which models each layerwise block as \(F \approx A \otimes E\) for smaller PSD factors \(A\in\mathbb{R}^{d_A\times d_A}\) and \(E\in\mathbb{R}^{d_E\times d_E}\), which are forward activation and backprop-gradient covariances, respectively.

This structure suggests a natural computational counterpart on the sketching side: use a \emph{factorized sketch} \(P=P_A\otimes P_E\), where \(P_A\in\mathbb{R}^{m_A\times d_A}\) and \(P_E\in\mathbb{R}^{m_E\times d_E}\) are respectively the standard oblivious sketching considered in \Cref{thm:regularized-projection-upper-bound}.\footnote{Concretely, rows of \(P_A\) and \(P_E\) are i.i.d.\ isotropic sub-Gaussian random vectors with scaling \(1 / \sqrt{m_A}\) or \(1 / \sqrt{m_E}\).} The resulting sketch has ambient dimension \(d\coloneqq d_A d_E\) and sketch dimension \(m\coloneqq m_A m_E\), i.e., \(P\in\mathbb{R}^{m\times d}\). Moreover, write a per-example layer gradient as a matrix \(G\in\mathbb{R}^{d_E\times d_A}\) with \(g=\operatorname{vec}(G)\in\mathbb{R}^{d}\). Then the projection can be computed without materializing the full \(m\times d\) sketching matrix as \(Pg = (P_A\otimes P_E)\operatorname{vec}(G) = \operatorname{vec}(P_E G P_A^{\top})\). Consequently, the per-example cost reduces to two smaller multiplies \(P_E G\) and \((P_E G)P_A^{\top}\), plus solving the resulting regularized system in sketch dimension \(m\). Similarly, we can also form the sketched curvature efficiently: using the mixed-product identity of Kronecker products, \(PFP^{\top} = (P_A\otimes P_E)(A\otimes E)(P_A\otimes P_E)^{\top} = (P_A A P_A^{\top})\otimes (P_E E P_E^{\top})\).

In the unregularized case (\(\lambda=0\)), the exact invariance barrier becomes strictly more stringent under a Kronecker sketch: exact preservation on \(\range(F)\) holds if and only if \emph{both} factor sketches are injective on their respective ranges.

\begin{theorem}[Barrier of unregularized projection for factorized influence]\label{thm:unregularized-factorized-projection}
    Let \(F=A\otimes E\succeq 0\) and \(P=P_A\otimes P_E\) as above. Then \(\widetilde{\tau}_0(g,g^{\prime})=\tau_0(g,g^{\prime})\) for all \(g,g^{\prime}\in\range(F)\) if and only if \(P_A\) is injective on \(\range(A)\) and \(P_E\) is injective on \(\range(E)\). In particular, this necessitates \(m_A\geq \rank(A)\) and \(m_E\geq \rank(E)\), hence \(m=m_A m_E\geq \rank(A)\rank(E)=\rank(F)\).
\end{theorem}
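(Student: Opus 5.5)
The plan is to reduce the statement to \Cref{thm:unregularized-projection} applied to the pair \((F,P)\) with \(F=A\otimes E\) and \(P=P_A\otimes P_E\). That theorem holds for \textbf{any} matrix \(P\), so it already gives: \(\widetilde{\tau}_0=\tau_0\) on \(\range(F)\) if and only if \(P\) is injective on \(\range(F)\), i.e.\ \(\rank(PU)=\rank(F)\), where \(U\) is an orthonormal basis of \(\range(F)\). What remains is to show that \(P_A\otimes P_E\) is injective on \(\range(A\otimes E)\) exactly when \(P_A\) is injective on \(\range(A)\) and \(P_E\) is injective on \(\range(E)\).

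First I would record the Kronecker structure of the range. Take compact eigendecompositions \(A=U_A\Lambda_A U_A^{\top}\) and \(E=U_E\Lambda_E U_E^{\top}\). Then
\[
A\otimes E=(U_A\otimes U_E)(\Lambda_A\otimes\Lambda_E)(U_A\otimes U_E)^{\top},
\]
where \(U_A\otimes U_E\) has orthonormal columns and \(\Lambda_A\otimes\Lambda_E\) is positive definite. Hence we may take \(U=U_A\otimes U_E\), with \(\range(F)=\range(A)\otimes\range(E)\) and \(\rank(F)=\rank(A)\rank(E)\). By the mixed-product identity,
\[
PU=(P_AU_A)\otimes(P_EU_E).
\]

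The key step is the rank identity \(\rank(X\otimes Y)=\rank(X)\rank(Y)\), which I would justify through singular value decompositions: the singular values of \(X\otimes Y\) are the pairwise products of those of \(X\) and \(Y\). Applying it with \(X=P_AU_A\) (which has \(r_A=\rank(A)\) columns) and \(Y=P_EU_E\) (which has \(r_E=\rank(E)\) columns) gives
\[
\rank(PU)=\rank(P_AU_A)\,\rank(P_EU_E).
\]
Since each factor is at most its column count, the product equals \(r_Ar_E\) if and only if both factors are full column rank. Here I need \(r_A,r_E\ge 1\); the degenerate case \(F=0\) makes the statement trivial, so I would assume it away. Full column rank of \(P_AU_A\) is precisely injectivity of \(P_A\) on \(\range(A)\), and likewise for \(P_EU_E\) and \(P_E\), which gives the equivalence. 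The step needing the most care is only this rank-product argument in the direction where one factor is rank deficient; it is immediate once the singular value structure is in hand.

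Finally, injectivity of \(P_A\) on the \(r_A\)-dimensional space \(\range(A)\) forces \(m_A\ge r_A\), since \(P_AU_A\in\mathbb{R}^{m_A\times r_A}\) has rank at most \(m_A\). Similarly \(m_E\ge r_E\), and multiplying gives \(m=m_Am_E\ge \rank(A)\rank(E)=\rank(F)\). The negative part also carries over: if either factor fails to be injective, \Cref{thm:unregularized-projection} rules out any multiplicative approximation.
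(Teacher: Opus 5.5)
Your proof is correct and follows the paper's route: reduce to \Cref{thm:unregularized-projection}, take \(U=U_A\otimes U_E\), use the mixed-product identity \(PU=(P_AU_A)\otimes(P_EU_E)\), and conclude via \(\rank(X\otimes Y)=\rank(X)\rank(Y)\). One small correction: the degenerate case is not trivial but is exactly where the stated equivalence fails (e.g.\ \(A=0\), \(E\neq 0\) and \(P_E\) not injective on \(\range(E)\): then \(\range(F)=\{0\}\), so the left-hand condition holds trivially while the right-hand one does not), so your assumption \(r_A,r_E\geq 1\) is a genuine extra hypothesis, and the paper's proof uses it tacitly.
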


See \Cref{adxsubsec:unregularized-factorized-projection} for a proof. This motivates integrating regularization and consider
\[
    \begin{split}
        \widetilde{\tau}_\lambda(g,g^{\prime})
         & = (Pg)^{\top} (PFP^{\top} + \lambda I_m)^{-1} (Pg^{\prime})                                                                                                               \\
         & = \operatorname{vec}(P_E G P_A^{\top})^{\top} \left((P_A A P_A^{\top})\otimes (P_E E P_E^{\top}) + \lambda I_m\right)^{-1} \operatorname{vec}(P_E G^{\prime} P_A^{\top}).
    \end{split}
\]
However, factorization changes the sketching analysis: when \(P=P_A\otimes P_E\), the matrix \(P^{\top}P\) is no longer a standard i.i.d.\ sample covariance, so the covariance-type deviation driving the proof of \Cref{thm:regularized-projection-upper-bound} requires a dedicated argument. We now present the corresponding approximation guarantee for regularized projection under this factorized model. The key technical step is a factorized covariance deviation bound (\Cref{thm:factorized-covariance}), proved in \Cref{adxsec:factorized-influence}.

\begin{theorem}[Upper bound of regularized projection for factorized influence]\label{thm:regularized-factorized-projection-upper-bound}
    Let \(F=A\otimes E\succeq 0\) and \(P=P_A\otimes P_E\) be as above, with the factors \(P_A, P_E\) denote the sketching matrix defined in \Cref{thm:regularized-projection-upper-bound}. Assume \(\lambda \leq \lVert A \rVert_2\lVert E \rVert_2\), and define the rescaled regularization levels \(\lambda_E \coloneqq \lambda / \lVert E \rVert _2\) and \(\lambda _A \coloneqq \lambda / \lVert A \rVert _2\). For any \(\varepsilon,\delta\in(0,1)\), if the sketch sizes for \(P_A\) and \(P_E\) satisfy
    \[
        m_A = \Omega \left(\frac{d_{\lambda_E}(A)+\log(1/\delta)}{\varepsilon^2}\right), \qquad
        m_E = \Omega \left(\frac{d_{\lambda_A}(E)+\log(1/\delta)}{\varepsilon^2}\right),
    \]
    then with probability at least \(1-\delta\), the following holds for all \(g,g^{\prime}\in\range(F)\):
    \[
        \lvert \widetilde{\tau}_\lambda(g,g^{\prime}) - \tau_\lambda(g,g^{\prime}) \rvert
        \leq \varepsilon \sqrt{\tau_0(g,g)}\sqrt{\tau_0(g^{\prime},g^{\prime})}.
    \]
\end{theorem}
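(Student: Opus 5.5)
I will follow the proof of \Cref{thm:regularized-projection-upper-bound} verbatim up to the concentration step. Writing \(g=F^{1/2}y\), \(g^{\prime}=F^{1/2}y^{\prime}\) and \(G\coloneqq F^{1/2}P^{\top}PF^{1/2}\), the push-through identity again gives \(\widetilde{\tau}_\lambda(g,g^{\prime})=y^{\top}G(G+\lambda I)^{-1}y^{\prime}\) and \(\tau_\lambda(g,g^{\prime})=y^{\top}F(F+\lambda I)^{-1}y^{\prime}\). None of this used the i.i.d.\ row structure of \(P\). Moreover, the passage from the whitened deviation bound
\[
\lVert B^{\top}(P^{\top}P-I)B\rVert_2\leq \varepsilon/2,\qquad B=F^{1/2}(F+\lambda I)^{-1/2},
\]
to the ridge sandwich, and then to \(\lVert F(F+\lambda I)^{-1}-G(G+\lambda I)^{-1}\rVert_2\leq\varepsilon\), is purely deterministic. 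So the whole theorem reduces to proving that whitened deviation bound with probability \(1-\delta\) for the Kronecker sketch \(P=P_A\otimes P_E\). This is exactly the role of the factorized covariance deviation bound (\Cref{thm:factorized-covariance}), and I will structure the proof around establishing it.

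For the deviation bound, I will use the mixed-product structure: \(P^{\top}P=(P_A^{\top}P_A)\otimes(P_E^{\top}P_E)\). Write \(X_A\coloneqq P_A^{\top}P_A-I\) and \(X_E\coloneqq P_E^{\top}P_E-I\). Then
\[
P^{\top}P-I=X_A\otimes I+I\otimes X_E+X_A\otimes X_E .
\]
Since \(F=A\otimes E\), its eigenvectors are \(u_i\otimes v_j\) with eigenvalues \(a_ib_j\). Hence \(B^{\top}B\) is diagonal in this basis with entries \(a_ib_j/(a_ib_j+\lambda)\).

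The key domination step is to bound each term by a quantity involving a single factor. For the first term I use \(a_ib_j\leq a_i\lVert E\rVert_2\), which gives
\[
\frac{a_ib_j}{a_ib_j+\lambda}\leq \frac{a_i}{a_i+\lambda_E},
\]
by monotonicity of \(t\mapsto t/(t+\lambda)\). Rather than a crude operator inequality, I will compress: \(B^{\top}(X_A\otimes I)B=(I\otimes\cdot)\) acting through a contraction. Concretely, write \(B=(B_A\otimes I)C\), where \(B_A\coloneqq A^{1/2}(A+\lambda_E I)^{-1/2}\) and \(C\) is a diagonal matrix in the eigenbasis with entries \(\sqrt{a_ib_j(a_i+\lambda_E)/(a_i(a_ib_j+\lambda))}\leq 1\). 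This yields
\[
\lVert B^{\top}(X_A\otimes I)B\rVert_2\leq\lVert B_A^{\top}X_AB_A\rVert_2 .
\]
Applying \Cref{lma:covariance-concentration} to \(P_A\) with \(M=B_A\) bounds this by \(\varepsilon/6\), provided \(m_A\gtrsim(d_{\lambda_E}(A)+\log(1/\delta))/\varepsilon^2\). The symmetric argument with \(B_E=E^{1/2}(E+\lambda_A I)^{-1/2}\) handles the \(I\otimes X_E\) term. For the cross term I use the same factorization on both sides, \(B=(B_A\otimes B_E)C^{\prime}\). A constant rescaling of \(\lambda_A,\lambda_E\) is needed so that \(C^{\prime}\) is a contraction; this requires
\[
\frac{a_ib_j}{a_ib_j+\lambda}\lesssim\frac{a_i}{a_i+\lambda_E}\cdot\frac{b_j}{b_j+\lambda_A}.
\]
This is where the assumption \(\lambda\leq\lVert A\rVert_2\lVert E\rVert_2\) enters. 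With it, the cross term is bounded by \(\lVert B_A^{\top}X_AB_A\rVert_2\,\lVert B_E^{\top}X_EB_E\rVert_2\leq\varepsilon^2/36\). A union bound over the two factor events, each at level \(\delta/2\), finishes the argument.

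I expect the cross term to be the main obstacle. The factorization \(B=(B_A\otimes B_E)C^{\prime}\) with \(\lVert C^{\prime}\rVert_2=O(1)\) is not obviously true, since the product of the two single-factor ridge ratios can be much smaller than the joint ratio when both \(a_i\) and \(b_j\) are small. If that inequality fails, my fallback is to avoid it. I would bound the cross term as \(\lVert B^{\top}(X_A\otimes I)(I\otimes X_E)B\rVert_2\), inserting \(I=(B_A\otimes I)^{-1}(B_A\otimes I)\) only on the range where it is valid. Alternatively, I would use the cruder chain \(\lVert (I\otimes X_E)^{1/2}\rVert\)-type bounds together with \(\lVert X_A\rVert\) restricted to the whitened subspace. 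Failing that, I would accept an extra constant factor in \(m_A,m_E\).
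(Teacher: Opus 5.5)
Your plan matches the paper's proof. It has the same deterministic reduction to \(\lVert B^{\top}(P^{\top}P-I)B\rVert_2\leq\varepsilon/2\), the same three-term expansion of \(P^{\top}P-I\), and the same diagonal domination for the single-factor terms. Your \(B=(B_A\otimes I)C\), with \(C\) a diagonal contraction, is the paper's \(D^{(j)}=S^{(j)}D_A^{\max}\), done globally rather than through the commutation matrix. The one real gap is the cross term. You write down the inequality you need but neither prove it nor trust it. Your fallbacks would also lose the effective-dimension rate: \(X_E\) is indefinite, so \((I\otimes X_E)^{1/2}\) is undefined, and inverting \(B_A\) on \(\range(A)\) introduces a factor depending on \(\lambda_{\min}^{+}(A)\).

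The inequality does hold, with constant \(3\), and its proof is exactly where \(\lambda\leq\lVert A\rVert_2\lVert E\rVert_2\) is used. For \(a_ib_j>0\) it is equivalent to \((a_i+\lambda_E)(b_j+\lambda_A)\leq 3(a_ib_j+\lambda)\), and each of the last three terms below has coefficient at most \(1\) times \(\lambda\):
\[
(a_i+\lambda_E)(b_j+\lambda_A)=a_ib_j+\frac{a_i}{\lVert A\rVert_2}\,\lambda+\frac{b_j}{\lVert E\rVert_2}\,\lambda+\frac{\lambda}{\lVert A\rVert_2\lVert E\rVert_2}\,\lambda\leq a_ib_j+3\lambda .
\]
Your worry about small eigenvalues is backwards. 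When \(a_i\ll\lambda_E\) and \(b_j\ll\lambda_A\), the product of the single-factor ratios is about \(a_ib_j\lVert A\rVert_2\lVert E\rVert_2/\lambda^2\). This exceeds the joint ratio \(\approx a_ib_j/\lambda\) by the factor \(\lVert A\rVert_2\lVert E\rVert_2/\lambda\geq 1\). The inequality fails only when \(\lambda\) is large relative to \(\lVert A\rVert_2\lVert E\rVert_2\); then the product falls short of the joint ratio by a factor of order \(\lVert A\rVert_2\lVert E\rVert_2/\lambda\) across the spectrum, and that is precisely the regime the assumption excludes. No rescaling of \(\lambda_A,\lambda_E\) is needed. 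Take \(B=\sqrt{3}\,(B_A\otimes B_E)C^{\prime}\) with \(\lVert C^{\prime}\rVert_2\leq 1\). The cross term is then at most \(3\lVert B_A^{\top}X_AB_A\rVert_2\lVert B_E^{\top}X_EB_E\rVert_2\leq\varepsilon^2/12\) under your \(\varepsilon/6\) budgets, and \(\varepsilon/3+\varepsilon^2/12<\varepsilon/2\) still closes the argument.
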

\begin{proof}
    The proof follows the same template as \Cref{thm:regularized-projection-upper-bound}. Let \(B \coloneqq F^{1/2}(F+\lambda I)^{-1/2}\) and \(G \coloneqq F^{1/2}P^{\top}PF^{1/2}\), and the key step is again to control the covariance-type deviation \(\lVert B^{\top}(P^{\top}P-I)B\rVert_2\). We apply \Cref{thm:factorized-covariance} (proved in \Cref{adxsec:factorized-influence}) with parameters \(\varepsilon_0 \coloneqq \varepsilon/10\) and \(\delta_0 \coloneqq \delta/2\). Under the stated conditions on \(m_A\) and \(m_E\), this yields that with probability at least \(1-2\delta_0 = 1-\delta\),
    \[
        \lVert B^{\top}(P^{\top}P-I)B\rVert_2
        \leq 2\varepsilon_0 + 3\varepsilon_0^2
        \leq \varepsilon/2,
    \]
    where the last inequality uses \(\varepsilon\in(0,1)\). On this event, the same PSD sandwich and resolvent perturbation argument used in \Cref{lma:concentration-covariance-operator-norm} implies \(\lVert F(F+\lambda I)^{-1} - G(G+\lambda I)^{-1} \rVert_2 \leq \varepsilon\), which in turn gives the stated bilinear (and quadratic) influence error bounds.
\end{proof}

\begin{remark}
    \Cref{thm:regularized-factorized-projection-upper-bound} highlights a fundamental computational--statistical trade-off. While factorized sketches offer substantial computational and memory advantages over unfactorized ones, they incur a higher statistical cost in terms of the required sketch size. In particular, since the total sketch size is \(m=m_A m_E\), achieving an \(\varepsilon\)-approximation error requires \(m = m_A m_E = \widetilde{\Omega}\big(\varepsilon^{-4} (d_{\lambda_E}(A) d_{\lambda_A}(E))\big)\), which exhibits a worse dependence on \(\varepsilon\) (from \(\varepsilon^{-2}\) to \(\varepsilon^{-4}\)) compared to the unfactorized sketch guarantee in \Cref{thm:regularized-projection-upper-bound}. 
    Importantly, this gap is not an artifact of loose analysis, but follows from the separable nature of the factorized sketch: \(P_A\) and \(P_E\) must independently satisfy an \(\varepsilon\)-level concentration bound at its own regularization scale. Consequently, the total sketch size reflects the product of the factor-level requirements. As a result, factorized sketches are most effective in regimes where the computational and memory savings from separability outweigh the increased statistical overhead. 
\end{remark}
\section{Influence with Out-of-Range Test Gradients}\label{sec:regularized-projection-leakage}
The analysis in \Cref{sec:projection-based-influence-approximation} assumes that both arguments of the (regularized) influence bilinear form lie in \(\range(F)\). This assumption is natural for training gradients: when \(F\) is instantiated as the (empirical) Fisher information matrix, \(F = \frac{1}{n}\sum_{i=1}^n g_{i} g_{i}^{\top}\), every training gradient lies in \(\range(F)\) by construction. In practice, however, we are often interested in the influence of an unseen test point \(z'\) with respect to a training point \(z\), for which the corresponding test gradients \(g^{\prime}\) need not lie in \(\range(F)\).

We extend the above guarantees to this setting by explicitly characterizing the additional sketch-induced error arising from the component of \(g^{\prime}\) orthogonal to \(\range(F)\). 

\subsection{Leakage of Projection}\label{subsec:projection-leakage}
To make the source of this additional term explicit, we decompose \(g^{\prime} = g^{\prime}_{\parallel} + g^{\prime}_{\perp}\), where \(g^{\prime}_{\parallel} \in \range(F)\) and \(g^{\prime}_{\perp} \in \ker(F)\), such that the decomposition is orthogonal in the Euclidean inner product. Using linearity of \(\tau_\lambda(\cdot,\cdot)\) and \(\widetilde{\tau}_\lambda(\cdot,\cdot)\) in their second argument, we have \(\tau_\lambda(g,g^{\prime}) = \tau_\lambda(g,g^{\prime}_{\parallel}) + \tau_\lambda(g,g^{\prime}_{\perp})\) and \(\widetilde{\tau}_\lambda(g,g^{\prime}) = \widetilde{\tau}_\lambda(g,g^{\prime}_{\parallel}) + \widetilde{\tau}_\lambda(g,g^{\prime}_{\perp})\). Consequently,
\[
    \lvert \widetilde{\tau}_\lambda(g,g^{\prime}) - \tau_\lambda(g,g^{\prime}) \rvert
    = \left\lvert \big(\widetilde{\tau}_\lambda(g,g^{\prime}_{\parallel}) - \tau_\lambda(g,g^{\prime}_{\parallel}) \big) + \widetilde{\tau}_\lambda(g,g^{\prime}_{\perp}) - \tau_\lambda(g,g^{\prime}_{\perp}) \right\rvert .
\]
Observe that the true (regularized) influence does not couple \(\range(F)\) and \(\ker(F)\), i.e., \(\tau_\lambda(g,g^{\prime}_{\perp}) = g^{\top} (F+\lambda I)^{-1} g^{\prime}_{\perp} = 0\) for all \(\lambda \geq 0\): indeed, \(F\) and \((F+\lambda I)^{-1}\) share the same eigenbasis, and since \(g \in \range(F)\) and \(g^{\prime}_{\perp} \in \ker(F)\), hence \((F+\lambda I)^{-1} g\) and \(g^{\prime}_{\perp}\) lie in orthogonal subspaces. Hence,
\[
    \lvert \widetilde{\tau}_\lambda(g,g^{\prime}) - \tau_\lambda(g,g^{\prime}) \rvert
    \leq \lvert \widetilde{\tau}_\lambda(g,g^{\prime}_{\parallel}) - \tau_\lambda(g,g^{\prime}_{\parallel}) \rvert + \lvert \widetilde{\tau}_\lambda(g,g^{\prime}_{\perp}) \rvert .
\]
The first term can be bounded via \Cref{thm:regularized-projection-upper-bound}; on the other hand, the remaining term is a purely sketch-induced artifact: the sketch can introduce a nonzero \emph{leakage term} \(\widetilde{\tau}_\lambda(g,g^{\prime}_{\perp})\) due to mixing between \(\range(F)\) and \(\ker(F)\) under \(P^{\top}P\). We now present a general bound on the leakage:

\begin{theorem}\label{thm:projection-leakage}
    Let \(\{g^{\prime}_{j}\}_{j=1}^k \subset \mathbb{R}^d\), and for each \(j\) let \(g^{\prime}_{j,\perp} \coloneqq \Pi_{\ker(F)} g^{\prime}_j\) denote the orthogonal projection of \(g_j^{\prime}\) onto \(\ker(F)\). Let \(k^{\prime} \coloneqq \dim\bigl(g^{\prime}_{j, \perp}\}_{j=1}^k)\bigr)\).
    For any \(\varepsilon, \delta \in (0,1)\), if
    \[
        m
        = \Omega \Bigg(\frac{r + \min\left\{\log(k/\delta), k^{\prime} + \log(1/\delta)\right\}}{\varepsilon^2}\Bigg),
    \]
    then with probability at least \(1-\delta\), the following holds for all \(j \in \{1, \dots, k\}\):
    \begin{itemize}[leftmargin=*]
        \item \textbf{Unregularized}: \(\lvert \widetilde{\tau}_0(g, g^{\prime}_{j,\perp}) \rvert \leq \varepsilon \lVert g \rVert _2\lVert g^{\prime}_{j,\perp}\rVert _2 / \lambda_{\min}^{+}(F)\), where \(\lambda_{\min}^{+}(F)\) denotes the smallest non-zero eigenvalue of \(F\).
        \item \textbf{Regularized}: \(\lvert \widetilde{\tau}_{\lambda}(g, g^{\prime}_{j,\perp}) \rvert \leq \varepsilon \lVert g \rVert _2\lVert g^{\prime}_{j,\perp}\rVert _2 \big(\frac{1}{\lambda} + \frac{2 \lVert F \rVert _2}{\lambda^2}\big)\) for any \(\lambda > 0\).
    \end{itemize}
\end{theorem}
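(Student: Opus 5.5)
Here \(g\in\range(F)\). Write \(F=U\Lambda U^{\top}\) with \(U\in\mathbb{R}^{d\times r}\), and let \(u_j\coloneqq g^{\prime}_{j,\perp}/\lVert g^{\prime}_{j,\perp}\rVert_2\in\ker(F)\). The approach has two parts. First, reduce the leakage to a single cross term \(U^{\top}P^{\top}Pu_j\). Second, control that cross term uniformly by an approximate-isometry statement on \(\range(U)\oplus\Span(u_j)\).

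\textbf{Reduction.} Set \(S\coloneqq PU\in\mathbb{R}^{m\times r}\) and write \(g=Ua\). Then \(PFP^{\top}=S\Lambda S^{\top}\), and
\[
    \widetilde{\tau}_\lambda(g,g^{\prime}_{j,\perp})=a^{\top}S^{\top}(S\Lambda S^{\top}+\lambda I)^{-1}Pu_j\,\lVert g^{\prime}_{j,\perp}\rVert_2 .
\]
\emph{Unregularized case.} On the event that \(S\) is injective (the theorem's \(m\gtrsim r\) gives \(\lVert S^{\top}S-I_r\rVert_2\le\varepsilon\)), we have \(S^{\top}(S\Lambda S^{\top})^{+}=(S^{\top}S)^{-1}\Lambda^{-1}(S^{\top}S)^{-1}S^{\top}\). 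This follows because \(S\Lambda S^{\top}\) has range \(\range(S)\) and its pseudoinverse is \(S(S^{\top}S)^{-1}\Lambda^{-1}(S^{\top}S)^{-1}S^{\top}\). Hence the leakage equals \(a^{\top}(S^{\top}S)^{-1}\Lambda^{-1}(S^{\top}S)^{-1}\,(S^{\top}Pu_j)\,\lVert g^{\prime}_{j,\perp}\rVert_2\). Using \(\lVert (S^{\top}S)^{-1}\rVert_2\le 1/(1-\varepsilon)\), \(\lVert\Lambda^{-1}\rVert_2=1/\lambda_{\min}^+(F)\) and \(\lVert a\rVert_2=\lVert g\rVert_2\), this is at most \(O(1)\cdot\lVert g\rVert_2\lVert g'_{j,\perp}\rVert_2\,\lVert U^{\top}P^{\top}Pu_j\rVert_2/\lambda^+_{\min}(F)\).

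\emph{Regularized case.} Apply push-through, \(S^{\top}(S\Lambda S^{\top}+\lambda I)^{-1}=(S^{\top}S\Lambda+\lambda I)^{-1}S^{\top}\), which gives leakage \(a^{\top}(S^{\top}S\Lambda+\lambda I)^{-1}(U^{\top}P^{\top}Pu_j)\). To bound the resolvent, write \(S^{\top}S=I+\Delta\) with \(\lVert\Delta\rVert_2\le\varepsilon\), so that \(S^{\top}S\Lambda+\lambda I=(\Lambda+\lambda I)+\Delta\Lambda\). A Neumann or resolvent-identity expansion then gives \(\lVert(\cdot)^{-1}\rVert_2\le\frac1\lambda+\frac1\lambda\lVert\Delta\Lambda\rVert_2\lVert(\cdot)^{-1}\rVert_2\). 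An alternative is to write \((S^{\top}S\Lambda+\lambda I)^{-1}=\frac1\lambda\bigl(I-S^{\top}S\Lambda(S^{\top}S\Lambda+\lambda I)^{-1}\bigr)\) and note that \(\Lambda^{1/2}S^{\top}S\Lambda^{1/2}\) is PSD. Either route yields the bound \(\frac1\lambda+\frac{2\lVert F\rVert_2}{\lambda^2}\), where the factor \(2\) absorbs \(1+\varepsilon\). So in both cases the task reduces to showing \(\lVert U^{\top}P^{\top}Pu_j\rVert_2\le c\varepsilon\) for all \(j\) simultaneously.

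\textbf{Concentration for the cross term.} Since \(u_j\perp\range(U)\), the vector \(U^{\top}u_j\) vanishes, so \(U^{\top}P^{\top}Pu_j=U^{\top}(P^{\top}P-I)u_j\). Let \(V_j\coloneqq[U,u_j]\in\mathbb{R}^{d\times(r+1)}\), which has orthonormal columns. Then \(\lVert U^{\top}(P^{\top}P-I)u_j\rVert_2\le\lVert V_j^{\top}(P^{\top}P-I)V_j\rVert_2\). I will invoke \Cref{lma:covariance-concentration} with \(M=V_j\); its complexity parameter is \(\lVert V_j\rVert_F^2=r+1\). With failure probability \(\delta/k\) per index, this needs \(m=\Omega((r+\log(k/\delta))/\varepsilon^2)\), and a union bound over \(j\) finishes that branch. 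For the other branch, apply the lemma once with \(M=[U,Q]\), where \(Q\) is an orthonormal basis of \(\Span\{g'_{j,\perp}\}\). This has dimension \(r+k'\), requires \(m=\Omega((r+k'+\log(1/\delta))/\varepsilon^2)\), and covers all \(j\) simultaneously without a union bound. The same event also gives \(\lVert U^{\top}(P^{\top}P-I)U\rVert_2\le\varepsilon\), which is the estimate on \(\lVert\Delta\rVert_2\) used in the reduction, so one event suffices. Taking the better of the two branches gives the \(\min\) in the sketch size.

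\textbf{Main obstacle.} The delicate step is the regularized resolvent bound with exactly the constant \(\frac1\lambda+\frac{2\lVert F\rVert_2}{\lambda^2}\). The matrix \(S^{\top}S\Lambda\) is not symmetric, so a naive inversion bound fails. I would handle this with the identity \((S^{\top}S\Lambda+\lambda I)^{-1}=\frac1\lambda\bigl[I-S^{\top}(S\Lambda S^{\top}+\lambda I)^{-1}S\Lambda\bigr]\), bounding the second piece by \(\frac1\lambda\cdot\lVert S\rVert_2^2\lVert\Lambda\rVert_2/\lambda\le(1+\varepsilon)\lVert F\rVert_2/\lambda^2\), and then absorbing constants into \(\varepsilon\).
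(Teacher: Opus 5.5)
Your proposal is correct and has the same architecture as the paper's proof. You make a deterministic reduction to the two conditions \(\lVert U^{\top}(P^{\top}P-I)U\rVert_2\le\varepsilon\) and \(\lVert U^{\top}(P^{\top}P-I)g^{\prime}_{j,\perp}\rVert_2\le\varepsilon\lVert g^{\prime}_{j,\perp}\rVert_2\), which is the paper's \Cref{lma:two-conc-suffice}. You then apply \Cref{lma:covariance-concentration} on augmented subspaces, and your subspace branch is exactly \Cref{prop:multiple-k-prime}.

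The differences are local:
\begin{itemize}
\item In the regularized case, your identity \((S^{\top}S\Lambda+\lambda I)^{-1}=\frac1\lambda\bigl[I-S^{\top}(S\Lambda S^{\top}+\lambda I)^{-1}S\Lambda\bigr]\) is the paper's Woodbury split \(T^{(1)}_{\lambda,j}-T^{(2)}_{\lambda,j}\) written in \(r\)-dimensional coordinates. It yields the identical bound \(\varepsilon\lVert g\rVert_2\lVert g^{\prime}_{j,\perp}\rVert_2\bigl(\frac1\lambda+\frac{(1+\varepsilon)\lVert F\rVert_2}{\lambda^2}\bigr)\).
\item The ``Neumann or resolvent-identity'' route you list first does not close on its own, so your sentence ``either route yields the bound'' is wrong for that route. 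With \(X\coloneqq S^{\top}S\Lambda+\lambda I\), the inequality \(\lVert X^{-1}\rVert_2\le\frac1\lambda+\frac{\varepsilon\lVert F\rVert_2}{\lambda}\lVert X^{-1}\rVert_2\) rearranges to a bound only when \(\varepsilon\lVert F\rVert_2<\lambda\). Rely on the identity instead.
\item In the unregularized case, your explicit pseudoinverse formula replaces the paper's projector (\(\Pi_{PU}\)) argument. It gives a slightly sharper constant, \(\varepsilon/(1-\varepsilon)\) versus \(\varepsilon(1+\varepsilon)/(1-\varepsilon)^2\). Note the correct formula is \(S^{\top}(S\Lambda S^{\top})^{+}=\Lambda^{-1}(S^{\top}S)^{-1}S^{\top}\); your version has an extra leading \((S^{\top}S)^{-1}\), which is harmless for the bound.
\item Your union-bound branch simply applies \Cref{lma:covariance-concentration} to \([U,u_j]\) at confidence \(\delta/k\) and takes a union bound. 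The paper's \Cref{prop:multiple-k} instead rederives a fixed-vector tail bound via polarization and Talagrand's comparison inequality. Your route is shorter and reaches the same \(m=\Omega\bigl((r+\log(k/\delta))/\varepsilon^2\bigr)\), because the lemma already depends on the failure probability only through \(\log(1/\delta)\).
\end{itemize}
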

\begin{proofsketch}
    The proof is organized around a deterministic reduction: \Cref{lma:two-conc-suffice} (in \Cref{adxsec:projection-leakage}) shows that both the unregularized and regularized leakage bounds follow as soon as two concentration conditions hold for the sketch \(P\):
    \begin{enumerate*}[label=(\roman*)]
        \item an operator-norm bound on \(\range(F)\), \(\lVert U^{\top}(P^{\top}P-I)U\rVert_2\leq\varepsilon\) for an orthonormal basis \(U\) of \(\range(F)\), and
        \item a cross-term bound between \(\range(F)\) and the kernel direction(s), \(\lVert U^{\top}(P^{\top}P-I)g^{\prime}\rVert_2\leq\varepsilon\lVert g^{\prime}\rVert_2\).
    \end{enumerate*}
    For a single test gradient \(g_{\perp}^{\prime}\), both conditions follow from applying \Cref{lma:covariance-concentration} to the \((r+1)\)-dimensional subspace \(\Span(\range(F)\cup\{g^{\prime}\})\), which yields the claimed \(m=\Omega((r+\log(1/\delta))/\varepsilon^2)\) scaling. 
    To obtain uniform control over multiple test gradients, we use two complementary arguments: a \emph{subspace argument}, which applies the same concentration bound to \(\Span(\range(F)\cup\{g^{\prime}_{j,\perp}\}_{j=1}^k)\) and yields the dependence on \(k^{\prime}=\dim\Span(\{g^{\prime}_{j,\perp}\})\) (\Cref{prop:multiple-k-prime}); or a \emph{union-bound argument}, which establishes a fixed-\(g^{\prime}\) tail bound and unions over \(k\), yielding the \(O(\log k)\) dependence (\Cref{prop:multiple-k}).
\end{proofsketch}

\subsection{Leakage of Factorized Influence}\label{subsec:factorized-influence-leakage}
\Cref{thm:projection-leakage} is stated for oblivious sketches with i.i.d.\ rows. We now extend and prove an analogous leakage guarantee for factorized sketches \(P=P_A\otimes P_E\) when \(F\) admits a Kronecker factorization.

\begin{theorem}\label{thm:factorized-influence-leakage}
    Let \(A, E \succeq 0\) and \(F \coloneqq A \otimes E\), with \(P = P_A \otimes P_E\) be the same setting as \Cref{thm:regularized-factorized-projection-upper-bound}, and let \(r_A\coloneqq\rank(A)\), \(r_E\coloneqq\rank(E)\), and \(r\coloneqq\rank(F)=r_A r_E\). Let \(\{g_j^{\prime}\}_{j=1}^k\) be test gradients of the form \(g_j^{\prime}=a_j^{\prime}\otimes e_j^{\prime}\), and write \(a_j^{\prime}=a^{\prime}_{j,\parallel}+a^{\prime}_{j,\perp}\) with \(a^{\prime}_{j,\parallel}\in\range(A)\) and \(a^{\prime}_{j,\perp}\perp\range(A)\), and similarly \(e_j^{\prime}=e^{\prime}_{j,\parallel}+e^{\prime}_{j,\perp}\). Define \(k_A \coloneqq \sum_{j=1}^k \mathbbm{1}(a^{\prime}_{j,\perp}\neq 0)\), \(k_E \coloneqq \sum_{j=1}^k \mathbbm{1}(e^{\prime}_{j,\perp}\neq 0)\), and \(k_A^{\prime} \coloneqq \dim\bigl(\Span(\{a^{\prime}_{j,\perp}\}_{j=1}^k)\bigr)\), \(k_E^{\prime} \coloneqq \dim\bigl(\Span(\{e^{\prime}_{j,\perp}\}_{j=1}^k)\bigr)\). For any \(\varepsilon,\delta\in(0,1)\), if
    \[
        m_A = \Omega\Bigg(\frac{r_A+\min\{\log(\frac{k_A}{\delta}), k_A^{\prime}+\log(\frac{1}{\delta})\}}{\varepsilon^2}\Bigg),
        m_E = \Omega\Bigg(\frac{r_E+\min\{\log(\frac{k_E}{\delta}), k_E^{\prime}+\log(\frac{1}{\delta})\}}{\varepsilon^2}\Bigg),
    \]
    then with probability at least \(1-\delta\), the following bounds hold simultaneously for all \(j\in\{1,\dots,k\}\):
    \begin{itemize}[leftmargin=*]
        \item \textbf{Unregularized:} \(\lvert \widetilde{\tau}_0(g, g^{\prime}_{j,\perp}) \rvert \leq \varepsilon  \lVert g \rVert _2\lVert g^{\prime}_{j,\perp}\rVert _2 / \lambda_{\min}^{+}(F)\).
        \item \textbf{Regularized:} \(\lvert \widetilde{\tau}_{\lambda}(g, g^{\prime}_{j,\perp}) \rvert \leq \varepsilon  \lVert g \rVert _2\lVert g^{\prime}_{j,\perp}\rVert _2 \bigl(\frac{1}{\lambda} + \frac{2\lVert F \rVert _2}{\lambda^2}\bigr)\) for any \(\lambda>0\),
    \end{itemize}
\end{theorem}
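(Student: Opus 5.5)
The plan is to reduce to the deterministic lemma used for \Cref{thm:projection-leakage}, namely \Cref{lma:two-conc-suffice}. It says both leakage bounds follow once two conditions hold with accuracy $O(\varepsilon)$. The first is an operator-norm bound $\lVert U^{\top}(P^{\top}P-I)U\rVert_2$ on $\range(F)$. The second is a cross-term bound $\lVert U^{\top}(P^{\top}P-I)g'_{j,\perp}\rVert_2\lesssim\varepsilon\lVert g'_{j,\perp}\rVert_2$ for each $j$. The task is therefore to verify these two conditions for the Kronecker sketch $P=P_A\otimes P_E$, using only i.i.d.-row concentration for the factors.

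\textbf{Factor-level events.} Since $F=A\otimes E$, we have $\range(F)=\range(A)\otimes\range(E)$ with orthonormal basis $U=U_A\otimes U_E$. Moreover
\[
P^{\top}P-I=(P_A^{\top}P_A)\otimes(P_E^{\top}P_E)-I_A\otimes I_E .
\]
Write $\Delta_A\coloneqq P_A^{\top}P_A-I$ and $\Delta_E\coloneqq P_E^{\top}P_E-I$. Then
\[
P^{\top}P-I=\Delta_A\otimes I+I\otimes\Delta_E+\Delta_A\otimes\Delta_E .
\]
On the $A$ side, apply \Cref{lma:covariance-concentration} to the subspace $\Span(\range(A)\cup\{a'_{j,\perp}\}_j)$, which has dimension $r_A+k'_A$. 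Alternatively, handle the range part with dimension $r_A$ and union-bound the fixed-vector cross terms over the $k_A$ indices with $a'_{j,\perp}\ne0$, as in \Cref{prop:multiple-k,prop:multiple-k-prime}. This gives, with probability $1-\delta/2$:
\[
\lVert U_A^{\top}\Delta_AU_A\rVert_2\le\varepsilon_0,\qquad \lVert U_A^{\top}\Delta_A a'_{j,\perp}\rVert_2\le\varepsilon_0\lVert a'_{j,\perp}\rVert_2,\qquad \lVert P_A a\rVert_2\le(1+\varepsilon_0)\lVert a\rVert_2
\]
for $a$ in the relevant span. The $E$ side is handled identically, and the two events are combined by a union bound.

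\textbf{Assembling the two conditions.} For condition (i), $U^{\top}(P^{\top}P-I)U=(I+D_A)\otimes(I+D_E)-I$ with $D_A=U_A^{\top}\Delta_AU_A$, and similarly for $D_E$. Its norm is therefore at most $2\varepsilon_0+\varepsilon_0^2$. This is the same product estimate as in \Cref{thm:factorized-covariance} and \Cref{thm:regularized-factorized-projection-upper-bound}.

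For condition (ii), note that $g'_{j,\perp}=\Pi_{\ker F}(a'_j\otimes e'_j)$. Since $\ker(F)^{\perp}=\range(A)\otimes\range(E)$, this expands as
\[
g'_{j,\perp}=a'_{j,\parallel}\otimes e'_{j,\perp}+a'_{j,\perp}\otimes e'_{j,\parallel}+a'_{j,\perp}\otimes e'_{j,\perp}.
\]
The three pieces are mutually orthogonal, so it suffices to bound the cross term on each piece.

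Consider a typical piece $a'_{j,\parallel}\otimes e'_{j,\perp}$. Here
\[
U^{\top}(P^{\top}P-I)(a\otimes e)=\bigl(U_A^{\top}P_A^{\top}P_Aa\bigr)\otimes\bigl(U_E^{\top}P_E^{\top}P_Ee\bigr),
\]
because $U_E^{\top}e=0$ kills the identity part. The second factor equals $U_E^{\top}\Delta_Ee$, which has norm at most $\varepsilon_0\lVert e\rVert$. The first factor has norm at most $(1+\varepsilon_0)^2\lVert a\rVert$. The doubly-perpendicular piece gives a product of two $\varepsilon_0$ factors, which is even smaller. Summing the three orthogonal pieces via Cauchy--Schwarz gives a bound of $O(\varepsilon_0)\lVert g'_{j,\perp}\rVert_2$. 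Choosing $\varepsilon_0=c\varepsilon$ and invoking \Cref{lma:two-conc-suffice} then yields both bullets. The indicator counts $k_A,k_E$ enter only because pieces with $a'_{j,\perp}=0$ require no $A$-side cross control.

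\textbf{Main obstacle.} I expect the hardest step to be the second one: controlling the cross-term for non-i.i.d. Kronecker rows. It needs care in two places. First, the $\lVert P_A a\rVert$ bound must hold uniformly over $a\in\range(A)$ for arbitrary $g$, so the range part must be a subspace embedding rather than a union bound. Second, the union-bound variant has to charge each $j$ only to the factor where it is actually out of range.
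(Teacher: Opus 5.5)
Your proposal is correct and follows the paper's proof essentially step for step. You reduce to \Cref{lma:two-conc-suffice}, get stability on \(\range(F)\) from the factor operator-norm events via the expansion \(\Delta_A\otimes I+I\otimes\Delta_E+\Delta_A\otimes\Delta_E\) (as in \Cref{lma:kfac-stability}), and control the cross-term through factor-level primitives, obtained either by a union bound over the nonzero \(a'_{j,\perp}, e'_{j,\perp}\) or by a subspace argument on their spans, with the in-range primitives following deterministically from the operator-norm event (as in \Cref{prop:kfac-primitive-multi}). Your grouping of each kernel piece as \((U_A^{\top}P_A^{\top}P_A a)\otimes(U_E^{\top}\Delta_E e)\) is just a compact form of the five surviving terms in \Cref{lma:kfac-cross-term}, and after Cauchy--Schwarz it gives the same \(O(\varepsilon_0)\lVert g'_{j,\perp}\rVert_2\) bound.
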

\begin{proofsketch}
    The factorized theorem is proved by following the same high-level template as \Cref{thm:projection-leakage}: we first reduce the leakage bound to the two concentration conditions in \Cref{lma:two-conc-suffice} (stability on \(\range(F)\) and a cross-term bound between \(\range(F)\) and \(\ker(F)\)). For a Kronecker sketch \(P=P_A\otimes P_E\), the stability condition on \(\range(F)=\range(A)\otimes\range(E)\) is obtained by controlling the factor-level subspace deviations \(\lVert U_A^{\top}(P_A^{\top}P_A-I)U_A\rVert_2\) and \(\lVert U_E^{\top}(P_E^{\top}P_E-I)U_E\rVert_2\) (with \(U_A,U_E\) bases of \(\range(A),\range(E)\)). For the cross-term condition, we expand \(P^{\top}P-I\) into factor deviations and use \Cref{lma:kfac-cross-term} (in \Cref{adxsec:factorized-influence-leakage}) to reduce \(\lVert U^{\top}(P^{\top}P-I)g^{\prime}_{\perp}\rVert_2\) to a small collection of factor-level ``primitive'' quantities such as \(\lVert U_A^{\top}(P_A^{\top}P_A-I)(\cdot)\rVert_2\) and \(\lVert U_E^{\top}(P_E^{\top}P_E-I)(\cdot)\rVert_2\). Finally, as in the proof of~\Cref{thm:projection-leakage}, these primitives are controlled via a \emph{union-bound argument} (yielding the \(O(\log k_\cdot)\) terms) or a \emph{subspace argument} (yielding the \(k_\cdot^{\prime}\) terms). Plugging these bounds into \Cref{lma:two-conc-suffice} yields the stated leakage guarantees; full details are in \Cref{adxsec:factorized-influence-leakage}.
\end{proofsketch}

\begin{remark}
Which argument is tighter depends on the geometry of the test gradients. When \(\{g_j^{\prime}\}\) are strongly correlated or effectively low-dimensional, one can have \(k^{\prime}\ll k\), in which case the subspace argument is preferable. In contrast, in high ambient dimension, moderately many generic test gradients are typically in general position, so \(k^{\prime}\) rapidly grows to \(\min\{k,d\}\) and in particular satisfies \(k^{\prime}\approx k\) once \(k\ll d\). In this common regime, the union-bound argument yields the more practical scaling in \(k\), requiring only an additional \(O(\log k)\) sketch size to ensure uniform control.
\end{remark}
\section{Experiment and Discussion}\label{sec:experiment-discussion}
We empirically illustrate several implications of our theory. Throughout, we consider \(F\) to be the empirical Fisher, and \(P\) to be the sparse JL transform~\citep{kane2014sparser}, and we always report the results across \(5\) independent runs with different sampled \(P\). Following the data attribution library \texttt{dattri}~\citep{deng2024dattri},\footnote{Our code is publicly available at \url{https://github.com/sleepymalc/Projection-IF}.} we consider three dataset--model pairs:
\begin{enumerate*}[label=\arabic*.)]
    \item MNIST-10 + LR,
    \item MNIST-10 + MLP, and
    \item CIFAR-2 + ResNet9.
\end{enumerate*}
Each setting uses \(5000\) training examples and \(500\) held-out test examples, so the empirical Fisher has rank at most \(r\leq 5000\).

\begin{wrapfigure}[10]{r}{0.7\linewidth}
    \centering
    \vspace{-1\intextsep}
    \includegraphics[width=\linewidth]{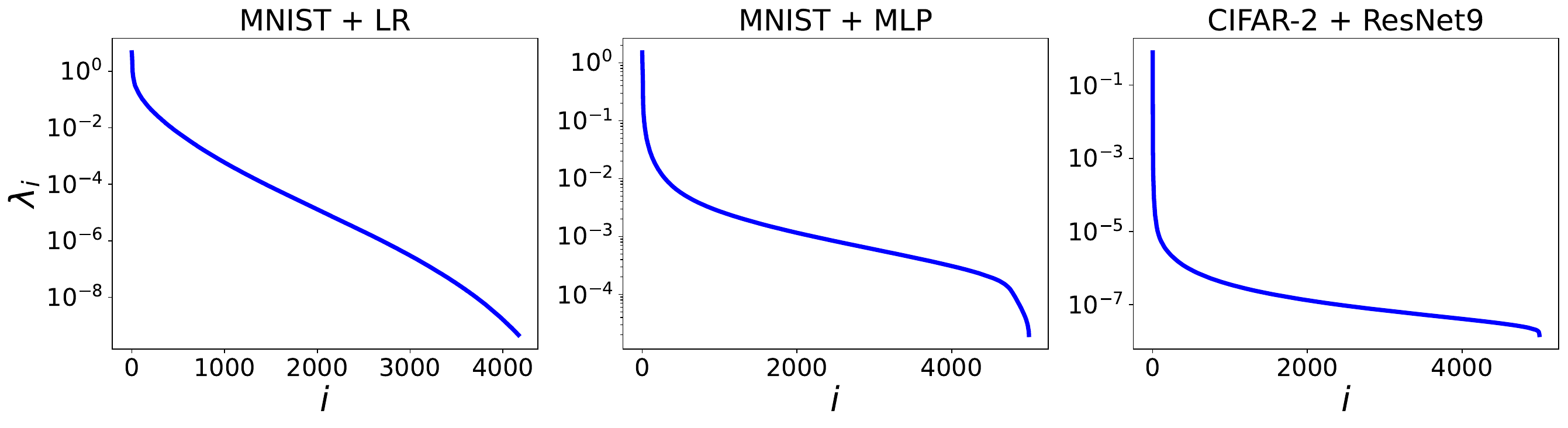}%
    \caption{Ordered spectrum \(\lambda_i\) of the empirical Fisher \(F\).}
    \label{fig:spectrum}
\end{wrapfigure}
Firstly, we show the effective dimension \(d_{\lambda}(F) = \sum_{i=1}^{r} \lambda_i /(\lambda_i + \lambda)\) can be much smaller than \(r = \rank(F)\). Specifically, \Cref{fig:spectrum} plots the ordered eigenvalues \(\{\lambda_i\}_{i=1}^r\) of \(F\). The spectrum decays quickly, hence for moderate \(\lambda\), the terms \(\lambda_i /(\lambda_i+\lambda)\) become small for large \(i\), and consequently \(d_{\lambda}(F)\) can be far smaller than \(r\).

We next test the predictions of \Cref{thm:regularized-projection-upper-bound,thm:projection-leakage} by directly measuring the approximation error. Given \(\lambda\geq 0\), we consider \(\varepsilon_{\lambda}(g, g^{\prime}) = \lvert \widetilde{\tau}_{\lambda}(g, g^{\prime}) - \tau_{\lambda}(g, g^{\prime}) \rvert / \sqrt{\tau_0(g, g)}\sqrt{\tau_0(g^{\prime}, g^{\prime})}\) for gradients \(g\) and \(g^{\prime}\), which is the normalized error considered in \Cref{thm:regularized-projection-upper-bound}. 

\begin{wrapfigure}[11]{l}{0.75\linewidth}
    \centering
    \vspace{-1\intextsep}
    \includegraphics[width=\linewidth]{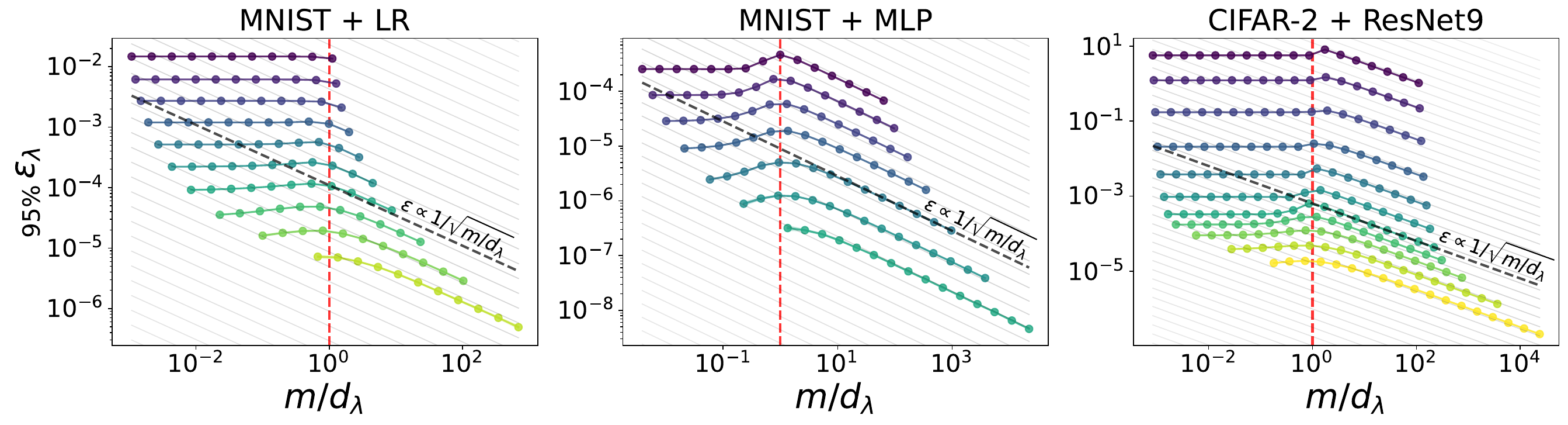}
    \includegraphics[width=0.4\linewidth]{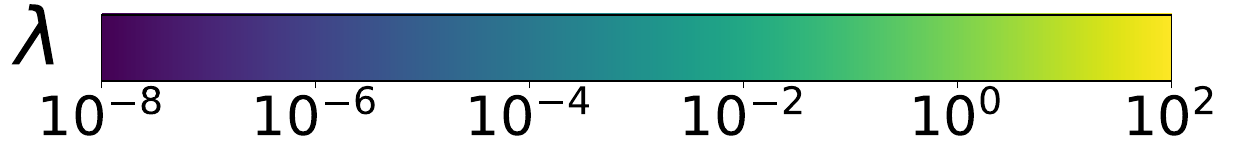}%
    \caption{Approximation error versus normalized sketch size.}
    \label{fig:spectrum-bounds-error}
\end{wrapfigure}

\Cref{fig:spectrum-bounds-error} supports the scaling predicted by our theory. Each curve plots the \(95^{\text{th}}\) percentile of \(\varepsilon_{\lambda}(g,g^{\prime})\) against the normalized sketch size \(m/d_{\lambda}(F)\). Once \(m\) is on the order of \(d_{\lambda}(F)\), the error begins to decay in the manner suggested by \Cref{thm:regularized-projection-upper-bound}. Empirically, this indicates that
\begin{enumerate*}[label=(\roman*)]
    \item the hidden constant in the sketch-size requirement is modest and
    \item the additional leakage effect from \Cref{thm:projection-leakage} decreases quickly as \(m\) grows.
\end{enumerate*}

\paragraph{Faithfulness--Utility Tradeoff.}
A small approximation error does not necessarily imply strong downstream performance. In particular, optimizing \(\varepsilon_{\lambda}\) to be very small typically favors larger \(\lambda\) and larger sketch size \(m\), because stronger regularization makes the influence computation less sensitive to sketching. As a result, the \(\lambda\) that minimizes \(\varepsilon_{\lambda}\) need not be the \(\lambda\) that maximizes downstream utility, especially when the curvature information in \(F\) is important for the task. We illustrate this using LDS~\citep{park2023trak}, a standard metric in data attribution. \Cref{fig:faithfulness-utility} reports LDS over a range of sketch sizes and regularization strengths, and as we expect, the best-performing \(\lambda^{\ast}\) is typically intermediate.

\begin{figure}[htpb]
    \centering
    \includegraphics[width=\linewidth]{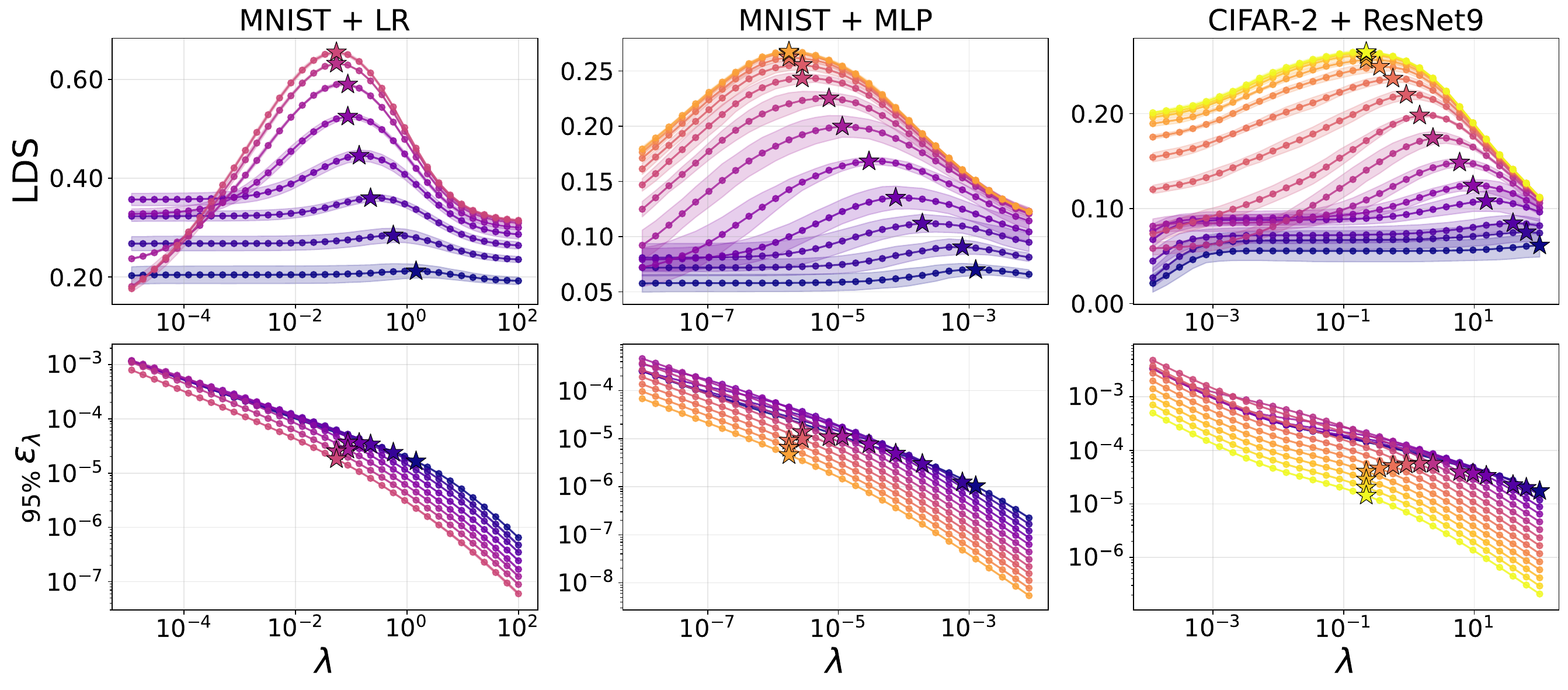}
    \includegraphics[width=0.23\linewidth]{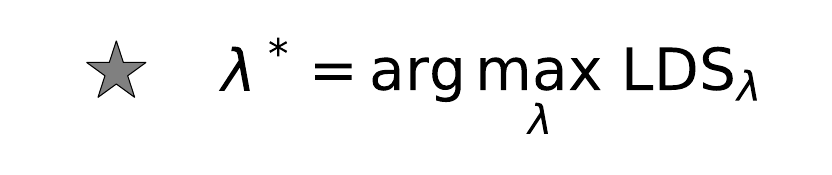}
    \includegraphics[width=0.3\linewidth]{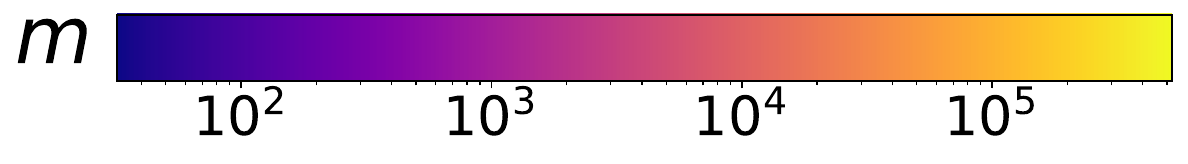}%
    \caption{Approximation error and LDS versus \(\lambda\).}
    \label{fig:faithfulness-utility}
\end{figure}

\begin{wrapfigure}[13]{l}{0.62\linewidth}
    \centering
    \includegraphics[width=\linewidth]{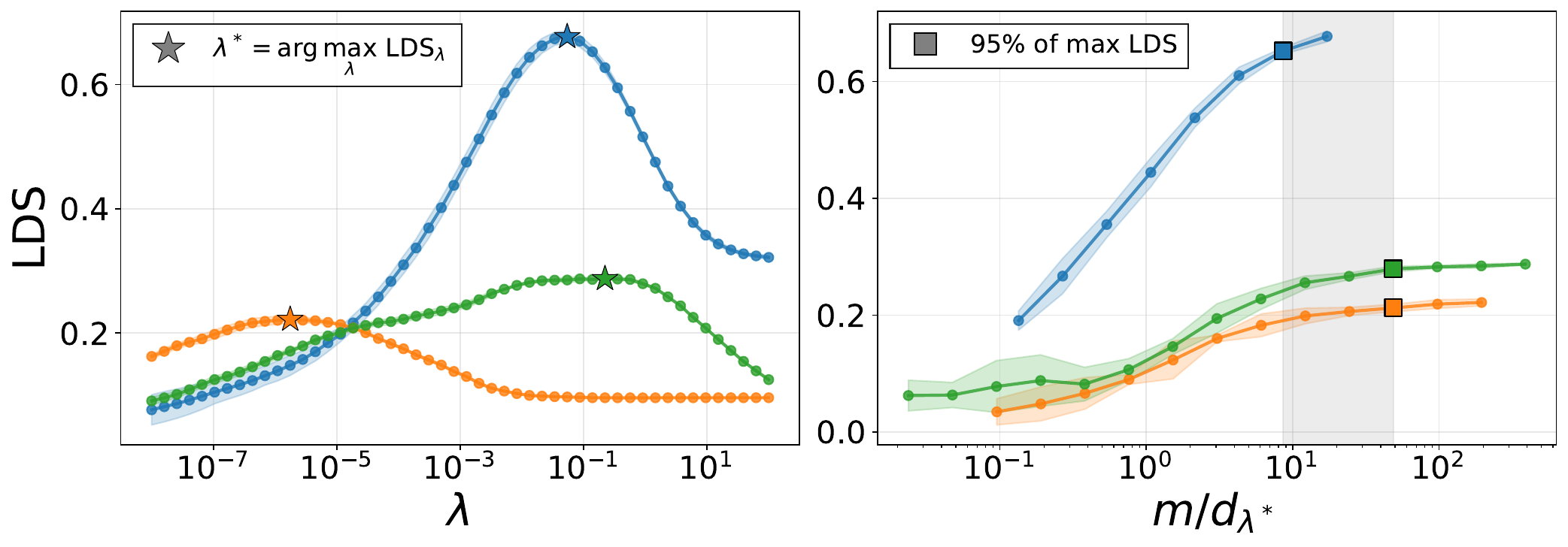}
    \includegraphics[width=\linewidth]{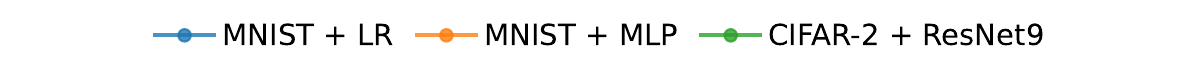}%
    \caption{Left: selecting \(\lambda^{\ast}\) on a validation set using large \(m\). Right: held-out test LDS versus \(m / d_{\lambda^{\ast}}(F)\).}
    \label{fig:hyperparameter-selection}
\end{wrapfigure}

These observations suggest a simple two-stage procedure. First, using a small validation set and a sufficiently large sketch size \(m\), sweep over \(\lambda\) and select \(\lambda^{\ast}\) that maximizes the downstream metric. Second, fix \(\lambda=\lambda^{\ast}\) and increase \(m\) until \(m \gtrsim C  d_{\lambda^{\ast}}(F)\), which ensures that the influence estimates are faithful. \Cref{fig:hyperparameter-selection} illustrates this strategy for LDS: the square markers in the right panel (\(95^{\text{th}}\) percentile LDS) indicate how large \(m\) must be to approach the best attainable LDS. In our experiments, a constant \(C\in(10,100)\) is sufficient, making the dependence on \(d_{\lambda^{\ast}}(F)\) operational.
\section{Conclusion}\label{sec:conclusion}
In this work, we show that projection-based influence is governed by the interaction between the sketch and the curvature operator, and that conventional Johnson--Lindenstrauss arguments, which only control Euclidean geometry, are misaligned with inverse-sensitive influence computations~\citep{park2023trak,schioppa2024efficient,hu2025grass}. By characterizing how projection interacts with common techniques such as ridge regularization and structured curvature approximations, our unified theory provides principled and actionable guidance for applying influence functions reliably at scale.

Our analysis also points to several important directions for future work. First, extending the theory to more sophisticated curvature approximations such as EK-FAC~\citep{george2018fast,grosse2023studying} remains highly nontrivial. Unlike standard K-FAC, EK-FAC introduces an additional eigenvalue correction step, which fundamentally alters the spectral structure of the curvature operator and breaks many of the techniques used in our analysis. Developing a projection theory that accounts for this additional structure is an interesting and challenging open problem.

Second, this work focuses on the approximation quality of projected influence relative to its unprojected counterpart. We do not study how projection, regularization, or curvature approximations affect the quality of influence functions as estimators of the underlying leave-one-out (LOO) quantity that influence functions are designed to approximate. In particular, correlation-based metrics such as LOO correlation or LDS~\citep{park2023trak} reflect not only \emph{approximation error} introduced by projection, but also \emph{modeling bias} arising from regularization and curvature approximations. From this perspective, our results show that in the regularized setting, low-curvature directions can be safely discarded without degrading approximation quality with respect to the regularized influence function. In contrast, recent work~\citep{wang2025better} suggests that these same low-curvature directions may play a crucial role in achieving high-quality influence estimates when evaluated against ground-truth LOO effects. Understanding how projection and other techniques jointly affect both approximation error and modeling bias remains an important open direction.

Overall, we view this work as a step toward a more principled understanding of scalable data attribution, and we hope it motivates further theoretical and empirical investigation into the interplay between projection, regularization, curvature, and evaluation criteria in influence functions.

\acks{
YH and HZ are partially supported by an NSF CAREER Award No. 2442290 and an NSF IIS Grant No. 2416897. Part of this work was conducted while YH and HZ were visiting the Simons Institute for the Theory of Computing. We thank Joseph Melkonian and David Woodruff for helpful discussions that motivated this study.
}

\addcontentsline{toc}{section}{References}
\bibliography{reference}

\crefalias{section}{appendix} 
\crefalias{subsection}{appendix} 

\newpage
\appendix

\setcounter{tocdepth}{2}
\tableofcontents


\section{Proofs for \texorpdfstring{\Cref{subsec:unregularized-projection}}{Section 2.1} (Unregularized Projection)}\label{adxsec:unregularized-projection}
In this section, we prove \Cref{thm:unregularized-projection}, which we first repeat the statement for convenience:

\begin{theorem*}
    The equality \(\tau_0(g, g^{\prime}) = \widetilde{\tau}_0(g, g^{\prime})\) holds for any \(g, g^{\prime} \in\range(F)\) \emph{iff} \(P\) is injective on \(\range(F)\), i.e. \(\rank(PU) = \rank(F) = r\) where \(F=U\Lambda U^{\top}\) is the compact eigendecomposition of \(F\) with \(U\in\mathbb{R}^{d\times r}\) orthonormal and \(\Lambda\in\mathbb{R}^{r\times r}\) positive definite. Subsequently, for any PSD \(F \in \mathbb{R}^{d \times d}\) and \textbf{any} matrix \(P \in \mathbb{R}^{m \times d}\), one cannot hope to obtain any multiplicative approximation of \(\tau_0(g, g^{\prime})\) via \(\widetilde{\tau}_0(g, g^{\prime})\) when \(\rank(PU) < r\).
\end{theorem*}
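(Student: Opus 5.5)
We reduce everything to the $r\times r$ matrix $C \coloneqq PU\Lambda^{1/2}\in\mathbb{R}^{m\times r}$. For $g,g^{\prime}\in\range(F)$, write $g=U\Lambda^{1/2}y$ and $g^{\prime}=U\Lambda^{1/2}y^{\prime}$ with $y,y^{\prime}\in\mathbb{R}^r$; this parametrization is a bijection between $\mathbb{R}^r$ and $\range(F)$. Then $\tau_0(g,g^{\prime})=y^{\top}\Lambda^{1/2}\Lambda^{-1}\Lambda^{1/2}y^{\prime}=y^{\top}y^{\prime}$. Since $PFP^{\top}=CC^{\top}$, $Pg=Cy$ and $Pg^{\prime}=Cy^{\prime}$, we get
\[
\widetilde{\tau}_0(g,g^{\prime})=y^{\top}C^{\top}(CC^{\top})^{+}Cy^{\prime}=y^{\top}\Pi_{\range(C^{\top})}y^{\prime},
\]
using the standard pseudoinverse identity $C^{\top}(CC^{\top})^{+}C=C^{+}C=\Pi_{\range(C^{\top})}$, the orthogonal projector onto the row space of $C$. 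So the whole theorem becomes a statement about the projector $\Pi\coloneqq\Pi_{\range(C^{\top})}$ versus $I_r$.

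\emph{Equivalence.} Equality for all $g,g^{\prime}$ means $y^{\top}\Pi y^{\prime}=y^{\top}y^{\prime}$ for all $y,y^{\prime}\in\mathbb{R}^r$, i.e.\ $\Pi=I_r$. This holds iff $\rank(C)=r$, and since $\Lambda^{1/2}$ is invertible, $\rank(C)=\rank(PU)$. Finally, $\rank(PU)=r$ is equivalent to $P$ being injective on $\range(U)=\range(F)$, giving both directions.

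\emph{Impossibility.} If $\rank(PU)<r$, then $\ker(C)$ is nontrivial. Pick a unit vector $y\in\ker(C)=\range(C^{\top})^{\perp}$ and set $g=g^{\prime}=U\Lambda^{1/2}y\neq 0$. Then $\tau_0(g,g)=1$ while $\widetilde{\tau}_0(g,g)=0$ (indeed $Pg=0$). Any multiplicative guarantee $\lvert\widetilde{\tau}_0-\tau_0\rvert\le\varepsilon\tau_0$ with $\varepsilon<1$, or more generally $c\,\tau_0\le\widetilde{\tau}_0\le C\tau_0$ with $c>0$, fails here. To also rule out large constants, I would add a bilinear example: choose $y\in\ker(C)$ and $z\in\range(C^{\top})$ unit vectors, and set $y^{\prime}=y+z$ and $\tilde y=z-ty$. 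Then $\tau_0=1-t$ can be made zero or of either sign (taking $t=1$ gives $\tau_0=0$ but $\widetilde{\tau}_0=1$), while $\widetilde{\tau}_0=z^{\top}z=1$ stays fixed. This shows the ratio is unbounded for every constant factor.

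The only step needing care is the projector identity with the pseudoinverse when $CC^{\top}$ is singular, which follows from the SVD of $C$. Everything else is linear algebra, so I expect no real obstacle.
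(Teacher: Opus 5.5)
Your proof is correct and follows essentially the same route as the paper. Both arguments reduce to \(C = PU\Lambda^{1/2}\), use the pseudoinverse identity \(C^{\top}(CC^{\top})^{+}C = \Pi_{\range(C^{\top})}\) (the paper only needs its full-column-rank case, where it equals \(I_r\)), and refute the converse with a vector in \(\ker(PU)\) for which \(\widetilde{\tau}_0 = 0 < \tau_0\). Your extra bilinear example with \(\tau_0(g,g^{\prime}) = 0 \neq \widetilde{\tau}_0(g,g^{\prime})\) is a genuine strengthening: the single-vector counterexample alone only excludes bounds \(\lvert \widetilde{\tau}_0 - \tau_0\rvert \leq \varepsilon \tau_0\) with \(\varepsilon < 1\). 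Note that the bilinear example needs \(\rank(PU)\geq 1\) to find a unit \(z\in\range(C^{\top})\); if \(PU=0\), then \(\widetilde{\tau}_0\equiv 0\) and only your first example applies.
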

\begin{proof}
    For the ``if'' direction, suppose \(\rank(PU)=r\). Let \(A:=PU\Lambda^{1/2}\in\mathbb{R}^{m\times r}\) and it follows that \(A\) has full column rank. Then for any \(g\in\range(U) = \range(F)\), write \(g=Uz\) and \(g^{\prime} U z^{\prime}\) for some \(z, z^{\prime}\in\mathbb{R}^r\) and note \(Pg=PUz = A\Lambda^{-1/2}z\) and similarly, \(Pg^{\prime} = A\Lambda^{-1/2}z^{\prime}\), and \(PFP^{\top}=A A^{\top}\). For full-column-rank \(A\), \(A^{\top}(AA^{\top})^\dagger A = I_r\). Therefore
    \[
        (Pg)^{\top}(PFP^{\top})^{\dagger}(Pg^{\prime})
        = z^{\top} \Lambda^{-1/2} A^{\top}(AA^{\top})^{\dagger}A \Lambda^{-1/2}z^{\prime}
        = z^{\top} \Lambda^{-1} z^{\prime}
        = g^{\top} F^{\dagger}g^{\prime}.
    \]
    For the ``only if'' direction, suppose \(\rank(PU)<r\). Then there exists a nonzero \(z\in\mathbb{R}^r\) such that \(PUz=0\). Let \(g=Uz\in\range(F)\) be the corresponding vector. Then, as \(g^{\top} F^{\dagger} g = z^{\top} \Lambda^{-1} z > 0\), \((Pg)^{\top} (PFP^{\top})^{\dagger}(Pg) = 0 \neq g^{\top} F^{\dagger} g > 0\), proving the result.
\end{proof}
\section{Proofs for \texorpdfstring{\Cref{subsec:regularized-projection}}{Section 2.2} (Regularized Projection)}\label{adxsec:regulairzed-projection}
This section collects technical results used in \Cref{subsec:regularized-projection} that are omitted in the main text.

\subsection{Proof of Resolvent Perturbation Concentration for Regularized Projection}
We prove the key operator-norm perturbation step used in the proof of \Cref{thm:regularized-projection-upper-bound}.\footnote{This can be viewed as a special case of approximate matrix multiplication for sub-Gaussian sketches; see \citet[Theorem 1]{cohen2016optimal}. Here, we state and prove the special case for clarity.} The general idea is to use the concentration of the sample covariance (\Cref{lma:covariance-concentration}) to control the resolvent-type map \(A \mapsto A(A+\lambda I)^{-1}\) in operator norm, enabling the comparison of \(F(F+\lambda I)^{-1}\) and \(G(G+\lambda I)^{-1}\) in the proof of \Cref{thm:regularized-projection-upper-bound}.

To prove \Cref{lma:covariance-concentration}, the key input is a standard high-probability covariance estimation bound for sub-Gaussian vectors (\citet[Exercise 9.2.5]{vershynin2018high}), which we restate and prove as \Cref{prop:high-prob-cov-estimation}.

\begin{proposition}[High-Probability Covariance Estimation]\label{prop:high-prob-cov-estimation}
    Let \(\Sigma \succeq 0\) and let \(X, X_1,\dots,X_m \in \mathbb{R}^d\) be i.i.d.\ mean-zero sub-Gaussian random vectors with covariance \(\Sigma = \mathbb{E}[X X^{\top}]\). Define the sample covariance
    \[
        \Sigma_m
        \coloneqq \frac{1}{m}\sum_{i=1}^m X_{i} X_{i}^{\top} .
    \]
    Then for any \(u \geq 0\), with probability at least \(1-2e^{-u}\),
    \[
        \lVert \Sigma_m-\Sigma \rVert _2
        \leq C\left(\sqrt{\frac{r(\Sigma)+u}{m}}+\frac{r(\Sigma)+u}{m}\right) \lVert \Sigma \rVert _2,
    \]
    where \(r(\Sigma) \coloneqq \tr(\Sigma)/\lVert \Sigma \rVert _2\) is the stable rank of \(\Sigma^{1/2}\) and \(C>0\) is a universal constant.
\end{proposition}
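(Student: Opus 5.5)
We prove this by reducing to the isotropic case and then running an \(\varepsilon\)-net argument combined with a Bernstein inequality for sub-exponential variables. The one non-standard ingredient is that the net dimension must be replaced by the stable rank \(r(\Sigma)\).

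First we whiten. Since \(X\) is sub-Gaussian with covariance \(\Sigma\), we write \(X = \Sigma^{1/2} Z\), where \(Z\) is isotropic and sub-Gaussian on \(\range(\Sigma)\). We use the usual convention that the sub-Gaussian norm of \(X\) is comparable to \(\lVert \Sigma\rVert_2^{1/2}\) times that of \(Z\), which is what makes the constant \(C\) universal. Then
\[
\lVert \Sigma_m-\Sigma\rVert_2=\sup_{v\in S^{d-1}}\Bigl\lvert \tfrac1m\textstyle\sum_i \langle X_i,v\rangle^2-\mathbb{E}\langle X,v\rangle^2\Bigr\rvert .
\]
For a fixed \(v\), the variables \(\langle X_i,v\rangle^2\) are sub-exponential with \(\psi_1\)-norm at most \(C\,v^\top\Sigma v\le C\lVert\Sigma\rVert_2\). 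Bernstein's inequality therefore gives deviation \(\lVert\Sigma\rVert_2\bigl(\sqrt{t/m}+t/m\bigr)\) with probability \(1-2e^{-t}\).

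A naive \(1/4\)-net of the sphere would give the bound with \(d\) in place of \(r(\Sigma)\). Getting \(r(\Sigma)\) instead is the main obstacle. We will handle it with a generic-chaining or matrix-deviation bound rather than a flat net, in the spirit of \citet[Theorem 9.2.4 / Exercise 9.2.5]{vershynin2018high}.

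We apply the matrix deviation inequality to the random matrix \(\mathbf{Z}\in\mathbb{R}^{m\times d}\) with rows \(Z_i^\top\), over the set \(T=\Sigma^{1/2}S^{d-1}\). The relevant quantities are:
\begin{itemize}
\item the Gaussian width, which satisfies \(w(T)\le \mathbb{E}\lVert \Sigma^{1/2}\gamma\rVert_2\le \sqrt{\tr\Sigma}\);
\item the radius, which is \(\rad(T)=\lVert\Sigma\rVert_2^{1/2}\).
\end{itemize}
With probability at least \(1-2e^{-u}\), this yields
\[
\sup_{x\in T}\bigl\lvert \lVert \mathbf{Z}x\rVert_2-\sqrt m\lVert x\rVert_2\bigr\rvert\le C\bigl(\sqrt{\tr\Sigma}+\sqrt u\,\lVert\Sigma\rVert_2^{1/2}\bigr)=C\lVert\Sigma\rVert_2^{1/2}\bigl(\sqrt{r(\Sigma)}+\sqrt u\bigr).
\]

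Next we convert this deviation of norms into a deviation of squares. Set \(a=\lVert \mathbf{Z}x\rVert_2/\sqrt m\) and \(b=\lVert x\rVert_2\le\lVert\Sigma\rVert_2^{1/2}\), and write \(\eta\coloneqq C\bigl(\sqrt{r(\Sigma)}+\sqrt u\bigr)/\sqrt m\). Then \(\lvert a-b\rvert\le \lVert\Sigma\rVert_2^{1/2}\eta\). Using \(\lvert a^2-b^2\rvert\le \lvert a-b\rvert\,(2b+\lvert a-b\rvert)\), we get
\[
\lvert a^2-b^2\rvert\le \lVert\Sigma\rVert_2\,(2\eta+\eta^2).
\]
Since \(a^2-b^2=v^\top(\Sigma_m-\Sigma)v\) for \(x=\Sigma^{1/2}v\), taking the supremum over \(v\) gives the claim. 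Here \(2\eta+\eta^2\lesssim \sqrt{(r(\Sigma)+u)/m}+(r(\Sigma)+u)/m\).

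If the matrix deviation inequality is not available as a citable tool, we fall back on a direct argument. We chain over dyadic nets of the ellipsoid \(T\), using Sudakov/Dudley bounds controlled by \(\sqrt{\tr\Sigma}\) together with the Bernstein increments above. This is the step we expect to carry the technical weight. All remaining steps are routine.
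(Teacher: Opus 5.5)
Your main route is correct and is essentially the paper's proof: you whiten \(X=\Sigma^{1/2}Z\) and apply the high-probability matrix deviation inequality over \(T=\Sigma^{1/2}S^{d-1}\). That inequality is exactly what the paper assembles from Vershynin's Theorem 9.1.3 plus Dirksen's tail form of Talagrand's comparison. You then use \(\gamma(T)\le\sqrt{\tr\Sigma}\) and \(\rad(T)=\lVert\Sigma\rVert_2^{1/2}\), and convert norm deviations into squared deviations via \(\lvert a^2-b^2\rvert\le\lvert a-b\rvert(\lvert a-b\rvert+2b)\). The opening Bernstein/net paragraph is never used, and the stated fallback would not work as written: Dudley's entropy integral over an ellipsoid can exceed \(\sqrt{\tr\Sigma}\) by a logarithmic factor, so without the comparison inequality you would need generic chaining (the \(\gamma_2\) functional) rather than Dudley.
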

\begin{proof}
    Write \(X = \Sigma^{1/2}Z\), where \(Z\) is an isotropic, mean-zero, sub-Gaussian random vector, and similarly \(X_{i}=\Sigma^{1/2}Z_{i}\) with i.i.d.\ copies \(Z_1,\dots,Z_m\). Let \(A\in\mathbb{R}^{m\times d}\) be the matrix whose \(i\)-th row is \(Z_{i}^{\top}\). As in the proof of \citet[Theorem 9.2.4]{vershynin2018high}, define \(T \coloneqq \Sigma^{1/2} S^{d-1}\) where \(S^{d-1}\) denotes the Euclidean unit sphere, then
    \[
        \lVert \Sigma_m-\Sigma \rVert _2
        = \frac{1}{m}\sup_{x\in T} \left\lvert \lVert Ax \rVert _2^2 - m\lVert x \rVert _2^2 \right\rvert .
    \]
    Consider the stochastic process
    \[
        Y_x
        \coloneqq \lVert Ax \rVert _2 - \sqrt{m} \lVert x \rVert _2, \qquad
        x\in T.
    \]
    By \citet[Theorem 9.1.3]{vershynin2018high}, \((Y_x)_{x\in T}\) has sub-Gaussian increments. Applying the high-probability Talagrand comparison inequality~\citep[Theorem 3.2]{dirksen2015tail}, we obtain that with probability at least \(1-2e^{-v^2}\),
    \[
        \sup_{x\in T} \lvert Y_x \rvert
        \leq  C\left(\gamma(T) + v \rad(T)\right),
    \]
    where \(\rad(T) \coloneqq \sup_{x\in T} \lVert x \rVert _2\) denotes the \emph{radius} of \(T\), and \(\gamma(T) \coloneqq \mathbb{E}\big[ \sup_{x\in T}\lvert \langle g, x \rangle \rvert \big] \) denotes the \emph{Gaussian complexity} of \(T\), for \(g \sim \mathcal{N}(0,I_d)\).

    Since \(T = \Sigma^{1/2} S^{d-1}\), we have \(\rad(T) = \lVert \Sigma \rVert _2^{1/2}\). Moreover,
    \[
        \gamma(T)
        = \mathbb{E}\big[ \lVert \Sigma^{1/2} g \rVert _2 \big]
        \leq \sqrt{\mathbb{E} [g^{\top}\Sigma g]}
        = \sqrt{\mathbb{E}[\tr(\Sigma g g^{\top})]}
        = \sqrt{\tr(\Sigma)},
    \]
    where the inequality follows from Jensen's inequality. Setting \(u = v^2\) and recalling that \(\tr(\Sigma) = r(\Sigma) \lVert \Sigma \rVert _2\), we conclude that, with probability at least \(1-2e^{-u}\),
    \[
        \sup_{x\in T} \lvert Y_x \rvert
        \leq C \lVert \Sigma \rVert _2^{1/2} (\sqrt{r(\Sigma)} + \sqrt{u}).
    \]

    Fix \(x\in T\) and write \(a\coloneqq \lVert Ax \rVert _2\) and \(b \coloneqq \sqrt{m} \lVert x \rVert _2\). Then \(b \leq \sqrt{m}\lVert \Sigma \rVert_2^{1/2}\) and
    \[
        \lvert a^2-b^2 \rvert
        \leq \lvert a-b \rvert (\lvert a-b \rvert + 2b).
    \]
    Using the bound above on \(\lvert a-b \rvert \) and the fact that \(b \geq 0\), we obtain
    \[
        \sup_{x\in T} \lvert a^2-b^2 \rvert
        \leq C\lVert \Sigma \rVert_2(\sqrt{r(\Sigma)}+\sqrt{u})
        \left( \sqrt{r(\Sigma)} + \sqrt{u} + \sqrt{m} \right).
    \]
    Dividing by \(m\) yields
    \[
        \lVert \Sigma_m-\Sigma \rVert _2
        \leq C\lVert \Sigma \rVert_2
        \left(\frac{r(\Sigma)+u}{m}+\sqrt{\frac{r(\Sigma)+u}{m}}\right),
    \]
    where we used \((\sqrt{r(\Sigma)}+\sqrt{u})^2\lesssim r(\Sigma)+u\). This completes the proof.
\end{proof}

We now prove the concentration of sample covariance formally.

\begin{lemma}\label{lma:covariance-concentration}
    Let \(P \in \mathbb{R}^{m \times d}\) be a sketching matrix whose rows are given by \(P_{i}^{\top} = \frac{1}{\sqrt{m}} W_{i}^{\top}\), where \(\{W_{i}\}_{i=1}^m\sim W\) are i.i.d.\ sub-Gaussian random vectors in \(\mathbb{R}^d\) satisfying \(\mathbb{E}[W]=0\) and \(\mathbb{E}[W W^{\top}]=I_d\). Let \(M \in \mathbb{R}^{d \times s}\) be a matrix and define \(\Sigma \coloneqq M^{\top} M\). For any \(\varepsilon, \delta \in (0,1)\), if
    \[
        m
        = \Omega \left(\frac{r(\Sigma) + \log(1/\delta)}{\varepsilon^2}\right),
    \]
    where \(r(\Sigma) = \tr(\Sigma) / \lVert \Sigma \rVert _2\) is the stable rank of \(\Sigma^{1/2}\), then with probability at least \(1-\delta\),
    \[
        \lVert M^{\top} (P^{\top} P - I_d) M \rVert _2
        \leq \varepsilon \lVert M \rVert _2^2 .
    \]
\end{lemma}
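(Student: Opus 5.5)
The plan is to reduce \Cref{lma:covariance-concentration} to \Cref{prop:high-prob-cov-estimation} by pushing the sketch through \(M\). Set \(X_i \coloneqq M^{\top} W_i \in \mathbb{R}^{s}\) for \(i=1,\dots,m\). These are i.i.d., mean zero, and sub-Gaussian, since every one-dimensional marginal \(\langle v, M^{\top}W\rangle = \langle Mv, W\rangle\) is a marginal of \(W\). Their sub-Gaussian norm is at most \(\lVert M\rVert_2\) times that of \(W\), which matches the scale \(\lVert\Sigma\rVert_2^{1/2}\) that the proposition implicitly uses. Their covariance is \(\mathbb{E}[X X^{\top}] = M^{\top}\mathbb{E}[WW^{\top}]M = M^{\top}M = \Sigma\). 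Because \(P^{\top}P = \frac{1}{m}\sum_i W_iW_i^{\top}\), we get
\[
    M^{\top}(P^{\top}P - I_d)M = \frac{1}{m}\sum_{i=1}^m X_iX_i^{\top} - \Sigma = \Sigma_m - \Sigma ,
\]
so the quantity to bound is exactly the sample-covariance deviation for the \(X_i\).

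Next I apply \Cref{prop:high-prob-cov-estimation} with \(u = \log(2/\delta)\). With probability at least \(1-\delta\),
\[
    \lVert \Sigma_m - \Sigma\rVert_2 \leq C\Bigl(\sqrt{t} + t\Bigr)\lVert\Sigma\rVert_2, \qquad t \coloneqq \frac{r(\Sigma)+\log(2/\delta)}{m}.
\]
I choose the hidden constant in \(m = \Omega\bigl((r(\Sigma)+\log(1/\delta))/\varepsilon^2\bigr)\) so that \(t \leq \varepsilon^2/(4C^2)\). Since \(\varepsilon<1\), this gives \(t\leq\sqrt t\) and hence \(C(\sqrt t + t)\leq 2C\sqrt t\leq\varepsilon\). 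Here \(\log(2/\delta)\lesssim \log(1/\delta)\) for \(\delta<1/2\), and larger \(\delta\) is handled by adjusting constants. Finally \(\lVert\Sigma\rVert_2 = \lVert M\rVert_2^2\), which gives the claim.

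The only point needing care is the sub-Gaussian normalization. \Cref{prop:high-prob-cov-estimation} is stated for vectors of the form \(\Sigma^{1/2}Z\) with \(Z\) isotropic sub-Gaussian with bounded norm, so I must check that \(X = M^{\top}W\) has this form in law. I would argue this by writing \(M = V\Sigma^{1/2}\) via the polar decomposition, where \(V\in\mathbb{R}^{d\times s}\) is a partial isometry on \(\range(\Sigma^{1/2})\). Then \(X = \Sigma^{1/2}V^{\top}W\), and \(Z \coloneqq V^{\top}W\) is sub-Gaussian with norm at most that of \(W\). On \(\range(\Sigma)\) it has covariance \(V^{\top}V\), the projector onto \(\range(\Sigma)\), so it is isotropic where it matters.

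On \(\ker\Sigma\) the factor \(\Sigma^{1/2}\) annihilates everything. Alternatively, I can pad \(Z\) with an independent Gaussian component on \(\ker\Sigma\) to make it fully isotropic without changing \(X\). Once this is settled, the rest is bookkeeping of constants.
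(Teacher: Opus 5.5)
Your proposal is correct and is essentially the paper's proof: pass to \(Y_i = M^{\top}W_i\) with covariance \(\Sigma\), identify \(M^{\top}(P^{\top}P-I_d)M\) with \(\Sigma_m-\Sigma\), and invoke \Cref{prop:high-prob-cov-estimation} with \(u=\Theta(\log(1/\delta))\). Two of your steps make explicit what the paper treats informally: you absorb the constant \(C\) by requiring \(t\le\varepsilon^2/(4C^2)\), and you use the polar-decomposition/padding argument to write \(M^{\top}W\) as \(\Sigma^{1/2}Z\) with \(Z\) isotropic sub-Gaussian.
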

\begin{proof}
    The rows of \(P\) satisfy \(P_{i}^{\top} = \frac{1}{\sqrt{m}} X_{i}^{\top}\), where \(\{X_{i}\}_{i=1}^{m}\) are i.i.d.\ isotropic sub-Gaussian vectors. Observe that
    \[
        M^{\top} P^{\top} P M
        = M^{\top} \left(\frac{1}{m}\sum_{i=1}^{m} X_{i} X_{i}^{\top}\right) M
        = \frac{1}{m}\sum_{i=1}^m (M^{\top} X_{i})(M^{\top} X_{i})^{\top}
        \eqqcolon \Sigma_m .
    \]
    Define \(Y_{i} \coloneqq M^{\top} X_{i}\). Then \(\{Y_{i}\}_{i=1}^m\) are i.i.d.\ mean-zero sub-Gaussian vectors with covariance
    \[
        \mathbb{E}[Y Y^{\top}]
        = M^{\top} \mathbb{E}[X X^{\top}] M
        = M^{\top} M
        = \Sigma .
    \]
    Applying \citet[Exercise 9.2.5, \Cref{prop:high-prob-cov-estimation}]{vershynin2018high} yields that, with probability at least \(1-2e^{-u}\),
    \[
        \lVert \Sigma_m - \Sigma \rVert _2
        \leq C\left(\sqrt{\frac{r(\Sigma)+u}{m}} + \frac{r(\Sigma)+u}{m}\right)\lVert \Sigma \rVert_2,
    \]
    Choosing \(m \geq (r(\Sigma)+u)/\varepsilon^2\) ensures \(\sqrt{(r(\Sigma)+u)/m} \leq \varepsilon\) and \((r(\Sigma)+u)/m \leq \varepsilon^2 < \varepsilon\) for \(\varepsilon<1\). Since \(\lVert \Sigma \rVert_2 = \lVert M^{\top} M \rVert _2 = \lVert M \rVert _2^2\), we conclude that
    \[
        \lVert M^{\top} P^{\top} P M - M^{\top} M \rVert _2
        = \lVert M^{\top} (P^{\top} P - I_d) M \rVert _2
        \leq \varepsilon \lVert M \rVert _2^2 .
    \]
    Setting \(u = \Theta(\log(1/\delta))\) completes the proof.
\end{proof}

We can now state and prove the concentration of resolvent perturbation for regularized projection as follows:

\begin{lemma}\label{lma:concentration-covariance-operator-norm}
    Let \(F \succeq 0\) and \(\lambda > 0\), and define \(G = F^{1/2} P^{\top} P F^{1/2}\). Then for any \(\varepsilon, \delta \in (0, 1)\), if \(m = \Omega (\varepsilon^{-2}(d_{\lambda}(F) + \log(1/\delta) ))\), with probability at least \(1 - \delta\),
    \[
        \lVert F(F + \lambda I)^{-1} - G(G + \lambda I)^{-1} \rVert _2
        \leq \varepsilon.
    \]
\end{lemma}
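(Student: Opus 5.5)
The plan is to apply \Cref{lma:covariance-concentration} with the $\lambda$-whitened matrix $M = B \coloneqq F^{1/2}(F+\lambda I)^{-1/2}$. This turns the covariance deviation into a two-sided PSD sandwich on $G+\lambda I$, which we then push through the resolvent identity $A(A+\lambda I)^{-1} = I - \lambda (A+\lambda I)^{-1}$.

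\textbf{Step 1: stable rank of $\Sigma = B^{\top}B$.} We have $\Sigma = F(F+\lambda I)^{-1}$, since all the factors commute. Its eigenvalues are $\lambda_j/(\lambda_j+\lambda)\in[0,1)$, so $\tr(\Sigma) = d_\lambda(F)$ and $\lVert\Sigma\rVert_2 = \lambda_1/(\lambda_1+\lambda)$.

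The stable rank is $r(\Sigma) = d_\lambda(F)(\lambda_1+\lambda)/\lambda_1$, and this is where the main obstacle lies. When $\lambda \gg \lambda_1$, $r(\Sigma)$ can exceed $d_\lambda(F)$ by the factor $(\lambda_1+\lambda)/\lambda_1$. To handle this, we rescale the target accuracy: \Cref{lma:covariance-concentration} gives error $\varepsilon' \lVert B\rVert_2^2$, and we only need $\varepsilon'\lVert B\rVert_2^2 \le \varepsilon/2$. Taking $\varepsilon' = \varepsilon/(2\lVert\Sigma\rVert_2)$ when this is below $1$, the sample requirement becomes
\[
\frac{r(\Sigma)+\log(1/\delta)}{\varepsilon'^2} \lesssim \frac{d_\lambda(F)\lVert\Sigma\rVert_2 + \lVert\Sigma\rVert_2^2\log(1/\delta)}{\varepsilon^2} \le \frac{d_\lambda(F)+\log(1/\delta)}{\varepsilon^2},
\]
using $\lVert\Sigma\rVert_2\le 1$. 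If instead $\varepsilon' \ge 1$, we can invoke the unnormalized form of \Cref{prop:high-prob-cov-estimation} directly: its bound $C(\sqrt{x}+x)\lVert\Sigma\rVert_2$ with $x=(r(\Sigma)+u)/m$ is at most $\varepsilon/2$ under the same condition on $m$. Either way we obtain, with probability at least $1-\delta$,
\[
\lVert B^{\top}(P^{\top}P - I)B\rVert_2 \le \varepsilon/2.
\]

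\textbf{Step 2: the sandwich.} The bound from Step 1 is equivalent to
\[
-\tfrac{\varepsilon}{2} I \preceq (F+\lambda I)^{-1/2}(G - F)(F+\lambda I)^{-1/2} \preceq \tfrac{\varepsilon}{2} I,
\]
because $B^{\top}P^{\top}PB = (F+\lambda I)^{-1/2} G (F+\lambda I)^{-1/2}$ and $B^{\top}B = (F+\lambda I)^{-1/2}F(F+\lambda I)^{-1/2}$. Conjugating by $(F+\lambda I)^{1/2}$ gives
\[
(1-\tfrac{\varepsilon}{2})(F+\lambda I) \preceq G+\lambda I \preceq (1+\tfrac{\varepsilon}{2})(F+\lambda I).
\]

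\textbf{Step 3: resolvent bound.} Set $K \coloneqq (F+\lambda I)^{1/2}(G+\lambda I)^{-1}(F+\lambda I)^{1/2}$. By operator monotonicity of inversion, the sandwich yields $(1+\varepsilon/2)^{-1} I \preceq K \preceq (1-\varepsilon/2)^{-1} I$, hence $\lVert K - I\rVert_2 \le \frac{\varepsilon/2}{1-\varepsilon/2} \le \varepsilon$.

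We then write
\[
\lambda\big[(G+\lambda I)^{-1} - (F+\lambda I)^{-1}\big] = \lambda (F+\lambda I)^{-1/2}(K-I)(F+\lambda I)^{-1/2}.
\]
Since $\lVert\lambda(F+\lambda I)^{-1}\rVert_2\le 1$, this has operator norm at most $\varepsilon$. Combining with the identity $A(A+\lambda I)^{-1} = I-\lambda(A+\lambda I)^{-1}$ applied to both $F$ and $G$ gives
\[
\lVert F(F+\lambda I)^{-1} - G(G+\lambda I)^{-1}\rVert_2 \le \varepsilon.
\]
Conjugating by the whitening, rather than using the crude bound $1/\lambda$, is what keeps the final constant clean.
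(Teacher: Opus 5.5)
Your proof is correct and follows the paper's argument step for step. Both apply \Cref{lma:covariance-concentration} with \(M=B\), derive the ridge sandwich \((1\mp\varepsilon/2)(F+\lambda I)\), invert through the whitened operator (your \(K\) is the paper's \(S^{-1}\)), and finish with \(A(A+\lambda I)^{-1}=I-\lambda(A+\lambda I)^{-1}\).

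The one place you differ is more careful than the paper, namely the small-\(\lVert B\rVert_2\) regime. There the paper's choice \(\epsilon=\min(1,\varepsilon/2\lVert B\rVert_2)=1\) leaves the requirement \(m\gtrsim d_\lambda(F)/\lVert B\rVert_2^2\), which is not \(O(d_\lambda(F)/\varepsilon^2)\) once \(\lVert B\rVert_2\ll\varepsilon\). You instead use the unnormalized bound of \Cref{prop:high-prob-cov-estimation} directly, and this handles that regime correctly, since \(\sqrt{x}\,\lVert\Sigma\rVert_2\le\sqrt{(d_\lambda(F)+u)/m}\) and \(x\lVert\Sigma\rVert_2\le (d_\lambda(F)+u)/m\).
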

\begin{proof}
    Applying \Cref{lma:covariance-concentration} with \(M = B = F^{1/2} (F + \lambda I)^{-1/2}\), for any \(\delta, \epsilon > 0\), if \(m = \Omega (\epsilon^{-2}(r(B^{\top} B) + \log(1/\delta)))\) then with probability at least \(1-\delta\),
    \[
        \lVert B^{\top} (P^{\top} P - I_d) B \rVert _2
        \leq \epsilon \lVert B \rVert_2^2 .
    \]
    We first note that if \(\lVert B \rVert _2^2 = 0\), then the bound is trivial. Assuming \(\lVert B \rVert _2 > 0\). Then we see that \(\lVert B \rVert _2^2 = \lVert B^{\top} B \rVert _2 = \lVert F(F + \lambda I)^{-1} \rVert _2 \leq 1\) since the eigenvalues of \(F(F + \lambda I)^{-1}\) equal \(\lambda_i (F) / (\lambda_i (F) + \lambda)\). Now, pick \(\epsilon \coloneqq \min(1, \varepsilon / 2\lVert B \rVert _2)\), and note that \(\lVert B \rVert_F^2 = \tr(F(F+\lambda I)^{-1}) = d_\lambda(F)\), we have
    \[
        r(B^{\top} B)
        = \frac{\tr(B^{\top} B)}{\lVert B^{\top} B \rVert _2}
        = \frac{\lVert B \rVert _F^2}{\lVert B \rVert _2^2}
        = \frac{d_{\lambda}(F)}{\lVert B \rVert _2^2}.
    \]
    After substitution, with \(\lVert B \rVert _2^2 \leq 1\), we conclude that if
    \[
        m
        = \Omega \left(\epsilon^{-2} \left(\frac{d_{\lambda}(F)}{\lVert B \rVert _2^2} + \log (1/\delta)\right)\right)
        = \Omega \left(\varepsilon^{-2} \left(d_{\lambda}(F) + \log (1/\delta) \right)\right),
    \]
    we have \(\lVert B^{\top} (P^{\top} P - I_d) B \rVert _2 \leq \epsilon \lVert B \rVert_2^2 \leq \varepsilon / 2\). This implies
    \[
        - \frac{\varepsilon}{2} I \preceq B^{\top} (P^{\top} P - I) B \preceq \frac{\varepsilon}{2} I
        \implies B^{\top} B - \frac{\varepsilon}{2} I \preceq B^{\top} P^{\top} P B \preceq B^{\top} B + \frac{\varepsilon}{2} I .
    \]
    With
    \[
        \begin{split}
            B^{\top} B            & = (F + \lambda I)^{-1/2} F (F + \lambda I)^{-1/2},                                             \\
            B^{\top} P^{\top} P B & = (F + \lambda I)^{-1 / 2} \underbrace{F^{1/2} P^{\top} P F^{1/2}}_{G} (F + \lambda I)^{-1/2},
        \end{split}
    \]
    we can conjugate by \((F+\lambda I)^{1/2}\), which yields
    \[
        \left(1-\frac{\varepsilon}{2}\right)F - \frac{\varepsilon}{2} \lambda I
        \preceq G
        \preceq \left(1+\frac{\varepsilon}{2}\right)F + \frac{\varepsilon}{2} \lambda I .
    \]
    Adding \(\lambda I\) gives
    \[
        \left(1-\frac{\varepsilon}{2}\right) (F + \lambda I)
        \preceq G + \lambda I
        \preceq \left(1+\frac{\varepsilon}{2}\right) (F + \lambda I).
    \]
    Define \(S \coloneqq (F + \lambda I)^{-1/2} (G + \lambda I) (F + \lambda I)^{-1/2}\). Conjugating the above by \((F + \lambda I)^{-1/2}\) yields
    \[
        \left(1 - \frac{\varepsilon}{2}\right) I
        \preceq S
        \preceq \left(1 + \frac{\varepsilon}{2}\right) I.
    \]
    Hence, \(S \succ 0\) and \(\lVert S - I \rVert_2 \leq \varepsilon / 2\) and \(\lVert S^{-1} \rVert _2 \leq \frac{1}{1 - \varepsilon / 2}\). From the definition of \(S\),
    \[
        (G + \lambda I)^{-1}
        = (F + \lambda I)^{-1/2} S^{-1} (F + \lambda I)^{-1/2},
    \]
    hence
    \[
        (G + \lambda I)^{-1} - (F + \lambda I)^{-1}
        = (F + \lambda I)^{-1/2} (S^{-1} - I)(F + \lambda I)^{-1/2},
    \]
    giving
    \[
        \lVert (G + \lambda I)^{-1} - (F + \lambda I)^{-1} \rVert _2
        \leq \lVert (F + \lambda I)^{-1} \rVert _2 \lVert S^{-1} - I \rVert _2.
    \]
    From the identity \(S^{-1} - I = S^{-1} (I - S)\), we have
    \[
        \lVert S^{-1} - I \rVert _2
        \leq \lVert S^{-1} \rVert _2 \lVert S - I \rVert _2
        \leq \frac{\varepsilon / 2}{1 - \varepsilon / 2}.
    \]
    With \(\lVert (F + \lambda I)^{-1} \rVert _2 \leq 1/\lambda\), we have
    \[
        \lVert (G + \lambda I)^{-1} - (F + \lambda I)^{-1} \rVert _2
        \leq \frac{1}{\lambda} \frac{\varepsilon / 2}{1 - \varepsilon / 2}.
    \]
    From the identity \(A(A + \lambda I)^{-1} = I - \lambda (A + \lambda I)^{-1}\) for any PSD \(A\), we have
    \[
        F(F + \lambda I)^{-1} - G(G + \lambda I)^{-1}
        = \lambda \left((G + \lambda I)^{-1} - (F + \lambda I)^{-1}\right),
    \]
    and hence
    \[
        \lVert F(F + \lambda I)^{-1} - G(G + \lambda I)^{-1} \rVert _2
        \leq \lambda \frac{1}{\lambda} \frac{\varepsilon / 2}{1 - \varepsilon / 2}
        = \frac{\varepsilon / 2}{1 - \varepsilon / 2}.
    \]
    Finally, note that \(\frac{\varepsilon / 2}{1 - \varepsilon / 2} \leq \varepsilon\) for any \(\varepsilon \in (0, 1)\), this proves the result.
\end{proof}

\subsection{OSE-Based Alternative Analysis}\label{adxsubsec:OSE}
We record a self-contained proof of the OSE-based alternative analysis sketched in discussion following \Cref{thm:regularized-projection-upper-bound}. Let \(A \in \mathbb{R}^{d \times r}\) be a fixed matrix. A random matrix \(P \in \mathbb{R}^{m \times d}\) is an \emph{oblivious subspace embedding (OSE)} for \(\range(A)\) with distortion \(\varepsilon\in(0,1)\) if, with high probability,
\[
    (1-\varepsilon)\lVert Ax\rVert_2^2
    \leq \lVert PAx\rVert_2^2
    \leq (1+\varepsilon)\lVert Ax\rVert_2^2,
    \qquad \forall x \in \mathbb{R}^r.
\]
Equivalently,
\[
    -\varepsilon A^{\top}A
    \preceq A^{\top}(P^{\top}P-I_d)A
    \preceq \varepsilon A^{\top}A.
\]
It is well known that standard oblivious sketches (Gaussian, Rademacher, SJLT) satisfy this property provided \(m = \Omega\big(\varepsilon^{-2}\rank(A)\big)\)~\citep[Theorems~2.3 and~6.10]{woodruff2014sketching}. In our case, we apply the OSE framework with $A = F^{1/2}$. 
It is straightforward to see that \(\rank(A)=\rank(F) = r\), so achieving an \(\varepsilon\)-OSE for \(\range(A)\) requires \(m = \Omega(r/\varepsilon^{2})\).

Define \(G \coloneqq F^{1/2}P^{\top}PF^{1/2}\). The OSE condition gives
\[
    (1-\varepsilon) F
    \preceq G
    \preceq (1+\varepsilon) F.
\]
Consider \(f(t) \coloneqq \frac{t}{t+\lambda}\) for \(t\geq 0\). Since \(t\mapsto (t+\lambda)^{-1}\) is operator monotone decreasing on \([0,\infty)\), it follows that \(f(t)=1-\lambda(t+\lambda)^{-1}\) is operator monotone increasing. 

Applying \(f\) to the sandwich gives
\[
    f\big((1-\varepsilon)F\big)
    \preceq f(G)
    \preceq f\big((1+\varepsilon)F\big),
\]
or
\[
    (1-\varepsilon)F\big((1-\varepsilon)F+\lambda I\big)^{-1}
    \preceq G(G+\lambda I)^{-1}
    \preceq (1+\varepsilon)F\big((1+\varepsilon)F+\lambda I\big)^{-1}.
\]
Since \(F\) commutes with any function of itself, the resulting operator-norm deviation reduces to a scalar supremum. For example,
\[
    \big\lVert f\big((1+\varepsilon)F\big) - f(F) \big\rVert_2
    =
    \sup_{t\geq 0}
    \left\lvert
    \frac{(1+\varepsilon)t}{(1+\varepsilon)t+\lambda}
    -
    \frac{t}{t+\lambda}
    \right\rvert
    =
    \sup_{t\geq 0}
    \frac{\varepsilon\lambda t}{\big((1+\varepsilon)t+\lambda\big)(t+\lambda)}.
\]
The same bound holds with \((1+\varepsilon)\) replaced by \((1-\varepsilon)\). A short calculus argument shows the supremum is at most \(\varepsilon\); hence
\[
    \big\lVert f(G) - f(F) \big\rVert_2
    = \big\lVert G(G+\lambda I)^{-1} - F(F+\lambda I)^{-1} \big\rVert_2
    \leq O(\varepsilon).
\]
Combining the above operator control with the argument in the proof of \Cref{thm:regularized-projection-upper-bound} yields the same bilinear and quadratic influence error bounds. The key difference is the sample complexity: the OSE route fundamentally scales with \(r\), whereas our main analysis scales with the effective dimension \(d_{\lambda}(F)\).

\subsection{Proof of Anti-Concentration of Gaussian Sample Covariance}
Next, we prove the worst-case lower bound (\Cref{thm:regularized-projection-lower-bound}). The proof consists of two main components:
\begin{enumerate}
    \item An anti-concentration result for the sample covariance of Gaussian matrices, which shows that deviations of order \(\sqrt{k/m}\) occur with constant probability (\Cref{lma:anti-concentration}).
    \item A carefully constructed hard instance \(F\) for which such deviations translate directly into a large error in the regularized quadratic form.
\end{enumerate}

We note that since the proof of \Cref{lma:anti-concentration} contains many technical computation, we defer them for a cleaner presentation after the main proof.

\begin{lemma}\label{lma:anti-concentration}
    Let \(W\in\mathbb{R}^{m\times k}\) have rows \(w_1,\dots,w_m \sim \mathcal{N}(0,I_k)\) i.i.d., and define \(S \coloneqq \frac{1}{m}W^{\top} W\). Then for all \(m,k\geq 1\),
    \[
        \Pr\left( \lVert S-I_k\rVert _2 \geq \frac{1}{2}\sqrt{\frac{k}{m}} \right)
        \geq \frac{3}{80}.
    \]
\end{lemma}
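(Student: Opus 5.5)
The plan is to lower bound the operator norm by a normalized Frobenius norm and then apply a second-moment (Paley--Zygmund) argument to the Frobenius norm. Since \(S-I_k\) is a symmetric \(k\times k\) matrix, \(\lVert S-I_k\rVert_2^2 \geq \frac{1}{k}\lVert S-I_k\rVert_F^2\). It therefore suffices to show that \(X \coloneqq \lVert S-I_k\rVert_F^2\) satisfies \(X \geq \frac{k^2}{4m}\) with probability at least \(3/80\).

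First I compute \(\mathbb{E}[X]\) entrywise. A diagonal entry is \(S_{jj}-1 = \frac{1}{m}\sum_{l}(w_{lj}^2-1)\), with variance \(2/m\). An off-diagonal entry is \(S_{ij} = \frac1m\sum_l w_{li}w_{lj}\), with mean zero and variance \(1/m\). Summing over all entries gives
\[
\mathbb{E}[X] = \frac{2k + k(k-1)}{m} = \frac{k(k+1)}{m}.
\]
The target threshold \(k^2/(4m)\) is at most \(\theta\,\mathbb{E}[X]\) with \(\theta = 1/4\). Paley--Zygmund then gives
\[
\Pr\bigl(X \geq \tfrac{k^2}{4m}\bigr) \geq \Pr\bigl(X\geq \tfrac14\mathbb{E}X\bigr) \geq \frac{9}{16}\cdot\frac{(\mathbb{E}X)^2}{\mathbb{E}[X^2]}.
\]
The claimed constant \(3/80\) follows as soon as \(\mathbb{E}[X^2] \leq 15\,(\mathbb{E}X)^2\).

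The main work, and the step I expect to be the obstacle, is this fourth-moment bound \(\mathbb{E}[X^2]\leq 15(\mathbb{E}X)^2\), uniformly in \(m,k\geq 1\). Generic Gaussian hypercontractivity for the degree-four polynomial \(X\) only gives a constant like \(3^4\), which is too weak, so I would compute \(\mathbb{E}[X^2]\) explicitly. I would write \(m^2X = \sum_{i,j}\bigl(\sum_l (w_{li}w_{lj}-\delta_{ij})\bigr)^2\) and expand \(\mathbb{E}[X^2]\) as a sum over pairs of entries \((i,j),(i',j')\) of \(\mathbb{E}[Z_{ij}^2 Z_{i'j'}^2]\), where \(Z_{ij} = m(S-I)_{ij}\).

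I would group these pairs by how the index sets overlap: identical entries, entries sharing one index, and disjoint entries. Within each group I would use independence across rows \(l\) and Wick/Isserlis formulas for the Gaussian moments. For example, for an off-diagonal entry one gets \(\mathbb{E}Z_{ij}^4 = 3m^2+6m\). For disjoint index pairs, \(Z_{ij}^2\) and \(Z_{i'j'}^2\) are independent, so those terms just reproduce \((\mathbb{E}X)^2\) up to lower-order corrections. Pairs sharing an index contribute covariances of order \(m\) per pair, and there are \(O(k^3)\) such pairs, giving a contribution of order \(k^3m\) against \((\mathbb{E}X)^2 m^2 \asymp k^4m^2\). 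Collecting terms should give \(\mathbb{E}[X^2] = (\mathbb{E}X)^2 + O\bigl(\tfrac{k^2}{m^2} + \tfrac{k^3}{m^3}\bigr)\)-type corrections. The ratio \(\mathbb{E}[X^2]/(\mathbb{E}X)^2\) is then at most an absolute constant, with the worst case at small \(k\) and \(m\) (e.g.\ \(k=m=1\), where \(X=(\chi^2_1-1)^2\) and the ratio is \(60/4=15\)). I would verify that every case stays below \(15\), which is exactly the \(k=m=1\) ratio and explains the constant \(3/80 = \frac{9}{16}\cdot\frac{1}{15}\).
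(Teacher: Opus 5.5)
Your proposal follows the paper's proof: you lower-bound \(\lVert S-I_k\rVert_2^2\) by \(\lVert S-I_k\rVert_F^2/k\), use \(\mathbb{E}\lVert S-I_k\rVert_F^2=k(k+1)/m\), apply Paley--Zygmund with \(\theta=1/4\), and reduce everything to the uniform bound \(\mathbb{E}[X^2]\leq 15(\mathbb{E}X)^2\), which is tight at \(k=m=1\). The only difference is how the fourth moment is organized. The paper expands over rows, \(S-I_k=\frac{1}{m}\sum_i X_i\) with \(X_i=w_iw_i^{\top}-I_k\), so only three index patterns survive and \(\mathbb{E}[X^2]=\bigl(ma+m(m-1)(\mu^2+2b)\bigr)/m^4\) with \(\mu=k(k+1)\), \(a=\mathbb{E}\lVert X_1\rVert_F^4\), \(b=\mathbb{E}\langle X_1,X_2\rangle^2\); this is less case-heavy than your entrywise Wick expansion (which gives the same closed form) and makes the final check immediate, since the ratio is at most \(3+12/m\leq 15\).
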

\begin{proof}
    Define
    \[
        A \coloneqq S-I_k
        = \frac{1}{m}\sum_{i=1}^m X_i,
        \qquad
        X_i \coloneqq w_i w_i^{\top} - I_k.
    \]
    Then \(\mathbb{E}[X_i]=0\) and \(X_1,\dots,X_m\) are independent. Let \(g \coloneqq \lVert A\rVert_F^2 \geq 0\). Expanding, we have
    \[
        g
        = \left\lVert \frac{1}{m}\sum_{i=1}^m X_i \right\rVert_F^2
        = \frac{1}{m^2}\sum_{i,j=1}^m \langle X_i,X_j\rangle.
    \]
    Since \(\mathbb{E}[\langle X_i,X_j\rangle] = 0\) for \(i\neq j\) from independence and \(\mathbb{E}[X_i]=0\),
    \[
        \mathbb{E}[g]
        = \frac{1}{m^2}\sum_{i=1}^m \mathbb{E}[\lVert X_i\rVert_F^2]
        = \frac{1}{m} \mathbb{E}[\lVert X_1\rVert _F^2].
    \]
    A direct computation gives \(\mathbb{E}[\lVert X_1\rVert_F^2] = k(k+1)\), hence
    \[
        \mathbb{E}[g]
        = \frac{k(k+1)}{m}.
    \]
    On the other hand, as \(g = \frac{1}{m^2}\sum_{i,j} Y_{ij}\) where \(Y_{ij}\coloneqq \langle X_i,X_j\rangle\), we have
    \[
        \mathbb{E}[g^2]
        = \frac{1}{m^4}\sum_{i,j,p,q}\mathbb{E}[Y_{ij}Y_{pq}].
    \]
    By independence and centering, only overlapping index patterns contribute, and one obtains
    \[
        \mathbb{E}[g^2]
        = \frac{1}{m^4}\left( m a + m(m-1)\mu^2 + 2m(m-1)b \right),
    \]
    where \(\mu \coloneqq \mathbb{E}[\lVert X_1\rVert_F^2]\), \(a \coloneqq \mathbb{E}[\lVert X_1\rVert _F^4]\), and \(b \coloneqq \mathbb{E}[\langle X,Y\rangle^2]\), and \(X\coloneqq w w^{\top} - I_k\), \(Y\coloneqq u u^{\top}-I_k\) with \(w \perp u\) i.i.d.\ \(\mathcal{N}(0,I_k)\). Moreover, a moment calculations yield
    \[
        \mu = k(k+1),\qquad
        a = k^4 + 10k^3 + 25k^2 + 24k,\qquad
        b = 2k^2 + 2k.
    \]
    Substituting these expressions into the above formula for \(\mathbb{E}[g^2]\) and simplifying gives the explicit comparison
    \[
        \frac{(\mathbb{E}[g])^2}{\mathbb{E}[g^2]}
        \geq \frac{1}{15},
    \]
    uniformly for all \(m,k\geq 1\). Equivalently, \(\mathbb{E}[g^2] \leq 15(\mathbb{E}[g])^2\). Then, by Paley--Zygmund~\citep{paley1932some}, for any \(\theta\in(0,1)\),
    \[
        \Pr(g \geq \theta \mathbb{E}[g])
        \geq (1-\theta)^2 \frac{(\mathbb{E}[g])^2}{\mathbb{E}[g^2]}
        \geq \frac{(1-\theta)^2}{15}.
    \]
    Taking \(\theta=1 / 4\) yields
    \[
        \Pr\left(g \geq \frac{1}{4}\mathbb{E}[g]\right)
        \geq \frac{(3/4)^2}{15}
        = \frac{3}{80}.
    \]
    On this event,
    \[
        g
        = \lVert A\rVert_F^2
        \geq \frac{1}{4}\cdot \frac{k(k+1)}{m}.
    \]
    Using \(\lVert A\rVert _F^2 \leq k \lVert A\rVert _2^2\), we obtain
    \[
        \lVert A\rVert _2^2
        \geq \frac{1}{k}\lVert A\rVert _F^2
        \geq \frac{1}{k}\cdot \frac{1}{4}\cdot \frac{k(k+1)}{m}
        = \frac{k+1}{4m}
        \geq \frac{k}{4m}.
    \]
    Hence, with probability at least \(3/80\),
    \[
        \lVert A\rVert_2
        = \lVert S-I_k \rVert _2
        \geq \frac{1}{2}\sqrt{\frac{k}{m}}.
    \]
\end{proof}

We now provide the routine calculations used in the proof of \Cref{lma:anti-concentration}. In particular, we compute moments of Gaussian rank-one matrices and enumerate the index patterns in \(\mathbb{E}[\lVert A \rVert_F^4]\).

\paragraph{Chi-square moments.}
Let \(r\sim\chi_k^2\). For any \(n\in\mathbb{N}_+\),
\[
    \mathbb{E}[r^n] = \prod_{i=0}^{n-1} (k+2i).
\]
In particular,
\[
    \mathbb{E}[r] = k,\quad
    \mathbb{E}[r^2] = k(k+2),\quad
    \mathbb{E}[r^3] = k(k+2)(k+4),\quad
    \mathbb{E}[r^4] = k(k+2)(k+4)(k+6).
\]

\paragraph{Moments of \(X = ww^{\top} - I_k\).}
Let \(w\sim\mathcal{N}(0,I_k)\) and define \(X \coloneqq ww^{\top} - I_k\). Write \(r\coloneqq \lVert w \rVert_2^2\sim\chi_k^2\). We compute \(\mu\coloneqq\mathbb{E}[\lVert X \rVert_F^2]\) and \(a\coloneqq\mathbb{E}[\lVert X \rVert_F^4]\). First,
\begin{align*}
    \lVert X \rVert_F^2
     & = \tr(X^{\top} X) = \tr(X^2)
    = \tr\big((ww^{\top} - I_k)^2\big)                           \\
     & = \tr(ww^{\top} ww^{\top}) - 2 \tr(ww^{\top}) + \tr(I_k).
\end{align*}
By trace cyclicity, \(\tr(ww^{\top} ww^{\top}) = \tr(w(w^{\top} w)w^{\top}) = (w^{\top} w) \tr(ww^{\top}) = r^2\), while \(\tr(ww^{\top})=r\) and \(\tr(I_k)=k\). Hence
\[
    \lVert X \rVert_F^2 = r^2 - 2r + k.
\]
Taking expectation and using the moments above gives
\[
    \mu = \mathbb{E}[r^2 - 2r + k] = k(k+2) - 2k + k = k(k+1).
\]
Moreover,
\[
    a
    = \mathbb{E}(r^2 - 2r + k)^2
    = \mathbb{E}\big[r^4 - 4r^3 + (4+2k)r^2 - 4kr + k^2\big].
\]
Substituting \(\mathbb{E}[r],\dots,\mathbb{E}[r^4]\) yields
\[
    a = k^4 + 10k^3 + 25k^2 + 24k.
\]

\paragraph{The mixed term \(b = \mathbb{E}[\langle X,Y\rangle^2]\).}
Let \(w,u\sim\mathcal{N}(0,I_k)\) be independent, and define \(X \coloneqq ww^{\top} - I_k\) and \(Y \coloneqq uu^{\top} - I_k\). Set \(r\coloneqq\lVert w \rVert_2^2\), \(s\coloneqq\lVert u \rVert_2^2\), and \(t\coloneqq w^{\top} u\). A direct expansion gives
\[
    \langle X,Y\rangle
    = \tr\big((ww^{\top} - I_k)(uu^{\top} - I_k)\big)
    = t^2 - r - s + k,
\]
since \(\tr(ww^{\top} uu^{\top}) = \tr(w(w^{\top} u)u^{\top}) = (w^{\top} u)^2 = t^2\).
Therefore,
\[
    b = \mathbb{E}[(t^2-r-s+k)^2]
    = \mathbb{E}[t^4] + \mathbb{E}[(r+s-k)^2] - 2 \mathbb{E}\big[t^2(r+s-k)\big].
\]
To evaluate these terms, write \(t=\sum_{\ell=1}^k Z_{\ell}\) with \(Z_{\ell}\coloneqq w_{\ell} u_{\ell}\). Then \(\mathbb{E}[Z_{\ell}]=0\), \(\mathbb{E}[Z_{\ell}^2]=1\), and \(\mathbb{E}[Z_{\ell}^4]=9\), and hence
\[
    \mathbb{E}[t^4]
    = \sum_{\ell=1}^k \mathbb{E}[Z_{\ell}^4] + 6\sum_{1\leq i<j\leq k}\mathbb{E}[Z_i^2] \mathbb{E}[Z_j^2]
    = 9k + 6\binom{k}{2}
    = 3k^2 + 6k.
\]
Next, since \(r,s\sim\chi_k^2\) are independent, we have \(\mathbb{E}[r]=\mathbb{E}[s]=k\) and \(\Var[r]=\Var[s]=2k\), so
\[
    \mathbb{E}[(r+s-k)^2]
    = \Var[r+s-k] + (\mathbb{E}[r+s-k])^2
    = 4k + k^2.
\]
Finally, conditioning on \(w\) gives \(t \mid w \sim \mathcal{N}(0,\lVert w \rVert_2^2)=\mathcal{N}(0,r)\), so \(\mathbb{E}[t^2\mid w]=r\) and hence \(\mathbb{E}[t^2]=\mathbb{E}[r]=k\). Moreover,
\[
    \mathbb{E}[t^2 r]
    = \mathbb{E}\big[r \mathbb{E}[t^2\mid w]\big]
    = \mathbb{E}[r^2]
    = k(k+2).
\]
By symmetry, \(\mathbb{E}[t^2(r+s-k)] = 2\mathbb{E}[t^2 r] - k\mathbb{E}[t^2] = k^2 + 4k\), so altogether
\[
    b = (3k^2+6k) + (k^2+4k) - 2(k^2+4k) = 2k^2 + 2k.
\]

\paragraph{Enumerating index patterns in \(\mathbb{E}\lVert A \rVert_F^4\).}
Let \(A=\frac{1}{m}\sum_{i=1}^m X_i\) with \(X_i= w_i w_i^{\top} - I_k\) i.i.d.\ and mean-zero, and set \(Z=\lVert A \rVert_F^2\). With \(Y_{ij}\coloneqq \langle X_i,X_j\rangle\), we have
\[
    Z = \frac{1}{m^2}\sum_{i,j=1}^m Y_{ij},
    \qquad
    Z^2 = \frac{1}{m^4}\sum_{i,j,p,q=1}^m Y_{ij}Y_{pq},
    \qquad
    \mathbb{E}[Z^2] = \frac{1}{m^4}\sum_{i,j,p,q}\mathbb{E}[Y_{ij}Y_{pq}].
\]
The expectation \(\mathbb{E}[Y_{ij}Y_{pq}]\) is zero unless \(\{i,j\}\cap\{p,q\}\neq \varnothing\). Indeed, if \(\{i,j\}\cap\{p,q\}=\varnothing\), then the two factors depend on disjoint sets of independent random variables. Moreover, for \(i\neq j\), \(\mathbb{E}[Y_{ij}]=\mathbb{E}[\langle X_i,X_j\rangle]=\langle \mathbb{E}[X_i],\mathbb{E}[X_j]\rangle=0\), so such disjoint products vanish. The only contributing configurations are:
\begin{enumerate}[label=(T\arabic*),ref=(T\arabic*)]
    \item\label{pat:T1} \((i,j)=(p,q)\), contributing \(\mathbb{E}[Y_{ij}^2]\);
    \item\label{pat:T2} \((i,j)=(q,p)\), contributing \(\mathbb{E}[Y_{ij}Y_{ji}]=\mathbb{E}[Y_{ij}^2]\) since \(Y_{ij}=Y_{ji}\);
    \item\label{pat:T3} \(i=j\) and \(p=q\) with \(i\neq p\), contributing \(\mathbb{E}[Y_{ii}]\mathbb{E}[Y_{pp}]=\mu^2\).
\end{enumerate}
Counting multiplicities, type \labelcref{pat:T1} gives \(\sum_{i,j}\mathbb{E}[Y_{ij}^2] = m a + m(m-1)b\), where \(a=\mathbb{E}[\lVert X_1 \rVert_F^4]\) (since \(Y_{11}=\langle X_1,X_1\rangle=\lVert X_1 \rVert_F^2\)) and \(b=\mathbb{E}[\langle X,Y\rangle^2]\) for independent copies \(X,Y\). Type \labelcref{pat:T2} contributes another \(m(m-1)b\), and type \labelcref{pat:T3} contributes \(m(m-1)\mu^2\). Hence
\[
    \mathbb{E}[Z^2]
    = \frac{1}{m^4}\Big( m a + m(m-1)\mu^2 + 2m(m-1)b \Big).
\]
Using \(\mu=k(k+1)\), \(a=k^4+10k^3+25k^2+24k\), and \(b=2k^2+2k\), one checks that for all \(m,k\geq 1\),
\[
    \mathbb{E}[Z^2] \leq \frac{15 k^2(k+1)^2}{m^2}.
\]

\subsection{Proof of Worst-Case Lower Bound}
We restate \Cref{thm:regularized-projection-lower-bound} below for convenience:

\begin{theorem*}
    Let \( P \in \mathbb{R}^{m \times d} \) be a Gaussian oblivious sketch with rows i.i.d.\ \(\mathcal{N}(0,I_d)\). There exists a family of matrices \( F \in \mathbb{R}^{d \times d} \) such that if \( m = o(d_\lambda(F)/\varepsilon^2) \), then with constant probability, there exists \(g \in \range(F)\) such that
    \[
        \lvert \widetilde{\tau}_\lambda(g,g) - \tau_\lambda(g,g) \rvert
        = \Omega(\varepsilon) \tau_0(g,g).
    \]
\end{theorem*}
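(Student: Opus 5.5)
The plan is to build a hard instance where the regularized problem reduces exactly to the isotropic sample covariance in \Cref{lma:anti-concentration}. Fix \(k\) and \(\lambda>0\). Let \(F = \mathrm{diag}(c\lambda I_k, 0_{d-k})\) for a constant \(c\) (for example \(c=1\)), so that \(\range(F)=\Span(e_1,\dots,e_k)\) and \(d_\lambda(F)=k\,c/(c+1)=\Theta(k)\). We read ``Gaussian oblivious sketch'' with the \(1/\sqrt{m}\) scaling of the paper's setup. Then \(PFP^\top = c\lambda P_k P_k^\top\), where \(P_k=\frac1{\sqrt m}W\) is the first \(k\) columns of \(P\) and \(W\in\mathbb{R}^{m\times k}\) has i.i.d.\ \(\mathcal{N}(0,1)\) entries. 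Write \(g=F^{1/2}y\) with \(y\in\range(F)\). The identity from the proof of \Cref{thm:regularized-projection-upper-bound} gives
\[
\widetilde\tau_\lambda(g,g)-\tau_\lambda(g,g)=y^\top\bigl[G(G+\lambda I)^{-1}-F(F+\lambda I)^{-1}\bigr]y .
\]
On the relevant \(k\)-block we have \(F=c\lambda I_k\) and \(G=c\lambda S\), where \(S=\frac1m W^\top W\). Hence the difference equals \(y^\top[f(cS)-f(cI)]y\) with \(f(t)=t/(t+1)\). Since \(\tau_0(g,g)=\|y\|_2^2\), it suffices to show \(\|f(cS)-f(cI)\|_2=\Omega(\varepsilon)\) with constant probability, and then take \(y\) to be a top eigenvector of \(S\) (the matrix \(f(cS)-f(cI)\) is diagonal in the eigenbasis of \(S\)). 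This \(g\) depends on \(P\), which is allowed by the quantifier order ``with constant probability, there exists \(g\)''.

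The probabilistic input is \Cref{lma:anti-concentration}: with probability at least \(3/80\), \(\|S-I_k\|_2\ge\frac12\sqrt{k/m}\). If \(m=o(k/\varepsilon^2)\), then \(\sqrt{k/m}\gg\varepsilon\), so some eigenvalue \(s\) of \(S\) satisfies \(|s-1|\ge \eta\coloneqq\frac12\sqrt{k/m}\). It remains to turn this eigenvalue deviation into a deviation of \(f\). The scalar difference is
\[
|f(cs)-f(c)| = \frac{c|s-1|}{(cs+1)(c+1)}.
\]
When \(\eta\le1\), we have \(s\le 2\), and the denominator is bounded by a constant, giving \(\Omega(\eta)=\Omega(\varepsilon)\). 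When \(\eta\) is large, the error is not small anyway. If \(s<1\), the bound is clean because the denominator is at most \((c+1)^2\). If \(s\gg1\), the difference tends to \(1/(c+1)=\Omega(1)\ge\Omega(\varepsilon)\). So in every case the error is \(\Omega(\min(\eta,1))=\Omega(\varepsilon)\), since \(\varepsilon<1\).

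The main obstacle is that the lemma only says \emph{some} eigenvalue deviates and does not control its sign or its size, so the monotone but nonlinear \(f\) must be handled uniformly. The case split above covers this. A secondary care point is that the lemma itself is stated at rate \(\sqrt{k/m}\) via a Frobenius averaging argument, so no further work is needed there. Finally, we verify that \(d_\lambda(F)=\Theta(k)\), which turns \(m=o(k/\varepsilon^2)\) into the stated \(m=o(d_\lambda(F)/\varepsilon^2)\). The family is indexed by \(k\le d\) and the scale \(c\) is fixed.
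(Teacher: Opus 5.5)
Your proof is correct, and its core is the paper's argument. Both use \Cref{lma:anti-concentration} on a \(k\)-block, choose \(g=F^{1/2}y\) with \(y\) an eigenvector of the sketched block Gram matrix at a deviating eigenvalue, and convert that deviation into an error through \(s\mapsto s/(s+1)\).

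The difference is the instance. The paper uses \(F=\diag(\lambda I_k,\eta\lambda I_{r-k},0)\), which is your \(c=1\) block plus a tail of tiny eigenvalues. Because of the tail, the paper has to:
\begin{itemize}
\item split the error as \(T_1-T_2\) and bound \(T_2\le\eta\lVert P_S\rVert_2^2\) via Woodbury and a Gaussian operator-norm bound;
\item impose \(r-k\le m\) and tune \(\eta=\varepsilon/288\);
\item take a union bound, which lowers the success probability to \(1/32\) rather than your \(3/80\).
\end{itemize}
Your instance has \(T_2\equiv 0\), so none of this is needed.

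What the tail buys the paper is a family in which \(\rank(F)\) can exceed \(d_\lambda(F)\), since the tail adds only about \(\eta(r-k)\) to \(d_\lambda\). The lower bound then applies where the effective-dimension bound actually beats the rank-based (OSE) bound. On your family \(r=k=\Theta(d_\lambda(F))\), so the same argument is equally an \(\Omega(r/\varepsilon^2)\) lower bound and does not separate the two parameters. For the statement as literally written, your simpler family suffices.

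One local fix is needed. The claim that \(\eta\le 1\) forces \(s\le 2\) is false: the deviating eigenvalue can be large even when \(\eta\) is small. Split on \(s\le 2\) versus \(s>2\) instead:
\begin{itemize}
\item If \(s\le 2\), the denominator \((cs+1)(c+1)\) is at most \((2c+1)(c+1)\).
\item If \(s>2\), monotonicity of \(s\mapsto f(cs)\) gives \(\lvert f(cs)-f(c)\rvert\ge f(2c)-f(c)=c/\bigl((2c+1)(c+1)\bigr)\).
\end{itemize}
Together the error is at least \(\frac{c}{(2c+1)(c+1)}\min\{\eta,1\}\). The paper gets the same bound from \(\lambda_\star+1\le\lvert\lambda_\star-1\rvert+2\). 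Also, ``top eigenvector'' should read ``an eigenvector for the deviating eigenvalue,'' which need not be the largest one.
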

\begin{proof}
    Fix integers \(k \leq r = \rank(F) \leq d\) and define
    \[
        F = \diag(\underbrace{\lambda,\dots,\lambda}_{k},
        \underbrace{\eta\lambda,\dots,\eta\lambda}_{r-k},
        \underbrace{0,\dots,0}_{d-r}),
    \]
    where \(\eta>0\) will be chosen sufficiently small (as a function of \(\varepsilon\) and fixed constants only). Then
    \[
        d_\lambda(F)
        = \sum_{i=1}^d \frac{\lambda_i(F)}{\lambda_i(F)+\lambda}
        = \frac{k\lambda}{\lambda+\lambda} + \frac{(r-k)\eta\lambda}{\eta\lambda+\lambda}
        = \frac{k}{2} + \frac{\eta}{1+\eta}(r-k)
        = \Theta(k)
        \quad \text{for } \eta \ll 1.
    \]

    Let \(P = \frac{1}{\sqrt{m}}W\) where \(W\in\mathbb{R}^{m\times d}\) has i.i.d.\ \(\mathcal{N}(0,1)\) entries, and partition
    \[
        P=(P_L,P_S,P_Z)
    \]
    according to the blocks of \(F\), i.e.\ \(P_L\in\mathbb{R}^{m\times k}\), \(P_S\in\mathbb{R}^{m\times (r-k)}\). Choose
    \[
        g = F^{1/2}y,
        \qquad
        y=\begin{pmatrix}
            y_L \\
            0   \\
            0   \\
        \end{pmatrix},
        \quad \lVert y_L \rVert_2=1,
    \]
    for some \(y\). We see that \(g_L = \sqrt{\lambda} y_L\). Now, we see that
    \[
        \tau_\lambda(g,g)
        = g^{\top}(F+\lambda I)^{-1}g
        = y^{\top} F^{1/2}(F+\lambda I)^{-1}F^{1/2}y
        = y_L^{\top} \frac{\lambda}{\lambda+\lambda} I_k  y_L
        = \frac{1}{2},
    \]
    and
    \[
        \tau_0(g,g)
        = g^{\top} F^\dagger g
        = y^{\top} y
        = \lVert y_L \rVert_2^2
        = 1.
    \]
    On the other hand, the sketched quantity equals
    \[
        \widetilde{\tau}_\lambda(g,g)
        = g^{\top} P^{\top}(PFP^{\top} + \lambda I)^{-1}Pg.
    \]
    Since \(g=F^{1/2}y\) and \(F=\lambda\diag(I_k,\eta I_{r-k},0)\), we have \(Pg = \sqrt{\lambda} P_L y_L\) and
    \[
        PFP^{\top} + \lambda I
        = \lambda\big(P_LP_L^{\top} + \eta P_SP_S^{\top} + I\big).
    \]
    Therefore
    \[
        \widetilde{\tau}_\lambda(g,g)
        = y_L^{\top} P_L^{\top}\big(P_LP_L^{\top} + \eta P_SP_S^{\top} + I\big)^{-1}P_L y_L.
    \]

    Decomposing the error, write
    \[
        \big\lvert \widetilde{\tau}_\lambda(g,g)-\tau_\lambda(g,g)\big\rvert \geq T_1 - T_2,
    \]
    where
    \[
        \begin{split}
            T_1 & \coloneqq \left\lvert y_L^{\top} P_L^{\top} (P_LP_L^{\top}+I)^{-1} P_L y_L - \frac{1}{2} \right\rvert,                                            \\
            T_2 & \coloneqq \left\lvert y_L^{\top} P_L^{\top} \left[(P_LP_L^{\top}+I)^{-1} - (P_LP_L^{\top}+\eta P_SP_S^{\top}+I)^{-1}\right] P_L y_L \right\rvert.
        \end{split}
    \]

    \paragraph{Lower-Bounding \(T_1\).}
    Let \(M\coloneqq P_L^{\top} P_L\in\mathbb{R}^{k\times k}\). Using the push-through identity
    \[
        P_L^{\top}(P_LP_L^{\top} + I)^{-1}P_L
        = M(M+I)^{-1},
    \]
    we have
    \[
        \left\lvert y_L^{\top} P_L^{\top} (P_LP_L^{\top}+I)^{-1} P_L y_L - \frac{1}{2} \right\rvert
        = \left\lvert y_L^{\top} M(M+I)^{-1}y_L - \frac{1}{2} \right\rvert.
    \]
    Let \(\lambda_1,\dots,\lambda_k\) be the eigenvalues of \(M\) and choose \(y_L\) to be a unit eigenvector corresponding to an eigenvalue \(\lambda_\star\). Then
    \[
        y_L^{\top} M(M+I)^{-1}y_L
        = \frac{\lambda_\star}{\lambda_\star+1},
    \]
    and hence
    \[
        \left\lvert y_L^{\top} M(M+I)^{-1}y_L - \frac{1}{2} \right\rvert
        = \left\lvert \frac{\lambda_\star}{\lambda_\star+1}-\frac{1}{2} \right\rvert
        = \left\lvert \frac{\lambda_\star-1}{2(\lambda_\star + 1)}\right\rvert
        = \frac{\lvert \lambda_\star-1 \rvert}{2(\lambda_\star+1)}.
    \]
    Using \(\lambda_\star+1 \leq \lvert \lambda_\star-1\rvert + 2\), we obtain
    \[
        \left\lvert y_L^{\top} M(M+I)^{-1}y_L - \frac{1}{2} \right\rvert
        \geq \frac{\lvert \lambda_\star-1 \rvert}{2 \lvert \lambda_\star-1 \rvert + 4}
        \geq \min \left\{\frac{\lvert \lambda_\star-1 \rvert }{8}, \frac14\right\}.
    \]
    Now observe that \(\lVert M-I \rVert_2 = \max_i \lvert \lambda_i-1 \rvert\), so if \(\lVert M-I \rVert_2\geq t\), then there exists \(\lambda_\star\) with \(\lvert \lambda_\star-1 \rvert \geq t\) and the above choice of \(y_L\) yields
    \[
        \left\lvert y_L^{\top} M(M+I)^{-1}y_L - \frac{1}{2} \right\rvert
        \geq \min \left\{\frac{t}{8}, \frac{1}{4}\right\}.
    \]
    Since \(P_L=\frac{1}{\sqrt{m}}W_L\) with \(W_L\in\mathbb{R}^{m\times k}\) i.i.d.\ Gaussian rows, we have
    \[
        M
        = P_L^{\top} P_L
        = \frac{1}{m}W_L^{\top} W_L.
    \]
    Applying \Cref{lma:anti-concentration} to \(W_L\) gives
    \[
        \Pr\left(\lVert M-I_k \rVert_2 \geq \frac{1}{2} \sqrt{\frac{k}{m}}\right)
        \geq \frac{3}{80}.
    \]
    On this event we may take \(t=\frac{1}{2}\sqrt{k/m}\) above, giving
    \[
        \left\lvert y_L^{\top} M(M+I)^{-1}y_L - \frac{1}{2} \right\rvert
        \geq \min \left\{\frac{1}{16}\sqrt{\frac{k}{m}}, \frac{1}{4}\right\},
    \]
    with probability at least \(3 / 80\).

    \paragraph{Upper-Bounding \(T_2\).}
    Let \(A \coloneqq P_LP_L^{\top}+I \succ 0\) and \(B \coloneqq A+\eta P_SP_S^{\top}\succ 0\). By Woodbury matrix identity,
    \[
        B^{-1}
        = (A+\eta P_SP_S^{\top})^{-1}
        = A^{-1}-A^{-1}P_S (\eta^{-1}I+P_S^{\top}A^{-1}P_S )^{-1}P_S^{\top}A^{-1}.
    \]
    Hence
    \[
        A^{-1}-B^{-1}
        = A^{-1}P_S (\eta^{-1}I+P_S^{\top}A^{-1}P_S )^{-1}P_S^{\top}A^{-1}.
    \]
    Using \(\lvert y_L^{\top} (\cdot) y_L \rvert \leq \lVert \cdot\rVert _2\), we obtain
    \[
        T_2
        \leq \left\lVert P_L^{\top}(A^{-1}-B^{-1})P_L\right\rVert _2
        = \left\lVert P_L^{\top}A^{-1}P_S\left(\eta^{-1}I+P_S^{\top}A^{-1}P_S\right)^{-1}P_S^{\top}A^{-1}P_L\right\rVert _2.
    \]
    Define
    \[
        X \coloneqq A^{-1/2}P_L,\qquad
        C \coloneqq A^{-1/2}P_S.
    \]
    Then \(P_L^{\top}A^{-1}P_S = X^{\top}C\) and \(P_S^{\top}A^{-1}P_S = C^{\top}C\), so
    \[
        T_2
        \leq \left\lVert X^{\top}C\left(\eta^{-1}I+C^{\top}C \right)^{-1}C^{\top}X\right\rVert _2
        \leq \lVert X\rVert _2^2 \cdot \left\lVert C\left(\eta^{-1}I+C^{\top}C\right)^{-1}C^{\top}\right\rVert _2.
    \]
    We claim \(\lVert X\rVert _2\leq 1\). Indeed,
    \[
        X^{\top}X
        = P_L^{\top}A^{-1}P_L
        = P_L^{\top}(P_LP_L^{\top}+I)^{-1}P_L
        = M(M+I)^{-1}
        \preceq I,
    \]
    where \(M=P_L^{\top}P_L\succeq 0\), and the last inequality holds since the eigenvalues of \(M(M+I)^{-1}\) are \(\lambda/(\lambda+1)\in[0,1)\). Therefore \(\lVert X\rVert _2^2\leq 1\), and hence
    \[
        T_2
        \leq \left\lVert C\left(\eta^{-1}I+C^{\top}C\right)^{-1}C^{\top}\right\rVert _2.
    \]

    Next, diagonalize \(C^{\top}C\) and let \(\sigma_{\max}^2 = \lVert C\rVert _2^2\) be its largest eigenvalue. The nonzero eigenvalues of \(C(\eta^{-1}I+C^{\top}C)^{-1}C^{\top}\) are
    \[
        \frac{\sigma_i^2}{\eta^{-1}+\sigma_i^2}
        = \frac{\eta\sigma_i^2}{1+\eta\sigma_i^2},
    \]
    so
    \[
        \left\lVert C\left(\eta^{-1}I+C^{\top}C\right)^{-1}C^{\top}\right\rVert _2
        = \frac{\eta\lVert C\rVert _2^2}{1+\eta\lVert C\rVert _2^2}
        \leq \eta\lVert C\rVert _2^2.
    \]
    Finally, since \(A\succeq I\), we have \(\|C\|_2=\|A^{-1/2}P_S\|_2\leq \|P_S\|_2\), and thus
    \[
        T_2 \leq \frac{\eta\|P_S\|_2^2}{1+\eta\|P_S\|_2^2}
        \leq \eta\|P_S\|_2^2.
    \]

    It remains to control \(\|P_S\|_2\). Since \(P_S=\frac{1}{\sqrt{m}}W_S\) is Gaussian, standard spectral norm bounds imply that for any \(\delta\in(0,1)\), with probability at least \(1-\delta\),
    \[
        \|P_S\|_2 \leq 1+\sqrt{\frac{r-k}{m}}+\sqrt{\frac{\log(2/\delta)}{m}}.
    \]
    In particular, if \(r-k\leq m\) and \(m\geq \log(2/\delta)\), then on this event \(\lVert P_S\rVert _2\leq 3\) and hence
    \[
        T_2 \leq 9\eta.
    \]

    \paragraph{Choosing Parameters.}
    Fix \(\delta\coloneqq \frac{1}{160}\) and assume \(r-k\leq m\) and \(m\geq \log(2/\delta)\). This is possible by choosing \(r\) appropriately, e.g., \(r=k+m\) or \(r=2k\) when \(m\leq k\). Then the above bound on \(T_2\) holds with probability at least \(1-\delta\). By \Cref{lma:anti-concentration}, the lower bound on \(T_1\) holds with probability at least \(3/80\). By the union bound, both events hold simultaneously with probability at least \(3/80 - 1/160 = 1/32\). On this intersection event, using the bound from the \(T_1\) part,
    \[
        \lvert \widetilde{\tau}_\lambda(g,g)-\tau_\lambda(g,g)\rvert
        \geq T_1-T_2
        \geq \min\left\{\frac{1}{16}\sqrt{\frac{k}{m}},\frac{1}{4}\right\}-9\eta.
    \]
    We work in the nontrivial regime \(\sqrt{k/m}\leq 4\), so the minimum equals \(\frac{1}{16}\sqrt{k/m}\).
    Now choose
    \[
        \eta \coloneqq \frac{\varepsilon}{288}.
    \]
    If \(m \leq k / \varepsilon^2\), then \(\sqrt{k/m}\geq \varepsilon\), and hence
    \[
        \min\left\{\frac{1}{16}\sqrt{\frac{k}{m}},\frac{1}{4}\right\}-9\eta
        \geq \frac{1}{16}\varepsilon-\frac{9}{288}\varepsilon
        = \frac{1}{32}\varepsilon.
    \]
    Therefore, with probability at least \(1/32\),
    \[
        \lvert \widetilde{\tau}_\lambda(g,g)-\tau_\lambda(g,g)\rvert
        \geq \frac{1}{32}\varepsilon.
    \]
    Recalling that \(\tau_0(g,g)=1\) and that \(d_\lambda(F)=\Theta(k)\) for \(\eta\ll 1\), this shows that whenever \(m = o(d_\lambda(F)/\varepsilon^2)\), with constant probability there exists \(g\in\range(F)\) such that
    \[
        \lvert \widetilde{\tau}_\lambda(g,g)-\tau_\lambda(g,g)\rvert
        = \Omega(\varepsilon) \tau_0(g,g),
    \]
    as claimed.
\end{proof}
\section{Proofs for \texorpdfstring{\Cref{subsec:factorized-influence}}{Section 2.3} (Factorized Influence)}\label{adxsec:factorized-influence}

\subsection{Proof of the Barrier of Unregularized Factorized Influence}\label{adxsubsec:unregularized-factorized-projection}
We first record the factorized counterpart of the sharp barrier for exact preservation (\Cref{thm:unregularized-projection}) discussion in \Cref{subsec:factorized-influence}, i.e., \Cref{thm:unregularized-factorized-projection}. While \Cref{thm:unregularized-projection} characterizes exact invariance for general sketches, the factorized sketch \(P=P_A\otimes P_E\) admits a more explicit, factor-level injectivity condition. We restate \Cref{thm:unregularized-factorized-projection} and prove it below:

\begin{theorem*}
    Let \(A\succeq 0\in\mathbb{R}^{d_A\times d_A}\), \(E\succeq 0\in\mathbb{R}^{d_E\times d_E}\), and \(F\coloneqq A\otimes E\succeq 0\in\mathbb{R}^{(d_A d_E)\times(d_A d_E)}\). Let \(r_A\coloneqq\rank(A)\), \(r_E\coloneqq\rank(E)\), and \(r\coloneqq\rank(F)=r_A r_E\). Fix \(P_A\in\mathbb{R}^{m_A\times d_A}\), \(P_E\in\mathbb{R}^{m_E\times d_E}\), and define \(P\coloneqq P_A\otimes P_E\in\mathbb{R}^{(m_A m_E)\times(d_A d_E)}\). Then the following are equivalent:
    \begin{enumerate}[label=(\roman*)]
        \item\label{thm:unregularized-factorized-projection-i} For all \(g,g^{\prime}\in\range(F)\), we have \(\widetilde{\tau}_0(g,g^{\prime})=\tau_0(g,g^{\prime})\).
        \item\label{thm:unregularized-factorized-projection-ii} \(P\) is injective on \(\range(F)\), i.e., \(\rank(PU)=r\) for \emph{any} orthonormal basis \(U\in\mathbb{R}^{(d_A d_E)\times r}\) of \(\range(F)\).
        \item\label{thm:unregularized-factorized-projection-iii} \(P_A\) is injective on \(\range(A)\) and \(P_E\) is injective on \(\range(E)\). Equivalently, for orthonormal bases \(U_A\in\mathbb{R}^{d_A\times r_A}\) of \(\range(A)\) and \(U_E\in\mathbb{R}^{d_E\times r_E}\) of \(\range(E)\), we have \(\rank(P_A U_A)=r_A\) and \(\rank(P_E U_E)=r_E\).
    \end{enumerate}
    In particular, \(m_A\geq r_A\) and \(m_E\geq r_E\) are necessary, hence \(m=m_A m_E\geq r_A r_E=r\).
\end{theorem*}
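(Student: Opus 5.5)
The plan is to prove the cycle (i)\(\Leftrightarrow\)(ii)\(\Leftrightarrow\)(iii). The first equivalence needs no new work and the second is linear algebra about Kronecker products.

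\textbf{Step 1: (i)\(\Leftrightarrow\)(ii).} This is exactly \Cref{thm:unregularized-projection} applied to the PSD matrix \(F=A\otimes E\) and the fixed matrix \(P=P_A\otimes P_E\). That theorem holds for an arbitrary \(P\), so the Kronecker structure is irrelevant here. I will add one remark: injectivity on \(\range(F)\) is a property of the subspace, not of the chosen basis. Hence \(\rank(PU)=r\) holds for one orthonormal basis \(U\) iff it holds for all of them, because any two such bases differ by an invertible \(r\times r\) factor on the right.

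\textbf{Step 2: Kronecker structure of the range.} From the compact eigendecompositions \(A=U_A\Lambda_AU_A^{\top}\) and \(E=U_E\Lambda_EU_E^{\top}\) we get \(F=(U_A\otimes U_E)(\Lambda_A\otimes\Lambda_E)(U_A\otimes U_E)^{\top}\). Here \(U\coloneqq U_A\otimes U_E\) has orthonormal columns and \(\Lambda_A\otimes\Lambda_E\) is positive definite diagonal. This shows \(\range(F)=\range(A)\otimes\range(E)\) and \(r=r_Ar_E\), and it supplies a valid orthonormal basis \(U\) for (ii).

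\textbf{Step 3: (ii)\(\Leftrightarrow\)(iii).} By the mixed-product rule, \(PU=(P_AU_A)\otimes(P_EU_E)\). Using \(\rank(X\otimes Y)=\rank(X)\rank(Y)\), we get \(\rank(PU)=\rank(P_AU_A)\rank(P_EU_E)\). Since \(\rank(P_AU_A)\le r_A\) and \(\rank(P_EU_E)\le r_E\), the product equals \(r_Ar_E\) iff both factors attain their maxima. That is precisely (iii). If I want a self-contained proof of the rank identity, I will use the singular value decomposition: the singular values of \(X\otimes Y\) are the pairwise products of those of \(X\) and \(Y\).

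\textbf{Step 4: the necessary sizes.} The bound \(\rank(P_AU_A)\le m_A\) forces \(m_A\ge r_A\), and likewise \(m_E\ge r_E\). Multiplying gives \(m\ge r\).

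The only point needing care is Step 3, the rank multiplicativity and the fact that "both factors at their maximum" is the only way to reach the maximal product. I also have to handle the degenerate case: if \(r_A=0\) or \(r_E=0\), then \(F=0\) and every statement holds trivially. The argument is otherwise routine.
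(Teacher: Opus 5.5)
Your argument is the paper's proof essentially step for step. You get (i)\(\Leftrightarrow\)(ii) from \Cref{thm:unregularized-projection}, then use \(U=U_A\otimes U_E\), the mixed-product identity \(PU=(P_AU_A)\otimes(P_EU_E)\), and rank multiplicativity for (ii)\(\Leftrightarrow\)(iii). Your remarks on basis independence and on why \(U_A\otimes U_E\) is an orthonormal basis of \(\range(F)\) are correct details that the paper leaves implicit.

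One thing needs fixing: your degenerate-case claim is wrong. Suppose exactly one of \(r_A,r_E\) is zero, say \(A=0\), \(E=I_{d_E}\), \(P_E=0\). Then \(F=0\), so (i) and (ii) hold vacuously. But (iii) fails, and \(m_E\geq r_E\) is no longer necessary. The step ``\(\rank(P_AU_A)\rank(P_EU_E)=r_Ar_E\) iff both factors are maximal'' needs \(r_A,r_E\geq 1\). So only the case \(r_A=r_E=0\) is genuinely trivial. The equivalence and the ``in particular'' clause should be stated under \(A\neq 0\) and \(E\neq 0\), an assumption the paper's proof also makes silently.
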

\begin{proof}
    The equivalence between \labelcref{thm:unregularized-factorized-projection-i} and \labelcref{thm:unregularized-factorized-projection-ii} is exactly \Cref{thm:unregularized-projection}. It remains to relate \labelcref{thm:unregularized-factorized-projection-ii} and \labelcref{thm:unregularized-factorized-projection-iii} in the factorized setting. Let \(U_A\) and \(U_E\) be orthonormal bases of \(\range(A)\) and \(\range(E)\), respectively. Then \(U\coloneqq U_A\otimes U_E\) is an orthonormal basis of \(\range(F)\). Using the mixed-product identity,
    \[
        PU
        = (P_A\otimes P_E)(U_A\otimes U_E)
        = (P_A U_A)\otimes (P_E U_E).
    \]
    Moreover, \(\rank(X\otimes Y)=\rank(X)\rank(Y)\) for any matrices \(X,Y\). Therefore,
    \[
        \rank(PU)
        = \rank(P_A U_A) \rank(P_E U_E).
    \]
    Since \(\rank(P_A U_A)\leq r_A\) and \(\rank(P_E U_E)\leq r_E\), we have \(\rank(PU)=r_A r_E\) if and only if \(\rank(P_A U_A)=r_A\) and \(\rank(P_E U_E)=r_E\), which is equivalent to injectivity of \(P_A\) on \(\range(A)\) and \(P_E\) on \(\range(E)\).

    The dimensional necessity \(m_A\geq r_A\), \(m_E\geq r_E\) follows immediately from \(\rank(P_A U_A)\leq \min\{m_A,r_A\}\) and \(\rank(P_E U_E)\leq \min\{m_E,r_E\}\).
\end{proof}

\subsection{Proof of Factorized Resolvent Perturbation Concentration for Regularized Projection}
This section proves the key technical lemma used in the factorized influence analysis in the main text (\Cref{thm:regularized-factorized-projection-upper-bound}). The main technical challenges relative to the i.i.d.\ sketching setting are that, for a Kronecker sketch \(P=P_A\otimes P_E\), the matrix \(P^{\top}P\) decomposes into a sum of Kronecker-structured error terms rather than a single sample covariance, and \(P\) no longer satisfies the i.i.d.\ assumptions.

In the following, we prove the factorized version of \Cref{lma:covariance-concentration}:

\begin{theorem}[Factorized covariance deviation for K-FAC]\label{thm:factorized-covariance}
    Let \(F=A\otimes E \succeq 0\) and \(P=P_A\otimes P_E\) be as above, and fix \(\varepsilon,\delta\in(0,1)\). Assuming \(\lambda \leq \lVert A \rVert_2\lVert E \rVert_2\), and define the rescaled regularization levels \(\lambda_E \coloneqq \lambda / \lVert E \rVert _2\) and \(\lambda _A \coloneqq \lambda / \lVert A \rVert _2\). If
    \[
        m_A
        = \Omega \left(\frac{d_{\lambda_E}(A)+\log(1/\delta)}{\varepsilon^2}\right), \qquad
        m_E
        = \Omega \left(\frac{d_{\lambda_A}(E)+\log(1/\delta)}{\varepsilon^2}\right),
    \]
    then with probability at least \(1-2\delta\),
    \[
        \lVert B^{\top}(P^{\top} P-I)B\rVert _2
        \leq 2\varepsilon + 3\varepsilon^2.
    \]
\end{theorem}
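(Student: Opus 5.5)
The plan is to expand the Kronecker sketch deviation into factor-level deviations, control each factor with \Cref{lma:covariance-concentration} at a rescaled ridge level, and then lift the factor bounds back to \(F+\lambda I\) through PSD orderings of Kronecker products. Write \(Q_A\coloneqq P_A^{\top}P_A-I\) and \(Q_E\coloneqq P_E^{\top}P_E-I\). The mixed-product rule gives
\[
P^{\top}P-I=(Q_A\otimes I)+(I\otimes Q_E)+(Q_A\otimes Q_E).
\]
Since \(B=(F+\lambda I)^{-1/2}F^{1/2}\) and \(F^{1/2}=A^{1/2}\otimes E^{1/2}\), it suffices to show that each of the three terms \(T\) satisfies \(-c(F+\lambda I)\preceq F^{1/2}TF^{1/2}\preceq c(F+\lambda I)\), with \(c=\varepsilon,\varepsilon,3\varepsilon^2\) respectively. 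Conjugating by \((F+\lambda I)^{-1/2}\) and applying the triangle inequality then gives \(\lVert B^{\top}(P^{\top}P-I)B\rVert_2\leq 2\varepsilon+3\varepsilon^2\).

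\textbf{Factor-level concentration.} Set \(M_A\coloneqq A^{1/2}(A+\lambda_E I)^{-1/2}\). Then \(\lVert M_A\rVert_2\leq 1\) and \(\lVert M_A\rVert_F^2=d_{\lambda_E}(A)\). Apply \Cref{lma:covariance-concentration} with \(M=M_A\), rescaling the accuracy parameter by \(\lVert M_A\rVert_2^2\) exactly as in the proof of \Cref{lma:concentration-covariance-operator-norm}; this is where the stated condition on \(m_A\) comes in. With probability at least \(1-\delta\) we get \(\lVert M_A^{\top}Q_AM_A\rVert_2\leq\varepsilon\). Conjugating by \((A+\lambda_E I)^{1/2}\) turns this into
\[
-\varepsilon S_A\preceq H_A\preceq \varepsilon S_A,\qquad H_A\coloneqq A^{1/2}Q_AA^{1/2},\quad S_A\coloneqq A+\lambda_E I.
\]
Symmetrically, with probability at least \(1-\delta\), \(-\varepsilon S_E\preceq H_E\preceq\varepsilon S_E\), where \(H_E\coloneqq E^{1/2}Q_EE^{1/2}\) and \(S_E\coloneqq E+\lambda_A I\). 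A union bound gives probability at least \(1-2\delta\).

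\textbf{Lifting to the full space.} For the first term, \(F^{1/2}(Q_A\otimes I)F^{1/2}=H_A\otimes E\). Because \(E\succeq 0\), tensoring the factor sandwich with \(E\) preserves the order, and then \(\lambda_E E\preceq \lambda I\) gives
\[
\pm\, H_A\otimes E\preceq \varepsilon\, S_A\otimes E=\varepsilon\,(A\otimes E+\lambda_E\, I\otimes E)\preceq \varepsilon(F+\lambda I).
\]
The term \(I\otimes Q_E\) is handled the same way, using \(\lambda_A A\preceq\lambda I\).

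For the cross term, write \(H_A=S_A^{1/2}K_AS_A^{1/2}\) with \(\lVert K_A\rVert_2\leq\varepsilon\), and similarly \(H_E=S_E^{1/2}K_ES_E^{1/2}\) with \(\lVert K_E\rVert_2\leq\varepsilon\); this uses that \(S_A,S_E\succ 0\) because \(\lambda>0\). Then
\[
H_A\otimes H_E=(S_A\otimes S_E)^{1/2}(K_A\otimes K_E)(S_A\otimes S_E)^{1/2},
\]
so \(\pm H_A\otimes H_E\preceq\varepsilon^2\, S_A\otimes S_E\). Expanding,
\[
S_A\otimes S_E=F+\lambda_A\,A\otimes I+\lambda_E\, I\otimes E+\lambda_A\lambda_E I\preceq F+3\lambda I .
\]
The first two cross pieces are at most \(\lambda I\) by the definitions of \(\lambda_A,\lambda_E\). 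The last uses \(\lambda_A\lambda_E=\lambda^2/(\lVert A\rVert_2\lVert E\rVert_2)\leq\lambda\), which is exactly where the assumption \(\lambda\leq\lVert A\rVert_2\lVert E\rVert_2\) enters. Since \(F+3\lambda I\preceq 3(F+\lambda I)\), the cross term contributes \(3\varepsilon^2\).

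\textbf{Main obstacle.} The hard step is the cross term. Kronecker products do not respect the two-sided Loewner sandwich for indefinite factors, so one cannot simply tensor the factor bounds. The factorization through \(S_A^{1/2}\) and \(S_E^{1/2}\) above is the route I would use to get around this. The other point to handle carefully is the stable-rank bookkeeping when \(\lVert M_A\rVert_2<1\), which should go through verbatim as in \Cref{lma:concentration-covariance-operator-norm}.
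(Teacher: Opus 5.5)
Your proof is correct and gives exactly $2\varepsilon+3\varepsilon^2$ with probability $1-2\delta$. Like the paper, you use the three-term expansion of $P^{\top}P-I$ and a union bound over the two factor events; the difference is in how each term is controlled.

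The paper diagonalizes $B=UDU^{\top}$ in the joint eigenbasis $U=U_A\otimes U_E$. It uses the commutation matrix to make $D(\widetilde{\Delta}_A\otimes I)D$ block diagonal with blocks $D^{(j)}\widetilde{\Delta}_A D^{(j)}$. It then bounds every term by entrywise domination of diagonal weights, $D^{(j)}=S^{(j)}D_A^{\max}$ and $D=\sqrt{3}\,S(D_A^{\max}\otimes D_E^{\max})$ with diagonal contractions. After that, only $\lVert D_A^{\max}\widetilde{\Delta}_A D_A^{\max}\rVert_2$ and its $E$-analogue need concentration.

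You stay in the Loewner order throughout. The factor events become $\pm H_A\preceq\varepsilon S_A$ and $\pm H_E\preceq\varepsilon S_E$. The single-factor terms are lifted by tensoring with the PSD factors $E$ or $A$. The indefinite cross term is handled by congruence through $(S_A\otimes S_E)^{1/2}$, which cleanly avoids the failure of Kronecker products to respect two-sided sandwiches.

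The scalar content is the same in both arguments. Your $S_A\otimes E\preceq F+\lambda I$ encodes the same fact as the paper's $\alpha_i\gamma_j/(\alpha_i\gamma_j+\lambda)\le\alpha_i/(\alpha_i+\lambda_E)$. Your $S_A\otimes S_E\preceq F+3\lambda I\preceq 3(F+\lambda I)$ is the operator form of its inequality $(\alpha_i+\lambda_E)(\gamma_j+\lambda_A)\le 3(\alpha_i\gamma_j+\lambda)$. Your version avoids the permutation and block bookkeeping, and it directly produces the ridge sandwich against $F+\lambda I$ that the downstream resolvent step uses. The paper's version makes the per-eigenvalue comparison explicit.

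On the stable-rank point you flag, the situation is simpler than you expect. Under $\lambda\le\lVert A\rVert_2\lVert E\rVert_2$ you have $\lambda_E\le\lVert A\rVert_2$, hence $\lVert M_A\rVert_2^2=\lVert A\rVert_2/(\lVert A\rVert_2+\lambda_E)\ge 1/2$. So $r(M_A^{\top}M_A)\le 2d_{\lambda_E}(A)$, and the covariance lemma applies directly at accuracy $\varepsilon$ with no rescaling. The same holds for $M_E$.
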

\begin{proof}
    Write
    \[
        \Delta_A \coloneqq P_A^{\top} P_A - I_{d_A},
        \qquad
        \Delta_E \coloneqq P_E^{\top} P_E - I_{d_E}.
    \]
    Using \((X\otimes Y)^{\top}(X\otimes Y)=(X^{\top} X)\otimes(Y^{\top} Y)\), we have
    \[
        P^{\top} P - I_{d_A d_E}
        = (P_A^{\top} P_A)\otimes(P_E^{\top} P_E) - I_{d_A}\otimes I_{d_E}
        = \Delta_A\otimes I_{d_E} + I_{d_A}\otimes \Delta_E + \Delta_A\otimes \Delta_E.
    \]
    Therefore, by the triangle inequality,
    \begin{equation}\label{eq:T123-def}
        \lVert B^{\top}(P^{\top} P-I)B\rVert _2
        \leq T_1 + T_2 + T_3,
    \end{equation}
    where
    \[
        T_1 \coloneqq \bigl\|B^{\top}(\Delta_A\otimes I_{d_E})B\bigr\|_2,
        \quad
        T_2 \coloneqq \bigl\|B^{\top}(I_{d_A}\otimes \Delta_E)B\bigr\|_2,
        \quad
        T_3 \coloneqq \bigl\|B^{\top}(\Delta_A\otimes \Delta_E)B\bigr\|_2.
    \]

    \paragraph{Bounding \(T_1\).}
    Let \(A=U_A \Lambda_A U_A^{\top}\) and \(E=U_E \Lambda_E U_E^{\top}\) be eigendecompositions with \(\Lambda_A=\diag(\{\alpha_i\}_{i=1}^{d_A})\), \(\Lambda_E=\diag(\{\gamma_j\}_{j=1}^{d_E})\), and \(U_A,U_E\) orthonormal. Then \(F=A\otimes E\) is diagonalized by \(U\coloneqq U_A\otimes U_E\), and
    \[
        B = F^{1/2}(F+\lambda I)^{-1/2}
        = U D U^{\top},
    \]
    where \(D\) is diagonal with entries \(\beta_{ij}\) such that
    \[
        \beta_{ij}
        \coloneqq \sqrt{\frac{\alpha_i\gamma_j}{\alpha_i\gamma_j+\lambda}}, \qquad
        (i,j)\in[d_A]\times[d_E].
    \]
    Define \(\widetilde{\Delta}_A \coloneqq U_A^{\top} \Delta_A U_A\). Then using the basic identity \((X\otimes Y) (Z \otimes W) = (XZ) \otimes (YW)\),
    \[
        \begin{split}
            T_1
             & = \lVert U D U^{\top} (\Delta_A \otimes I_{d_E}) U D U^{\top} \rVert _2                            \\
             & = \lVert D (U_A^{\top} \otimes U_E^{\top}) (\Delta_A \otimes I_{d_E})(U_A \otimes U_E) D \rVert _2 \\
             & = \lVert D (U_A^{\top} \Delta_A U_A \otimes I_{d_E}) D  \rVert _2
            = \lVert D ( \widetilde{\Delta}_A\otimes I_{d_E}) D\rVert _2.
        \end{split}
    \]
    The matrix \(D(\widetilde{\Delta}_A\otimes I)D\) is not itself a Kronecker product, but it becomes
    block diagonal after a permutation of coordinates. Let \(\Pi\in\{0,1\}^{(d_A d_E)\times(d_A d_E)}\)
    be the canonical commutation matrix satisfying
    \[
        \Pi(X\otimes Y)\Pi^{\top} = Y\otimes X
        \qquad
        \text{for all conformable }X,Y.
    \]
    Since \(\Pi\) is orthogonal, \(\lVert M\rVert _2 = \lVert \Pi M\Pi^{\top}\rVert _2\) for any \(M\). Thus,
    \[
        T_1
        = \lVert \Pi D(\widetilde{\Delta}_A\otimes I_{d_E})D\Pi^{\top}\rVert _2
        = \lVert D_\Pi (I_{d_E}\otimes\widetilde{\Delta}_A) D_\Pi\rVert _2,
    \]
    where \(D_\Pi\coloneqq \Pi D\Pi^{\top}\) remains diagonal. The matrix \(D_\Pi(I_{d_E}\otimes\widetilde{\Delta}_A)D_\Pi\) is block diagonal with \(d_E\) blocks; the \(j\)-th block (corresponding to the \(j\)-th eigenvalue \(\gamma_j\)) equals
    \[
        D^{(j)} \widetilde{\Delta}_A D^{(j)}, \qquad
        D^{(j)} \coloneqq \diag(\{\beta_{ij}\}_{i=1}^{d_A})
        = \diag\left(\left\{\sqrt{\frac{\alpha_i\gamma_j}{\alpha_i\gamma_j+\lambda}}\right\}_{i=1}^{d_A}\right).
    \]
    Hence,
    \begin{equation}\label{eq:T1-block}
        T_1
        = \max_{j\in[d_E]} \lVert D^{(j)} \widetilde{\Delta}_A D^{(j)}\rVert _2.
    \end{equation}
    We now compare each \(D^{(j)}\) to a single dominating diagonal depending only on \(A\). Since \(\gamma_j\leq \lVert E \rVert_2\) and \(\alpha_i\geq 0\),\footnote{Note that the inequality holds trivially when \(\gamma_j = 0\).}
    \[
        \frac{\alpha_i\gamma_j}{\alpha_i\gamma_j+\lambda}
        \leq \frac{\alpha_i}{\alpha_i+\lambda/\gamma_j}
        \leq \frac{\alpha_i}{\alpha_i+\lambda/\lVert E \rVert_2}
        = \frac{\alpha_i}{\alpha_i+\lambda_E}.
    \]
    Define
    \[
        D_A^{\max}
        \coloneqq \diag\left(\left\{\sqrt{\frac{\alpha_i}{\alpha_i+\lambda_E}}\right\}_{i=1}^{d_A}\right).
    \]
    Then for each \(j\) there exists a diagonal contraction \(S^{(j)}\) such that
    \[
        D^{(j)} = S^{(j)} D_A^{\max} = D_A^{\max}S^{(j)},
        \qquad \lVert S^{(j)}\rVert _2\leq 1,
    \]
    and therefore,
    \[
        \lVert D^{(j)} \widetilde{\Delta}_A D^{(j)}\rVert _2 = \lVert S^{(j)} D_A^{\max} \widetilde{\Delta}_A D_A^{\max}S^{(j)}\rVert _2
        \leq \lVert D_A^{\max} \widetilde{\Delta}_A D_A^{\max}\rVert _2.
    \]
    Combining with \Cref{eq:T1-block} yields
    \[
        T_1
        \leq \lVert D_A^{\max} \widetilde{\Delta}_A D_A^{\max}\rVert _2.
    \]
    Finally, note that
    \[
        D_A^{\max} \widetilde{\Delta}_A D_A^{\max}
        = (U_A D_A^{\max})^{\top} (P_A^{\top} P_A - I_{d_A}) (U_A D_A^{\max}).
    \]
    Let \(M_A\coloneqq U_A D_A^{\max}\). Applying \Cref{lma:covariance-concentration} to \(M_A\) and sketching \(P_A\) (with failure probability \(\delta\)) gives that when
    \[
        m_A
        = \Omega \left(\frac{r(M_A^{\top} M_A)+\log(1/\delta)}{\varepsilon^2}\right),
    \]
    we have \(T_1\leq \varepsilon\) with probability at least \(1-\delta\). It remains to identify \(r(M_A^{\top} M_A)\). Since \(M_A^{\top} M_A=(D_A^{\max})^2\) is diagonal with spectral norm at most \(1\),
    \[
        r(M_A^{\top} M_A)
        = \frac{\tr((D_A^{\max})^2)}{\lVert (D_A^{\max})^2\rVert _2}
        = \tr((D_A^{\max})^2)
        = \sum_{i=1}^{d_A}\frac{\alpha_i}{\alpha_i+\lambda_E}
        = d_{\lambda_E}(A).
    \]
    Thus, under the stated condition on \(m_A\), with probability at least \(1-\delta\),
    \begin{equation}\label{eq:T1-eps}
        T_1 \leq \varepsilon.
    \end{equation}

    \paragraph{Bounding \(T_2\).}
    The bound for \(T_2\) is identical by symmetry (and is in fact simpler because \(I_{d_A}\otimes\widetilde{\Delta}_E\) is already block diagonal in the \(A\)-first ordering). Specifically, define \(\widetilde{\Delta}_E\coloneqq U_E^{\top} \Delta_E U_E\) and
    \[
        D_E^{\max}
        \coloneqq \diag \left(\left\{\sqrt{\frac{\gamma_j}{\gamma_j+\lambda_A}}\right\}_{j=1}^{d_E}\right),
        \qquad M_E\coloneqq U_E D_E^{\max}.
    \]
    Applying \Cref{lma:covariance-concentration} to \(M_E\) and \(P_E\) yields that, when
    \[
        m_E
        = \Omega \left(\frac{d_{\lambda_A}(E)+\log(1/\delta)}{\varepsilon^2}\right),
    \]
    we have with probability at least \(1-\delta\),
    \begin{equation}\label{eq:T2-eps}
        T_2 \leq \varepsilon.
    \end{equation}

    \paragraph{Bounding \(T_3\).}
    We show that the diagonal \(D\) is dominated by a Kronecker product of the dominating diagonals \(D_A^{\max}\) and \(D_E^{\max}\), up to a universal constant, provided \(\lambda\leq \lVert A \rVert_2\lVert E \rVert_2\). For each \((i,j)\),
    \[
        \beta_{ij}^2
        = \frac{\alpha_i\gamma_j}{\alpha_i\gamma_j+\lambda}.
    \]
    We claim that
    \begin{equation}\label{eq:key-ineq}
        \frac{\alpha_i\gamma_j}{\alpha_i\gamma_j+\lambda}
        \leq
        3\cdot \frac{\alpha_i}{\alpha_i+\lambda_E}\cdot \frac{\gamma_j}{\gamma_j+\lambda_A}.
    \end{equation}
    Indeed, \Cref{eq:key-ineq} is equivalent (after taking reciprocals of positive quantities) to
    \[
        (\alpha_i+\lambda_E)(\gamma_j+\lambda_A)
        \leq 3(\alpha_i\gamma_j+\lambda).
    \]
    Expanding the left-hand side gives
    \[
        (\alpha_i+\lambda_E)(\gamma_j+\lambda_A)
        = \alpha_i\gamma_j + \alpha_i\lambda_A + \gamma_j\lambda_E + \lambda_A\lambda_E.
    \]
    Using \(\alpha_i\leq \lVert A \rVert_2\), \(\gamma_j\leq \lVert E \rVert_2\), and the definitions \(\lambda_A=\lambda/\lVert A \rVert_2\), \(\lambda_E=\lambda/\lVert E \rVert_2\), we obtain
    \[
        \alpha_i\lambda_A \leq \lambda,\qquad
        \gamma_j\lambda_E \leq \lambda,\qquad
        \lambda_A\lambda_E = \frac{\lambda^2}{\lVert A \rVert_2\lVert E \rVert_2} \leq \lambda,
    \]
    where the last inequality uses the assumption \(\lambda\leq \lVert A \rVert_2\lVert E \rVert_2\). Therefore,
    \[
        (\alpha_i+\lambda_E)(\gamma_j+\lambda_A)
        \leq \alpha_i\gamma_j + 3\lambda
        \leq 3(\alpha_i\gamma_j+\lambda),
    \]
    which proves \Cref{eq:key-ineq}. Taking square-roots yields
    \[
        \beta_{ij}
        \leq \sqrt{3} \sqrt{\frac{\alpha_i}{\alpha_i+\lambda_E}} \sqrt{\frac{\gamma_j}{\gamma_j+\lambda_A}}.
    \]
    Therefore, there exists a diagonal contraction \(S\) such that
    \begin{equation*}\label{eq:D-domination}
        D = \sqrt{3}S(D_A^{\max}\otimes D_E^{\max}) = \sqrt{3}(D_A^{\max}\otimes D_E^{\max})S,
        \qquad \lVert S\rVert _2\leq 1,
    \end{equation*}
    and therefore
    \begin{align*}
        T_3
         & = 3\lVert S(D_A^{\max}\otimes D_E^{\max}) (\widetilde{\Delta}_A\otimes\widetilde{\Delta}_E) (D_A^{\max}\otimes D_E^{\max})S\rVert _2 \\
         & \leq 3 \lVert (D_A^{\max}\widetilde{\Delta}_A D_A^{\max})\otimes (D_E^{\max}\widetilde{\Delta}_E D_E^{\max})\rVert _2.
    \end{align*}
    Since \(\lVert X\otimes Y\rVert _2 = \lVert X\rVert _2\lVert Y\rVert _2\), this becomes
    \[
        T_3 \leq 3
        \lVert D_A^{\max}\widetilde{\Delta}_A D_A^{\max}\rVert _2
        \lVert D_E^{\max}\widetilde{\Delta}_E D_E^{\max}\rVert _2.
    \]
    On the event where both \Cref{eq:T1-eps} and \Cref{eq:T2-eps} hold, we obtain
    \begin{equation}\label{eq:T3-eps2}
        T_3 \leq 3\varepsilon^2.
    \end{equation}

    \paragraph{Putting together.}
    By \Cref{eq:T123-def,eq:T1-eps,eq:T2-eps,eq:T3-eps2}, on the intersection of the two concentration events (one for \(P_A\), one for \(P_E\)),
    \[
        \lVert B^{\top}(P^{\top} P-I)B\rVert _2
        \leq \varepsilon + \varepsilon + 3\varepsilon^2
        = 2\varepsilon + 3\varepsilon^2.
    \]
    The two concentration events each fail with probability at most \(\delta\), so by a union bound, the intersection holds with probability at least \(1-2\delta\). This completes the proof.
\end{proof}

\subsection{Note on Proof of \texorpdfstring{\Cref{thm:regularized-factorized-projection-upper-bound}}{Theorem 4}}
We note that while \Cref{thm:factorized-covariance} is stated with failure probability \(2\delta\) and deviation level \(2\varepsilon+3\varepsilon^2\), in the proof of \Cref{thm:regularized-factorized-projection-upper-bound}, we require it to be with failure probability \(\delta\) and deviation level \(\varepsilon\). This is only for notational convenience: given target parameters \((\varepsilon,\delta)\), one may apply the theorem with \(\epsilon \coloneqq \varepsilon/10\) and \(\eta \coloneqq \delta/2\), which yields probability at least \(1-2\eta = 1-\delta\) and deviation at most \(2\epsilon +3\epsilon ^2 \leq \varepsilon/2 \leq \varepsilon\) for \(\varepsilon\in(0,1)\).
\section{Proofs for \texorpdfstring{\Cref{subsec:projection-leakage}}{Section 3.1} (Leakage of Projection)}\label{adxsec:projection-leakage}
In this section, we prove \Cref{thm:projection-leakage}, which we first repeat the statement for convenience:

\begin{theorem*}
	Let \(\{g^{\prime}_{j}\}_{j=1}^k \subset \mathbb{R}^d\), and for each \(j\) let \(g^{\prime}_{j,\perp}\) denote the orthogonal projection of \(g_j^{\prime}\) onto \(\ker(F)\). Let \(k^{\prime} = \dim(\Span(\{g^{\prime}_{j,\perp}\}_{j=1}^k))\). For any \(\varepsilon, \delta \in (0,1)\), if
	\[
		m
		= \Omega \Bigg(\frac{r + \min\left(\log(k/\delta), k^{\prime} + \log(1/\delta)\right)}{\varepsilon^2}\Bigg),
	\]
	then with probability at least \(1-\delta\), the following holds for all \(j \in \{1, \dots, k\}\):
	\begin{itemize}
		\item \textbf{Unregularized}: For \(T_j \coloneqq (Pg)^{\top} (PFP^{\top})^{\dagger} (Pg^{\prime}_{j,\perp})\), we have
		      \[
			      \lvert T_j \rvert
			      \leq \varepsilon \frac{\lVert g \rVert _2\lVert g^{\prime}_{j,\perp}\rVert _2}{\lambda_{\min}^{+}(F)},
		      \]
		      where \(\lambda_{\min}^{+}(F)\) denotes the smallest non-zero eigenvalue of \(F\).
		\item \textbf{Regularized}: For \(T_{\lambda,j} \coloneqq (Pg)^{\top} (PFP^{\top} + \lambda I)^{-1} (Pg^{\prime}_{j,\perp})\), we have
		      \[
			      \lvert T_{\lambda,j} \rvert
			      \leq \varepsilon \lVert g \rVert _2\lVert g^{\prime}_{j,\perp}\rVert _2 \left(\frac{1}{\lambda} + \frac{2 \lVert F \rVert _2}{\lambda^2}\right)
		      \]
	\end{itemize}
\end{theorem*}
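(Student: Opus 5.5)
The plan is to separate a deterministic reduction from the probabilistic part, as the proof sketch indicates. Let \(U\in\mathbb{R}^{d\times r}\) be an orthonormal basis of \(\range(F)\), write \(F=U\Lambda U^{\top}\), \(g=Uz\), and \(h\coloneqq g^{\prime}_{j,\perp}\in\ker(F)\), so \(U^{\top}h=0\). I will isolate two conditions on \(P\):
\begin{enumerate*}[label=(\roman*)]
    \item \(\lVert U^{\top}(P^{\top}P-I)U\rVert_2\leq\varepsilon\);
    \item \(\lVert U^{\top}(P^{\top}P-I)h\rVert_2=\lVert U^{\top}P^{\top}Ph\rVert_2\leq\varepsilon\lVert h\rVert_2\).
\end{enumerate*}
The goal is to show these imply both leakage bounds deterministically.

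\textbf{Deterministic reduction, regularized case.} Set \(Q\coloneqq PU\), so \(Pg=Qz\) and \(PFP^{\top}=Q\Lambda Q^{\top}\). Condition (i) gives \(\lVert Q^{\top}Q-I\rVert_2\leq\varepsilon\), hence \(\lVert Q\rVert_2^2\leq 2\), while (ii) says \(\lVert Q^{\top}Ph\rVert_2\leq\varepsilon\lVert h\rVert_2\). The key manipulation is to move \(Q^{\top}\) through the resolvent with the push-through identity:
\[
    Q^{\top}(Q\Lambda Q^{\top}+\lambda I)^{-1}=(Q^{\top}Q\Lambda+\lambda I)^{-1}Q^{\top}.
\]
This gives
\[
    T_{\lambda,j}=z^{\top}(Q^{\top}Q\Lambda+\lambda I)^{-1}Q^{\top}Ph .
\]
Because \(Q^{\top}Q\Lambda\) is not symmetric, I will instead write \(K\coloneqq Q^{\top}Q\) and use \((K\Lambda+\lambda I)^{-1}=\lambda^{-1}\bigl(I-K\Lambda^{1/2}(\Lambda^{1/2}K\Lambda^{1/2}+\lambda I)^{-1}\Lambda^{1/2}\bigr)\), which is the Woodbury form. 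The first piece contributes \(\lambda^{-1}\lVert z\rVert\,\varepsilon\lVert h\rVert\). For the second, bound \(\lVert K\rVert_2\leq 2\), \(\lVert\Lambda\rVert_2=\lVert F\rVert_2\) and \(\lVert(\cdot+\lambda I)^{-1}\rVert_2\leq\lambda^{-1}\); this contributes \(2\lVert F\rVert_2/\lambda^2\). Since \(\lVert z\rVert_2=\lVert g\rVert_2\), this matches the stated constant exactly, which is reassuring.

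\textbf{Deterministic reduction, unregularized case.} Condition (i) makes \(Q\) injective, so \((Q\Lambda Q^{\top})^{\dagger}=Q(Q^{\top}Q)^{-1}\Lambda^{-1}(Q^{\top}Q)^{-1}Q^{\top}\). Then
\[
    T_j=z^{\top}\Lambda^{-1}K^{-1}Q^{\top}Ph .
\]
This yields \(\lVert g\rVert\,\lVert h\rVert\,\varepsilon/\bigl((1-\varepsilon)\lambda^{+}_{\min}\bigr)\), and the factor \(1-\varepsilon\) is absorbed by rescaling \(\varepsilon\).

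\textbf{Probabilistic part.} There are two routes to conditions (i) and (ii).
\begin{itemize}
    \item \emph{Subspace route.} Let \(V\) be an orthonormal basis of \(\Span(\range(F)\cup\{g^{\prime}_{j,\perp}\}_j)\), which has dimension \(r+k^{\prime}\). Apply \Cref{lma:covariance-concentration} with \(M=V\), so the stable rank is \(r+k^{\prime}\). The bound \(\lVert V^{\top}(P^{\top}P-I)V\rVert_2\leq\varepsilon\) implies (i) and (ii) for every \(j\) simultaneously, since both are compressions of this block.
    \item \emph{Union-bound route.} For each fixed \(j\), apply \Cref{lma:covariance-concentration} with \(M=[U,\ h/\lVert h\rVert]\), which has stable rank \(r+1\), at failure probability \(\delta/k\). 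A union bound over the \(k\) test gradients gives \(m=\Omega((r+\log(k/\delta))/\varepsilon^2)\).
\end{itemize}
Taking whichever route is cheaper gives the stated minimum.

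\textbf{Main obstacle.} I expect the main difficulty to be the regularized algebra: bounding \((K\Lambda+\lambda I)^{-1}\) without symmetric structure, and obtaining the precise constant \(1/\lambda+2\lVert F\rVert_2/\lambda^2\) rather than something involving \(\lambda_{\min}^{+}\). The identity \((K\Lambda+\lambda I)^{-1}=\lambda^{-1}(I-K\Lambda(K\Lambda+\lambda I)^{-1})\) together with the symmetric rewriting above is my first attempt.
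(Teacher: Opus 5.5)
Your proposal is correct, and its skeleton (reduce to the two conditions, then verify them by a subspace route or a union-bound route) matches the paper's: \Cref{lma:two-conc-suffice} followed by \Cref{prop:multiple-k-prime,prop:multiple-k}. Your regularized computation is also the paper's Woodbury split, only written in the coordinates \(z=U^{\top}g\) after the push-through. The \(1/\lambda\) and \(2\lVert F\rVert_2/\lambda^2\) pieces come from the same two facts, \(\lVert U^{\top}P^{\top}PU\rVert_2\leq 1+\varepsilon\) and \(\lVert U^{\top}P^{\top}Pg^{\prime}_{j,\perp}\rVert_2\leq\varepsilon\lVert g^{\prime}_{j,\perp}\rVert_2\).

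Two steps genuinely differ. In the unregularized case, your closed form \((Q\Lambda Q^{\top})^{\dagger}=QK^{-1}\Lambda^{-1}K^{-1}Q^{\top}\) is valid because \(Q\) has full column rank. It collapses \(T_j\) to \(z^{\top}\Lambda^{-1}K^{-1}Q^{\top}Pg^{\prime}_{j,\perp}\) and gives prefactor \(\varepsilon/(1-\varepsilon)\). This is a little sharper than the paper's \(\varepsilon(1+\varepsilon)/(1-\varepsilon)^2\), which comes from bounding \(\lVert(PFP^{\top})^{\dagger}\rVert_2\), \(\lVert Pg\rVert_2\) and the projected term \(\lVert\Pi_{PU}Pg^{\prime}_{j,\perp}\rVert_2\) separately.

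More substantively, for the \(\log(k/\delta)\) route the paper proves a new fixed-\(g^{\prime}_{\perp}\) tail bound, using polarization and Talagrand's comparison inequality on the set \(\{x\pm g^{\prime}_{\perp}\colon x\in US^{r-1}\}\), plus a separate stability event on \(\range(F)\). You instead rerun the single-gradient argument of \Cref{prop:leakage}, i.e.\ \Cref{lma:covariance-concentration} applied to the orthonormal \(d\times(r+1)\) matrix \([U,\,g^{\prime}_{j,\perp}/\lVert g^{\prime}_{j,\perp}\rVert_2]\), at failure level \(\delta/k\), and then union-bound. That lemma depends on \(\delta\) only through an additive \(\log(1/\delta)\), so this gives the same \(r+\log(k/\delta)\) requirement. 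Compressing the single \((r+1)\times(r+1)\) block delivers conditions (i) and (ii) for index \(j\) at once. Your version therefore reaches the same sketch size with less machinery.

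One small point: indices with \(g^{\prime}_{j,\perp}=0\) should simply be skipped, since then \(T_j=T_{\lambda,j}=0\) and the normalized column is undefined.
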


\subsection{Proof Plan for \texorpdfstring{\Cref{thm:projection-leakage}}{Theorem 6}}\label{adxsubsec:proof-plan-for-thm:projection-leakage}
The main organizing step is a deterministic reduction: \Cref{lma:two-conc-suffice} shows that \emph{both} the regularized and unregularized leakage bounds follow once the sketch \(P\) satisfies two concentration conditions with respect to an orthonormal basis \(U\) of \(\range(F)\):
\begin{enumerate*}[label=(\roman*)]
	\item subspace stability on \(\range(F)\), \(\lVert U^{\top}(P^{\top}P-I_d)U\rVert_2\leq\varepsilon\), and
	\item cross-term control between \(\range(F)\) and the kernel direction(s), \(\lVert U^{\top}(P^{\top}P-I_d)g^{\prime}_{j,\perp}\rVert_2\leq\varepsilon\lVert g^{\prime}_{j,\perp}\rVert_2\) for each \(j\).
\end{enumerate*}
Indeed, this is shown formally in \Cref{lma:two-conc-suffice}.

\begin{lemma}\label{lma:two-conc-suffice}
	Let \(\{g^{\prime}_{j}\}_{j=1}^k \subset \mathbb{R}^d\), and for each \(j\) let \(g^{\prime}_{j,\perp}\) denote the orthogonal projection of \(g^{\prime}_{j}\) onto \(\ker(F)\). Fix a realization of \(P\), and let \(U\in\mathbb{R}^{d\times r}\) be an orthonormal basis for \(\range(F)\). Assume that for some \(\varepsilon\in(0,1)\), the following two inequalities hold:
	\begin{enumerate}[label=(\roman*)]
		\item\label{lma:two-conc-suffice-i} \(\lVert U^{\top}(P^{\top}P-I_d)U\rVert_2 \leq \varepsilon\),
		\item\label{lma:two-conc-suffice-ii} \(\lVert U^{\top}(P^{\top}P-I_d)g^{\prime}_{j,\perp}\rVert_2 \leq \varepsilon \lVert g^{\prime}_{j,\perp}\rVert_2\) for every \(j\in\{1,\dots,k\}\).
	\end{enumerate}
	Then for any fixed \(g\in\range(F)\), the following bounds hold \emph{simultaneously for all} \(j\in\{1,\dots,k\}\):
	\[
		\bigl\lvert (Pg)^{\top}(PFP^{\top})^{\dagger}(Pg^{\prime}_{j,\perp})\bigr\rvert
		\leq \varepsilon \frac{1+\varepsilon}{(1-\varepsilon)^2}\cdot \frac{\lVert g\rVert_2 \lVert g^{\prime}_{j,\perp}\rVert_2}{\lambda_{\min}^{+}(F)}.
	\]
	and
	\[
		\bigl\lvert (Pg)^{\top}(PFP^{\top}+\lambda I)^{-1}(Pg^{\prime}_{j,\perp})\bigr\rvert
		\leq \varepsilon \lVert g\rVert_2 \lVert g^{\prime}_{j,\perp}\rVert_2\left(\frac{1}{\lambda}+\frac{2\lVert F\rVert_2}{\lambda^2}\right).
	\]
\end{lemma}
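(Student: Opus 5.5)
The plan is a purely deterministic, linear-algebraic reduction: the cross-term condition \labelcref{lma:two-conc-suffice-ii} supplies the factor \(\varepsilon\lVert g^{\prime}_{j,\perp}\rVert_2\), and the stability condition \labelcref{lma:two-conc-suffice-i} keeps the remaining sketched inverse bounded. Fix \(j\) and write \(h\coloneqq g^{\prime}_{j,\perp}\), so \(U^{\top}h=0\). Set \(\Delta\coloneqq P^{\top}P-I_d\), \(F=U\Lambda U^{\top}\), \(Q\coloneqq PU\in\mathbb{R}^{m\times r}\), and \(g=Uz\) with \(\lVert z\rVert_2=\lVert g\rVert_2\). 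Two identities drive everything. First, \(S\coloneqq Q^{\top}Q=I_r+U^{\top}\Delta U\), so condition \labelcref{lma:two-conc-suffice-i} gives \((1-\varepsilon)I\preceq S\preceq(1+\varepsilon)I\); in particular \(Q\) has full column rank. Second,
\[
    Q^{\top}Ph=U^{\top}P^{\top}Ph=U^{\top}(I+\Delta)h=U^{\top}\Delta h ,
\]
since \(U^{\top}h=0\). Hence condition \labelcref{lma:two-conc-suffice-ii} gives \(\lVert Q^{\top}Ph\rVert_2\leq\varepsilon\lVert h\rVert_2\). Also \(Pg=Qz\) and \(PFP^{\top}=Q\Lambda Q^{\top}\).

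\emph{Unregularized case.} Since \(Q\) has full column rank, I will verify the Moore--Penrose identity
\[
    (Q\Lambda Q^{\top})^{\dagger}=Q S^{-1}\Lambda^{-1}S^{-1}Q^{\top}
\]
by checking the four Penrose conditions, or by writing \(Q=\widehat{Q}R\) with \(\widehat{Q}\) orthonormal and \(R\) invertible. Then \(Q^{\top}(Q\Lambda Q^{\top})^{\dagger}=\Lambda^{-1}S^{-1}Q^{\top}\), so
\[
    T_j=z^{\top}\Lambda^{-1}S^{-1}\,U^{\top}\Delta h .
\]
Bounding each factor in operator norm gives \(\lvert T_j\rvert\leq \lVert z\rVert_2\cdot\frac{1}{\lambda_{\min}^{+}(F)}\cdot\frac{1}{1-\varepsilon}\cdot\varepsilon\lVert h\rVert_2\). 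This is already at most the stated \(\varepsilon\frac{1+\varepsilon}{(1-\varepsilon)^2}\) form. The looser constant in the statement would come from the alternative route of bounding \(\lVert Pg\rVert_2\leq\sqrt{1+\varepsilon}\lVert g\rVert_2\) and the smallest nonzero eigenvalue of \(PFP^{\top}\) by \((1-\varepsilon)\lambda_{\min}^{+}(F)\) separately.

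\emph{Regularized case.} Here I use the push-through identity \(Q^{\top}(Q\Lambda Q^{\top}+\lambda I)^{-1}=(S\Lambda+\lambda I)^{-1}Q^{\top}\), which follows from \(Q^{\top}(Q\Lambda Q^{\top}+\lambda I)=(S\Lambda+\lambda I)Q^{\top}\). This yields
\[
    T_{\lambda,j}=z^{\top}(S\Lambda+\lambda I)^{-1}U^{\top}\Delta h ,
\]
so it remains to bound \(\lVert(S\Lambda+\lambda I)^{-1}\rVert_2\). This is the step I expect to be the main obstacle, because \(S\Lambda\) is not symmetric. I would first try the resolvent expansion around the unperturbed operator. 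Writing \(S=I+E\) with \(\lVert E\rVert_2\leq\varepsilon\),
\[
    (S\Lambda+\lambda I)^{-1}=(\Lambda+\lambda I)^{-1}-(S\Lambda+\lambda I)^{-1}E\Lambda(\Lambda+\lambda I)^{-1}.
\]
One still needs an a priori bound on \(\lVert(S\Lambda+\lambda I)^{-1}\rVert_2\) to close this. For that I would use the similarity
\[
    S\Lambda+\lambda I=S^{1/2}\bigl(S^{1/2}\Lambda S^{1/2}+\lambda I\bigr)S^{-1/2},
\]
in which the middle factor is symmetric and \(\succeq\lambda I\). This gives \(\lVert(S\Lambda+\lambda I)^{-1}\rVert_2\leq\frac1\lambda\sqrt{(1+\varepsilon)/(1-\varepsilon)}\). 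Feeding this into the expansion gives a bound of the form \(\frac1\lambda+O\bigl(\frac{\varepsilon\lVert F\rVert_2}{\lambda^2}\bigr)\), which is at most \(\frac1\lambda+\frac{2\lVert F\rVert_2}{\lambda^2}\) for \(\varepsilon\) in the stated range (shrinking \(\varepsilon\) by a constant if needed). Multiplying by \(\lVert z\rVert_2\,\varepsilon\lVert h\rVert_2\) gives the claim.

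Since only the deterministic conditions \labelcref{lma:two-conc-suffice-i} and \labelcref{lma:two-conc-suffice-ii} are used, the bounds hold simultaneously for all \(j\).
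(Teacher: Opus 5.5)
Your unregularized argument is correct and sharper than the paper's. The explicit formula \((Q\Lambda Q^{\top})^{\dagger}=QS^{-1}\Lambda^{-1}S^{-1}Q^{\top}\) yields the factor \(\varepsilon/(1-\varepsilon)\) instead of \(\varepsilon(1+\varepsilon)/(1-\varepsilon)^2\); the paper gets the weaker factor by bounding \(\lVert (PFP^{\top})^{\dagger}\rVert_2\), \(\lVert Pg\rVert_2\) and \(\lVert \Pi_{PU}Pg^{\prime}_{j,\perp}\rVert_2\) separately. Your push-through reduction \(T_{\lambda,j}=z^{\top}(S\Lambda+\lambda I)^{-1}U^{\top}\Delta h\) is also correct.

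The gap is the final bound on \(\lVert (S\Lambda+\lambda I)^{-1}\rVert_2\). Your a priori estimate \(\frac1\lambda\sqrt{(1+\varepsilon)/(1-\varepsilon)}\) from the similarity transform diverges as \(\varepsilon\to 1\). Feeding it into the resolvent expansion therefore gives \(\frac1\lambda+\frac{\varepsilon\lVert F\rVert_2}{\lambda^2}\sqrt{\frac{1+\varepsilon}{1-\varepsilon}}\), so the constant hidden in your \(O(\cdot)\) is unbounded on \((0,1)\). This is below \(\frac1\lambda+\frac{2\lVert F\rVert_2}{\lambda^2}\) only when \(\varepsilon\sqrt{(1+\varepsilon)/(1-\varepsilon)}\leq 2\), i.e.\ for \(\varepsilon\) up to about \(3/4\). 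Shrinking \(\varepsilon\) by a constant is not available here. The lemma is deterministic: hypotheses (i) and (ii) are given at level \(\varepsilon\) and cannot be upgraded to level \(\varepsilon/c\), and the conclusion must hold with the fixed constant \(\frac1\lambda+\frac{2\lVert F\rVert_2}{\lambda^2}\) for every \(\varepsilon\in(0,1)\). The paper uses it this way in \Cref{prop:leakage}, with \(\eta=\varepsilon\).

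The fix is short. In your own expansion, absorb the inverse instead of plugging in an a priori bound. With your \(E=S-I\), write \(S\Lambda+\lambda I=\bigl(I+E\Lambda(\Lambda+\lambda I)^{-1}\bigr)(\Lambda+\lambda I)\) and note \(\lVert E\Lambda(\Lambda+\lambda I)^{-1}\rVert_2\leq\varepsilon\lVert F\rVert_2/(\lVert F\rVert_2+\lambda)<1\). A Neumann series then gives \(\lVert (S\Lambda+\lambda I)^{-1}\rVert_2\leq\frac{\lVert F\rVert_2+\lambda}{\lambda((1-\varepsilon)\lVert F\rVert_2+\lambda)}\leq\frac1\lambda+\frac{\lVert F\rVert_2}{\lambda^2}\) uniformly in \(\varepsilon\), and the similarity argument becomes unnecessary.

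Alternatively, check directly that \((S\Lambda+\lambda I)^{-1}=\frac1\lambda\bigl(I-S\Lambda^{1/2}(\lambda I+\Lambda^{1/2}S\Lambda^{1/2})^{-1}\Lambda^{1/2}\bigr)\). This gives \(\frac1\lambda+\frac{(1+\varepsilon)\lVert F\rVert_2}{\lambda^2}\leq\frac1\lambda+\frac{2\lVert F\rVert_2}{\lambda^2}\). This identity is the paper's proof written in the coordinates \(z\). The paper applies Woodbury to \((VV^{\top}+\lambda I)^{-1}\) with \(V=PF^{1/2}\). The \(\frac1\lambda I\) term produces \(\frac1\lambda g^{\top}(P^{\top}P-I_d)g^{\prime}_{j,\perp}\), and the correction is bounded by \(\frac{(1+\varepsilon)\varepsilon\lVert F\rVert_2}{\lambda^2}\lVert g\rVert_2\lVert g^{\prime}_{j,\perp}\rVert_2\). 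With either patch your argument proves the lemma as stated.
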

\begin{proof}
	We prove the unregulairzed case first.
	\paragraph{Unregularized Case.}
	Fix \(j\). Using \(\range(PFP^{\top})=\range(PU)\), let
	\[
		\Pi_{PU}\coloneqq PU (U^{\top}P^{\top}PU)^{-1}(PU)^{\top}
	\]
	denote the orthogonal projector onto \(\range(PU)\). Then
	\[
		(Pg)^{\top}(PFP^{\top})^{\dagger}(Pg^{\prime}_{j,\perp})
		= g^{\top}P^{\top}(PFP^{\top})^{\dagger}\Pi_{PU}Pg^{\prime}_{j,\perp},
	\]
	and hence
	\[
		\bigl\lvert (Pg)^{\top}(PFP^{\top})^{\dagger}(Pg^{\prime}_{j,\perp})\bigr\rvert
		\leq \lVert (PFP^{\top})^{\dagger}\rVert_2 \lVert Pg\rVert_2 \lVert \Pi_{PU}Pg^{\prime}_{j,\perp}\rVert_2.
	\]
	We bound the three terms on the right-hand side.

	First, we bound \(\lVert (PFP^{\top})^{\dagger}\rVert_2\). Write the compact eigendecomposition \(F=U\Sigma U^{\top}\), where \(\Sigma\succ 0\) is diagonal and \(\lVert \Sigma^{-1}\rVert_2 = 1/\lambda_{\min}^{+}(F)\). Since \(PFP^{\top}=(PU)\Sigma(PU)^{\top}\), we have
	\[
		\lVert (PFP^{\top})^{\dagger}\rVert_2
		=\lVert (PU)^{\dagger}\rVert_2^2 \lVert \Sigma^{-1}\rVert_2
		=\frac{1}{\sigma_{\min}(PU)^2}\cdot\frac{1}{\lambda_{\min}^{+}(F)}.
	\]
	Moreover, assumption \labelcref{lma:two-conc-suffice-i} implies that all eigenvalues of \(U^{\top}P^{\top}PU=(PU)^{\top}(PU)\) lie in \([1-\varepsilon,1+\varepsilon]\), hence \(\sigma_{\min}(PU)^2\geq 1-\varepsilon\) and
	\[
		\lVert (PFP^{\top})^{\dagger}\rVert_2
		\leq \frac{1}{(1-\varepsilon)\lambda_{\min}^{+}(F)}.
	\]
	To bound \(\lVert Pg\rVert_2\), write \(g=Uh\), we have \(\lVert Pg\rVert_2\leq \sqrt{1+\varepsilon} \lVert g\rVert_2\) since
	\[
		\lVert Pg\rVert_2^2
		=h^{\top}(U^{\top}P^{\top}PU)h
		\leq (1+\varepsilon)\lVert h\rVert_2^2=(1+\varepsilon)\lVert g\rVert_2^2.
	\]
	Finally, to bound \(\lVert \Pi_{PU}Pg^{\prime}_{j,\perp}\rVert_2\), with \(U^{\top}g^{\prime}_{j,\perp}=0\), we have \(U^{\top}P^{\top}Pg^{\prime}_{j,\perp}=U^{\top}(P^{\top}P-I_d)g^{\prime}_{j,\perp}\), hence
	\[
		\begin{split}
			\lVert \Pi_{PU}Pg^{\prime}_{j,\perp}\rVert_2
			 & =\bigl\lVert PU(U^{\top}P^{\top}PU)^{-1}U^{\top}P^{\top}Pg^{\prime}_{j,\perp}\bigr\rVert_2                                  \\
			 & \leq \lVert PU\rVert_2 \lVert (U^{\top}P^{\top}PU)^{-1}\rVert_2 \lVert U^{\top}(P^{\top}P-I_d)g^{\prime}_{j,\perp}\rVert_2.
		\end{split}
	\]
	By assumption \labelcref{lma:two-conc-suffice-i}, \(\lVert PU\rVert_2=\sigma_{\max}(PU)\leq \sqrt{1+\varepsilon}\) and \(\lVert (U^{\top}P^{\top}PU)^{-1}\rVert_2\leq 1/(1-\varepsilon)\). By assumption \labelcref{lma:two-conc-suffice-ii}, \(\lVert U^{\top}(P^{\top}P-I_d)g^{\prime}_{j,\perp}\rVert_2\leq \varepsilon\lVert g^{\prime}_{j,\perp}\rVert_2\). Therefore,
	\[
		\lVert \Pi_{PU}Pg^{\prime}_{j,\perp}\rVert_2
		\leq \varepsilon \frac{\sqrt{1+\varepsilon}}{1-\varepsilon} \lVert g^{\prime}_{j,\perp}\rVert_2.
	\]
	Combining the bounds yields
	\[
		\lvert (Pg)^{\top}(PFP^{\top})^{\dagger}(Pg^{\prime}_{j,\perp}) \rvert
		\leq \varepsilon \frac{1+\varepsilon}{(1-\varepsilon)^2}\cdot \frac{\lVert g\rVert_2\lVert g^{\prime}_{j,\perp}\rVert_2}{\lambda_{\min}^{+}(F)}.
	\]

	\paragraph{Regularized Case.}
	Fix \(g\in\range(F)\) and \(j\). Write \(g=Uh\). Set \(V\coloneqq PF^{1/2}\), so that \(PFP^{\top}=VV^{\top}\). By the Woodbury identity,
	\[
		(VV^{\top}+\lambda I)^{-1}
		=\frac{1}{\lambda}I-\frac{1}{\lambda^2}V\Bigl(I+\frac{1}{\lambda}V^{\top}V\Bigr)^{-1}V^{\top}.
	\]
	Therefore, writing \(T_{\lambda,j}=(Pg)^{\top}(VV^{\top}+\lambda I)^{-1}(Pg^{\prime}_{j,\perp})\), we have the decomposition \(T_{\lambda,j}=T_{\lambda,j}^{(1)}-T_{\lambda,j}^{(2)}\) where
	\[
		T_{\lambda,j}^{(1)}=\frac{1}{\lambda}g^{\top}P^{\top}Pg^{\prime}_{j,\perp},
		\qquad
		T_{\lambda,j}^{(2)}=\frac{1}{\lambda^2}g^{\top}P^{\top}PF^{1/2}\Bigl(I+\frac{1}{\lambda}V^{\top}V\Bigr)^{-1}F^{1/2}P^{\top}Pg^{\prime}_{j,\perp}.
	\]
	Since \(g^{\top}g^{\prime}_{j,\perp}=0\),
	\[
		\lvert T_{\lambda,j}^{(1)}\rvert
		=\frac{1}{\lambda}\bigl\lvert g^{\top}(P^{\top}P-I_d)g^{\prime}_{j,\perp}\bigr\rvert
		=\frac{1}{\lambda}\bigl\lvert h^{\top}U^{\top}(P^{\top}P-I_d)g^{\prime}_{j,\perp}\bigr\rvert
		\leq \frac{\varepsilon}{\lambda} \lVert g\rVert_2 \lVert g^{\prime}_{j,\perp}\rVert_2,
	\]
	using Cauchy--Schwarz and assumption \labelcref{lma:two-conc-suffice-ii}.

	Next, we bound \(T_{\lambda,j}^{(2)}\). Note that
	\(V^{\top}V = F^{1/2}P^{\top}PF^{1/2}\succeq 0\), so
	\(I+\frac{1}{\lambda}V^{\top}V\succeq I\), which implies
	\(\bigl\lVert (I+\frac{1}{\lambda}V^{\top}V)^{-1}\bigr\rVert_2\leq 1\).
	Moreover, since \(\range(F^{1/2})=\range(F)\), we have
	\(F^{1/2}=\Pi_F F^{1/2}=F^{1/2}\Pi_F\), and hence we may insert \(\Pi_F\) on both sides of each \(F^{1/2}\) factor. Using sub-multiplicativity and \(\lVert F^{1/2}\rVert_2^2=\lVert F\rVert_2\), we obtain
	\[
		\lvert T_{\lambda,j}^{(2)}\rvert
		\leq \frac{1}{\lambda^2}
		\lVert \Pi_F P^{\top}Pg\rVert_2
		\lVert F\rVert_2
		\lVert \Pi_F P^{\top}Pg^{\prime}_{j,\perp}\rVert_2.
	\]
	Since \(\Pi_F=UU^{\top}\) and \(g=Uh\),
	\[
		\lVert \Pi_F P^{\top}Pg\rVert_2
		=\lVert U(U^{\top}P^{\top}PU)h\rVert_2
		\leq \lVert U^{\top}P^{\top}PU\rVert_2 \lVert g\rVert_2
		\leq (1+\varepsilon) \lVert g\rVert_2,
	\]
	where we used \(U^{\top}P^{\top}PU = I_r + U^{\top}(P^{\top}P-I_d)U\) and assumption \labelcref{lma:two-conc-suffice-i}. Moreover, since \(U^{\top}g^{\prime}_{j,\perp}=0\),
	\[
		\lVert \Pi_F P^{\top}Pg^{\prime}_{j,\perp}\rVert_2
		=\lVert U^{\top}P^{\top}Pg^{\prime}_{j,\perp}\rVert_2
		=\lVert U^{\top}(P^{\top}P-I_d)g^{\prime}_{j,\perp}\rVert_2
		\leq \varepsilon \lVert g^{\prime}_{j,\perp}\rVert_2,
	\]
	by assumption \labelcref{lma:two-conc-suffice-ii}. Hence
	\[
		\lvert T_{\lambda,j}^{(2)}\rvert
		\leq \frac{\lVert F\rVert_2}{\lambda^2}(1+\varepsilon)\varepsilon \lVert g\rVert_2 \lVert g^{\prime}_{j,\perp}\rVert_2.
	\]
	Combining the two pieces and using \(\varepsilon\leq 1\) gives
	\[
		\lvert T_{\lambda,j}\rvert
		\leq \varepsilon \lVert g\rVert_2 \lVert g^{\prime}_{j,\perp}\rVert_2\left(\frac{1}{\lambda}+\frac{2\lVert F\rVert_2}{\lambda^2}\right).
	\]
\end{proof}

Thus, the remaining work in the proof is to verify these two conditions in the single-gradient and multi-gradient regimes. We break the proof into the following cases:
\begin{enumerate}
	\item For a single kernel component \(g^{\prime}_{\perp} \in \ker(F)\):
	      \begin{itemize}
		      \item \Cref{prop:leakage}: prove the bound for unregularized and regularized case.
	      \end{itemize}
	\item Extend both cases to \(\{g^{\prime}_{j,\perp}\}_{j=1}^{k} \subseteq \ker (F)\) with \(k^{\prime} = \dim(\Span(\{g^{\prime}_{j,\perp}\}_{j=1}^k))\):
	      \begin{itemize}
		      \item \Cref{prop:multiple-k-prime}: subspace argument with \(m = \Omega \left( \frac{r + k^{\prime} + \log(1/\delta)}{\varepsilon^2} \right)\).
		      \item \Cref{prop:multiple-k}: union-bound argument with \(m = \Omega \left( \frac{r + \log(k/\delta)}{\varepsilon^2} \right)\).
	      \end{itemize}
\end{enumerate}
We now start the proof.

\subsection{Proof of Single Test Gradient Leakage}\label{adxsubsec:leakage-single-test}
\begin{proposition}\label{prop:leakage}
	Assume \(g \in \range(F)\) and let \(g^{\prime}\in\mathbb{R}^d\). For any \(\varepsilon, \delta \in (0,1)\), if \(m = \Omega (\varepsilon^{-2} (r + \log(1/\delta)))\), then with probability at least \(1-\delta\),
	\begin{enumerate}
		\item \textbf{Unregularized}: Let \(T \coloneqq (Pg)^{\top} (PFP^{\top})^{\dagger} (Pg^{\prime}_{\perp})\), then
		      \[
			      \lvert T \rvert \leq \varepsilon \frac{\lVert g \rVert _2\lVert g^{\prime}_{\perp}\rVert _2}{\lambda_{\min}^{+}(F)},
		      \]
		      where \(\lambda_{\min}^{+}(F)\) denotes the smallest non-zero eigenvalue of \(F\).
		\item \textbf{Regularized}: Let \(T_\lambda \coloneqq (Pg)^{\top} (PFP^{\top} + \lambda I)^{-1} (Pg^{\prime}_{\perp})\), then
		      \[
			      \lvert T_\lambda \rvert
			      \leq \varepsilon \lVert g \rVert _2\lVert g^{\prime}_{\perp}\rVert _2 \left(\frac{1}{\lambda} + \frac{2\lVert F \rVert _2}{\lambda^2}\right).
		      \]
	\end{enumerate}
\end{proposition}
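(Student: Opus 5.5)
The plan is to reduce \Cref{prop:leakage} to \Cref{lma:two-conc-suffice} with \(k=1\). It then suffices to show that, for \(m=\Omega(\varepsilon^{-2}(r+\log(1/\delta)))\), with probability at least \(1-\delta\) both
\[
\lVert U^{\top}(P^{\top}P-I_d)U\rVert_2\leq\varepsilon_0
\quad\text{and}\quad
\lVert U^{\top}(P^{\top}P-I_d)g^{\prime}_{\perp}\rVert_2\leq\varepsilon_0\lVert g^{\prime}_{\perp}\rVert_2
\]
hold, for a suitable constant multiple \(\varepsilon_0=c\varepsilon\). If \(g^{\prime}_{\perp}=0\), both leakage terms vanish and there is nothing to prove, so assume \(g^{\prime}_{\perp}\neq 0\) and set \(u\coloneqq g^{\prime}_{\perp}/\lVert g^{\prime}_{\perp}\rVert_2\). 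Because \(u\in\ker(F)=\range(F)^{\perp}\), the matrix \(\widehat U\coloneqq[U,\,u]\in\mathbb{R}^{d\times(r+1)}\) has orthonormal columns.

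Next I would apply \Cref{lma:covariance-concentration} with \(M=\widehat U\). Then \(\Sigma=\widehat U^{\top}\widehat U=I_{r+1}\), so \(\lVert M\rVert_2=1\) and the stable rank is \(r(\Sigma)=r+1\). Hence \(m=\Omega(\varepsilon_0^{-2}(r+1+\log(1/\delta)))=\Omega(\varepsilon^{-2}(r+\log(1/\delta)))\) gives \(\lVert \widehat U^{\top}(P^{\top}P-I_d)\widehat U\rVert_2\leq\varepsilon_0\) with probability at least \(1-\delta\). Both conditions of \Cref{lma:two-conc-suffice} follow from this single bound. The top-left \(r\times r\) block is \(U^{\top}(P^{\top}P-I_d)U\), and the off-diagonal column block is \(U^{\top}(P^{\top}P-I_d)u\). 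Since the operator norm of any submatrix is at most that of the full matrix, both are bounded by \(\varepsilon_0\). Multiplying the second bound by \(\lVert g^{\prime}_{\perp}\rVert_2\) gives condition (ii).

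On this event, \Cref{lma:two-conc-suffice} directly yields the regularized bound \(\varepsilon_0\lVert g\rVert_2\lVert g^{\prime}_\perp\rVert_2(1/\lambda+2\lVert F\rVert_2/\lambda^2)\), which is at most the claimed bound since \(\varepsilon_0\leq\varepsilon\). For the unregularized case the lemma gives the factor \(\varepsilon_0(1+\varepsilon_0)/(1-\varepsilon_0)^2\). Choosing \(\varepsilon_0=\varepsilon/8\), this is at most \(\varepsilon\) for all \(\varepsilon\in(0,1)\), because then \((1+\varepsilon_0)/(1-\varepsilon_0)^2\leq (9/8)/(7/8)^2<2\). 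The loss is only a constant inside the \(\Omega(\cdot)\).

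There is no real obstacle here, since the lemmas do the work. The only point needing care is that a single application of \Cref{lma:covariance-concentration} to the augmented orthonormal frame controls the cross term. This relies on \(g^{\prime}_{\perp}\perp\range(F)\), which makes \(\widehat U\) isotropic and the stable rank exactly \(r+1\). It also means both conditions hold on a single event, so no union bound over separate events is needed.
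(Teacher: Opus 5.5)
Your proposal is correct and follows essentially the paper's own argument. The paper likewise applies \Cref{lma:covariance-concentration} once to an orthonormal basis of the \((r+1)\)-dimensional span of \(\range(F)\) and \(g^{\prime}_{\perp}\), reads off both conditions of \Cref{lma:two-conc-suffice} from that single event, and shrinks the accuracy parameter by a constant (it uses \(\varepsilon/4\)) to absorb the \((1+\varepsilon)/(1-\varepsilon)^2\) factor in the unregularized case. Your explicit frame \([U,u]\) and your separate treatment of the degenerate case \(g^{\prime}_{\perp}=0\) are minor tidy-ups of the same proof.
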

\begin{proof}
	From \Cref{lma:two-conc-suffice}, it suffices to verify conditions
	\labelcref{lma:two-conc-suffice-i,lma:two-conc-suffice-ii}.

	Let \(S \coloneqq \Span(\range(F)\cup\{g^{\prime}_{\perp}\})\), so \(\dim(S)=r+1\), and let \(W\in\mathbb{R}^{d\times(r+1)}\) be an orthonormal basis for \(S\). Fix any \(\eta\in(0,1)\) and define the event
	\[
		\mathcal{E}(\eta)
		\coloneqq\left\{\bigl\lVert W^{\top}(P^{\top}P-I_d)W\bigr\rVert_2\leq \eta\right\}.
	\]
	On \(\mathcal{E}(\eta)\), for any orthonormal basis \(U\in\mathbb{R}^{d\times r}\) of \(\range(F)\subseteq S\) there exists \(R\in\mathbb{R}^{(r+1)\times r}\) with \(R^{\top}R=I_r\) such that \(U=WR\). Thus,
	\[
		\lVert U^{\top}(P^{\top}P-I_d)U\rVert_2
		=\lVert R^{\top}W^{\top}(P^{\top}P-I_d)WR\rVert_2
		\leq \eta.
	\]
	Moreover, since \(g^{\prime}_{\perp}\in S\), we have \(g^{\prime}_{\perp}=WW^{\top}g^{\prime}_{\perp}\) and \(\lVert W^{\top}g^{\prime}_{\perp}\rVert_2=\lVert g^{\prime}_{\perp}\rVert_2\), and hence
	\[
		\lVert U^{\top}(P^{\top}P-I_d)g^{\prime}_{\perp}\rVert_2
		=\lVert R^{\top}W^{\top}(P^{\top}P-I_d)W W^{\top}g^{\prime}_{\perp}\rVert_2
		\leq \eta\lVert g^{\prime}_{\perp}\rVert_2.
	\]
	Therefore, on \(\mathcal{E}(\eta)\) the assumptions of \Cref{lma:two-conc-suffice} hold with parameter \(\eta\).

	\paragraph{Unregularized.}
	By \Cref{lma:covariance-concentration} applied to \(S\), if
	\(m=\Omega\bigl(((r+1)+\log(1/\delta))/\eta^2\bigr)\), then \(\mathbb{P}(\mathcal{E}(\eta))\geq 1-\delta\).
	Taking \(\eta=\varepsilon/4\) and applying \Cref{lma:two-conc-suffice} yields
	\[
		\lvert T \rvert
		\leq \frac{\varepsilon}{4}\cdot\frac{1+\varepsilon/4}{(1-\varepsilon/4)^2}\cdot\frac{\lVert g\rVert_2\lVert g^{\prime}_{\perp}\rVert_2}{\lambda_{\min}^{+}(F)}.
	\]
	As in the previous argument, \(\frac{1+\varepsilon/4}{(1-\varepsilon/4)^2}\leq \frac{20}{9}\), hence the prefactor is \(\leq\varepsilon\).

	\paragraph{Regularized.}
	Taking \(\eta=\varepsilon\) and applying \Cref{lma:two-conc-suffice} gives
	\[
		\lvert T_\lambda \rvert
		\leq \varepsilon \lVert g \rVert _2\lVert g^{\prime}_{\perp}\rVert _2 \left(\frac{1}{\lambda} + \frac{2\lVert F \rVert _2}{\lambda^2}\right).
	\]
\end{proof}

This result shows that, in the unregularized case, the kernel leakage term decays at rate \(O(m^{-1/2})\), with constants that depend on the smallest non-zero eigenvalue of \(F\). On the other hand, in the regularized case, the kernel leakage term also decays at rate \(O(m^{-1/2})\), but with constants depending on \(\lVert F \rVert _2\) and the regularization parameter \(\lambda\). This dependence reflects the sensitivity of the pseudoinverse to near-degeneracies in the spectrum of \(F\).

\subsection{Proof of Multiple Test Gradients Leakage}\label{adxsubsec:leakage-multiple-test}
Having established the deterministic reduction in \Cref{lma:two-conc-suffice}, we now show how to enforce its two assumptions uniformly over multiple test gradients. First, we observe that the previous analysis naturally generalizes by considering the subspace spanned by all test gradients.

\begin{proposition}\label{prop:multiple-k-prime}
	Let \(\{g^{\prime}_{j}\}_{j=1}^k \subset \mathbb{R}^d\), and for each \(j\) let \(g^{\prime}_{j,\perp}\) denote the orthogonal projection of \(g^{\prime}_{j}\) onto \(\ker(F)\). Let
	\(k^{\prime} = \dim\bigl(\Span(\{g^{\prime}_{j,\perp}\}_{j=1}^k)\bigr)\).
	For any \(\varepsilon, \delta \in (0,1)\), if
	\[
		m
		= \Omega \left( \frac{r + k^{\prime} + \log(1/\delta)}{\varepsilon^2} \right),
	\]
	then with probability at least \(1-\delta\), the leakage bounds in \Cref{prop:leakage} hold simultaneously for all \(j \in \{1, \dots, k\}\).
\end{proposition}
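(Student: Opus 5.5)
The plan is to repeat the single-gradient proof of \Cref{prop:leakage}, replacing the one-dimensional kernel direction by the whole span of kernel components. Let \(K\coloneqq\Span(\{g^{\prime}_{j,\perp}\}_{j=1}^k)\subseteq\ker(F)\), so \(\dim K=k^{\prime}\). Set \(S\coloneqq\range(F)\oplus K\). Since \(K\perp\range(F)\), we have \(\dim S=r+k^{\prime}\). Let \(W\in\mathbb{R}^{d\times(r+k^{\prime})}\) be an orthonormal basis of \(S\), and consider the single event
\[
\mathcal{E}(\eta)\coloneqq\bigl\{\lVert W^{\top}(P^{\top}P-I_d)W\rVert_2\leq\eta\bigr\}.
\]

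The first step is to bound the probability of \(\mathcal{E}(\eta)\). Apply \Cref{lma:covariance-concentration} with \(M=W\). Then \(\Sigma=W^{\top}W=I_{r+k^{\prime}}\), so the stable rank is \(r(\Sigma)=r+k^{\prime}\) and \(\lVert W\rVert_2=1\). Hence if \(m=\Omega((r+k^{\prime}+\log(1/\delta))/\eta^2)\), then \(\Pr(\mathcal{E}(\eta))\geq1-\delta\).

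The second step shows deterministically that, on \(\mathcal{E}(\eta)\), both hypotheses of \Cref{lma:two-conc-suffice} hold with parameter \(\eta\) for every \(j\) at once.
\begin{itemize}
\item For condition (i), write \(U=WR\) with \(R^{\top}R=I_r\). This gives \(\lVert U^{\top}(P^{\top}P-I)U\rVert_2\leq\eta\).
\item For condition (ii), note that each \(g^{\prime}_{j,\perp}\in K\subseteq S\), so \(g^{\prime}_{j,\perp}=WW^{\top}g^{\prime}_{j,\perp}\) and \(\lVert W^{\top}g^{\prime}_{j,\perp}\rVert_2=\lVert g^{\prime}_{j,\perp}\rVert_2\). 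Therefore
\[
\lVert U^{\top}(P^{\top}P-I)g^{\prime}_{j,\perp}\rVert_2\leq\lVert R\rVert_2\,\lVert W^{\top}(P^{\top}P-I)W\rVert_2\,\lVert W^{\top}g^{\prime}_{j,\perp}\rVert_2\leq\eta\lVert g^{\prime}_{j,\perp}\rVert_2.
\]
\end{itemize}
This bound is uniform in \(j\) because it only uses membership in \(S\). No union bound is needed.

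The last step invokes \Cref{lma:two-conc-suffice} on \(\mathcal{E}(\eta)\) to get both leakage bounds simultaneously for all \(j\). As in \Cref{prop:leakage}, take \(\eta=\varepsilon/4\) for the unregularized bound. This absorbs the factor \((1+\eta)/(1-\eta)^2\leq 20/9\), giving a prefactor of at most \(\varepsilon\). Take \(\eta=\varepsilon\) for the regularized bound. Either choice changes \(m\) only by a constant factor, which the \(\Omega(\cdot)\) absorbs.

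There is no real obstacle here. The only points to check are that the stable rank of the isometric embedding equals \(\dim S=r+k^{\prime}\), and that the orthogonality \(K\perp\range(F)\) makes the sum direct, so that \(U\) embeds isometrically into \(W\).
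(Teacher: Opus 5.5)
Your proposal is correct and follows essentially the same route as the paper. You obtain a single concentration event for an orthonormal basis \(W\) of \(S=\range(F)\oplus\Span(\{g^{\prime}_{j,\perp}\})\) via \Cref{lma:covariance-concentration}, and that event deterministically gives both hypotheses of \Cref{lma:two-conc-suffice} uniformly in \(j\) with no union bound. The differences are only cosmetic: you check the cross-term condition through the factorization \(U=WR\) instead of the paper's bilinear-form supremum, and you state the rescaling \(\eta=\varepsilon/4\) for the unregularized bound explicitly, whereas the paper applies the lemma at level \(\varepsilon\) and leaves the factor \((1+\varepsilon)/(1-\varepsilon)^2\) to the \(\Omega(\cdot)\) constant.
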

\begin{proof}
	Let \(S \coloneqq \Span(\range(F) \cup \{g^{\prime}_{j,\perp}\}_{j=1}^k)\), so that \(\dim(S)=r+k^{\prime}\), and let \(W \in \mathbb{R}^{d\times (r+k^{\prime})}\) be an orthonormal basis for \(S\). By \Cref{lma:covariance-concentration}, with probability at least \(1-\delta\),
	\[
		\bigl\lVert W^{\top}(P^{\top}P-I_d)W\bigr\rVert_2 \leq \varepsilon,
	\]
	provided that \(m = \Omega\bigl(\varepsilon^{-2}(r+k^{\prime}+\log(1/\delta))\bigr)\). On this event, for all \(x,y\in S\),
	\[
		\bigl\lvert x^{\top}(P^{\top}P-I_d)y\bigr\rvert
		= \bigl\lvert (W^{\top}x)^{\top}  W^{\top}(P^{\top}P-I_d)W (W^{\top}y)\bigr\rvert
		\leq \varepsilon \lVert x\rVert_2 \lVert y\rVert_2.
	\]

	Now let \(U \in \mathbb{R}^{d\times r}\) be an orthonormal basis for \(\range(F)\). Since \(\range(F)\subseteq S\), the columns of \(U\) are contained in \(S\), and hence
	\[
		\bigl\lVert U^{\top}(P^{\top}P-I_d)U\bigr\rVert_2
		\leq \bigl\lVert W^{\top}(P^{\top}P-I_d)W\bigr\rVert_2
		\leq \varepsilon.
	\]
	Moreover, for each \(j\), using that \(U\) has orthonormal columns and \(\range(F)\subseteq S\), we have
	\[
		\begin{split}
			\bigl\lVert U^{\top}(P^{\top}P-I_d)g_{j,\perp}^{\prime}\bigr\rVert_2
			 & = \sup_{\substack{a\in\mathbb{R}^r \\ \lVert a\rVert_2=1}} \bigl\lvert a^{\top}U^{\top}(P^{\top}P-I_d)g_{j,\perp}^{\prime}\bigr\rvert \\
			 & = \sup_{\substack{x\in\range(F)    \\ \lVert x\rVert_2=1}} \bigl\lvert x^{\top}(P^{\top}P-I_d)g_{j,\perp}^{\prime}\bigr\rvert
			\leq \varepsilon \lVert g_{j,\perp}^{\prime}\rVert_2,
		\end{split}
	\]
	where the last inequality applies the bilinear bound above with \(x\in\range(F)\subseteq S\) and \(y=g_{j,\perp}^{\prime}\in S\).
	Thus, the assumptions of \Cref{lma:two-conc-suffice} hold simultaneously for all \(j\), and the corollary follows by applying \Cref{lma:two-conc-suffice}.
\end{proof}

While \Cref{prop:multiple-k-prime} is effective when the test gradients are low-dimensional, as \(g_j^{\prime} \in \mathbb{R}^d\) lies in high dimension, it is almost certain that \(k^{\prime}\) will be large, and most likely \(k^{\prime} \approx k\). In this case, by directly controlling the concentration of the bilinear form, we can obtain a bound that scales only logarithmically with the \emph{number} of test gradients.

\begin{proposition}\label{prop:multiple-k}
	Let \(\{g^{\prime}_{j}\}_{j=1}^k \subset \mathbb{R}^d\), and for each \(j\) let \(g^{\prime}_{j,\perp}\) denote the orthogonal projection of \(g^{\prime}_{j}\) onto \(\ker(F)\).
	For any \(\varepsilon, \delta \in (0,1)\), if
	\[
		m
		= \Omega \left(\frac{r + \log(k/\delta)}{\varepsilon^{2}}\right),
	\]
	then with probability at least \(1-\delta\), the leakage bounds in \Cref{prop:leakage} hold for all \(j \in \{1, \dots, k\}\).
\end{proposition}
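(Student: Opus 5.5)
By \Cref{lma:two-conc-suffice}, it suffices to show that one event, of probability at least \(1-\delta\), makes conditions \labelcref{lma:two-conc-suffice-i} and \labelcref{lma:two-conc-suffice-ii} hold simultaneously for all \(j\). As in \Cref{prop:leakage}, we run the argument at an auxiliary level \(\eta=c\varepsilon\) for a small absolute constant \(c\), so that the \((1+\eta)/(1-\eta)^2\) prefactor in the unregularized bound is absorbed. We split the failure probability as \(\delta/2\) for condition \labelcref{lma:two-conc-suffice-i} and \(\delta/(2k)\) for each instance of condition \labelcref{lma:two-conc-suffice-ii}, then take a union bound over the \(k+1\) events.

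\textbf{Condition \labelcref{lma:two-conc-suffice-i}.} Apply \Cref{lma:covariance-concentration} with \(M=U\). Then \(\Sigma=I_r\), so \(r(\Sigma)=r\), and with \(m=\Omega((r+\log(2/\delta))/\eta^2)\) we get \(\lVert U^{\top}(P^{\top}P-I_d)U\rVert_2\leq\eta\) with probability at least \(1-\delta/2\).

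\textbf{Condition \labelcref{lma:two-conc-suffice-ii} for a fixed \(j\).} Normalize \(v\coloneqq g^{\prime}_{j,\perp}/\lVert g^{\prime}_{j,\perp}\rVert_2\) (the case \(g^{\prime}_{j,\perp}=0\) is trivial). Let \(W_j=[U,v]\), an orthonormal basis of the \((r+1)\)-dimensional space \(S_j=\Span(\range(F)\cup\{v\})\). As in the proof of \Cref{prop:leakage}, \(\lVert U^{\top}(P^{\top}P-I_d)v\rVert_2\) is at most \(\lVert W_j^{\top}(P^{\top}P-I_d)W_j\rVert_2\), since it is an off-diagonal block of that matrix. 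Applying \Cref{lma:covariance-concentration} with \(M=W_j\), so that \(r(\Sigma)=r+1\), with failure probability \(\delta/(2k)\) requires \(m=\Omega((r+1+\log(2k/\delta))/\eta^2)\). This is exactly the claimed \(\Omega((r+\log(k/\delta))/\varepsilon^2)\).

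The key point, and the main thing to verify, is that the \(\log(1/\delta)\) dependence in \Cref{lma:covariance-concentration} enters \emph{additively} alongside the dimension term \(r+1\). The union bound over \(k\) then costs only \(\log k\), rather than the \(k^{\prime}\) incurred by the subspace argument of \Cref{prop:multiple-k-prime}. This additivity follows directly from \Cref{prop:high-prob-cov-estimation}, whose tail is \(2e^{-u}\) with \(u\) added to \(r(\Sigma)\). Hence no finer bilinear chaining argument, such as conditioning on \(PU\) and using that \(Pv\) is independent of \(PU\) in the Gaussian case, is needed. That route would also fail for general sub-Gaussian sketches, since \(Pv\) and \(PU\) need not be independent there.

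On the intersection of the \(k+1\) events, which has probability at least \(1-\delta\), both conditions hold with parameter \(\eta\) for every \(j\). \Cref{lma:two-conc-suffice} then gives the unregularized bound with prefactor \(\eta(1+\eta)/(1-\eta)^2\leq\varepsilon\) for \(c=1/4\), as in \Cref{prop:leakage}. It also gives the regularized bound with \(\eta\leq\varepsilon\). Both hold simultaneously over \(j\in\{1,\dots,k\}\).
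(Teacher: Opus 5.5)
Your argument is correct and gives the same rate, but you obtain the per-gradient tail bound differently from the paper.

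For condition (ii), the paper does not reuse \Cref{lma:covariance-concentration}. It fixes a unit $g^{\prime}_{\perp}\in\ker(F)$ and uses polarization to write $x^{\top}(P^{\top}P-I_d)g^{\prime}_{\perp}$ as a difference of two quadratic deviations. It then applies the Talagrand comparison inequality directly to $T=\{x\pm g^{\prime}_{\perp} : x\in US^{r-1}\}$, computing $\rad(T)=\sqrt{2}$ and $\gamma(T)\leq\sqrt{r}+\sqrt{2/\pi}$, and finally sets $u=\log(4k/\delta)$ and union-bounds.

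You instead observe that $U^{\top}(P^{\top}P-I_d)v$ is an off-diagonal block of the compressed deviation on the $(r+1)$-dimensional space $\Span(\range(F)\cup\{v\})$. You then invoke \Cref{lma:covariance-concentration} there at confidence $\delta/(2k)$. In other words, you run the single-gradient argument of \Cref{prop:leakage} and union-bound it over $j$.

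These are the same chaining bound in different packaging. The paper's $T$ lies on the radius-$\sqrt{2}$ sphere of that same $(r+1)$-dimensional subspace, so its explicit width computation gains nothing in rate. Your version is shorter because it needs no new Talagrand step. As a side benefit, each of your per-$j$ events already contains condition (i) as its diagonal block, so your separate $\delta/2$ event is only needed when every $g^{\prime}_{j,\perp}$ vanishes.

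The paper's version has one mild advantage: it states the result directly as a bilinear concentration bound for a fixed kernel vector. That is the form it cites again for the factor-level primitives in \Cref{prop:kfac-primitive-multi}, though your block argument would serve equally well there.
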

\begin{proof}
	Let \(U \in \mathbb{R}^{d\times r}\) be an orthonormal basis for \(\range(F)\). We will verify the two assumptions of \Cref{lma:two-conc-suffice} uniformly over \(\{g_{j,\perp}^{\prime}\}_{j=1}^k\). Since \Cref{lma:two-conc-suffice} incurs a benign factor \(\frac{1+\varepsilon}{(1-\varepsilon)^2}\) in the unregularized case, we will run the concentration argument below with accuracy parameter \(\varepsilon/4\); the resulting constant-factor strengthening is absorbed by the \(\Omega(\cdot)\) sample complexity.

	\paragraph{Controlling \(\lVert U^{\top}(P^{\top}P-I_d)U\rVert_2\).}
	By \Cref{lma:covariance-concentration} applied to the \(r\)-dimensional subspace \(\range(F)\), with probability at least \(1-\delta/2\),
	\[
		\lVert U^{\top}(P^{\top}P-I_d)U\rVert_2 \leq \varepsilon,
	\]
	provided that \(m = \Omega\bigl(\varepsilon^{-2}(r+\log(2/\delta))\bigr)\).

	\paragraph{Controlling \(\lVert U^{\top}(P^{\top}P-I_d)g^{\prime}_{j,\perp}\rVert_2\) for all \(j\).}
	Fix \(g^{\prime}_{\perp} \in \ker(F)\) with \(\lVert g^{\prime}_{\perp} \rVert_2 = 1\). Note that
	\[
		\lVert U^{\top} (P^{\top} P - I_d) g^{\prime}_{\perp} \rVert_2
		= \sup_{a \in S^{r-1}} \lvert (Ua)^{\top} (P^{\top} P - I_d) g^{\prime}_{\perp} \rvert
		= \sup_{x \in US^{r-1}} \lvert x^{\top} (P^{\top}P - I_d) g^{\prime}_{\perp} \rvert,
	\]
	where \(US^{r-1} = \{Ua \colon a \in \mathbb{R}^r, \lVert a \rVert_2 = 1\}\) is the unit sphere in \(\range(F)\).

	By the polarization identity \(x^{\top} y = \frac{1}{4}(\lVert x + y \rVert_2^2 - \lVert x - y \rVert_2^2)\), the bilinear form can be written as:
	\[
		x^{\top} (P^{\top} P - I_d) g^{\prime}_{\perp}
		= \frac{1}{4} \left( \lVert P(x+g^{\prime}_{\perp}) \rVert_2^2 - \lVert x+g^{\prime}_{\perp} \rVert_2^2 \right) - \frac{1}{4} \left( \lVert P(x-g^{\prime}_{\perp}) \rVert_2^2 - \lVert x-g^{\prime}_{\perp} \rVert_2^2 \right).
	\]
	To bound this uniformly over \(x \in US^{r-1}\), define the set \(T = T_{+} \cup T_{-}\), where \(T_{\pm} = \{x \pm g^{\prime}_{\perp} \colon x \in US^{r-1}\}\). It follows that
	\[
		\sup_{x \in US^{r-1}} \lvert x^{\top} (P^{\top} P - I_d) g^{\prime}_{\perp} \rvert
		\leq \frac{1}{2} \sup_{z \in T} \lvert \lVert Pz \rVert _2^2 - \lVert z \rVert _2^2 \rvert.
	\]

	Define the sub-Gaussian stochastic process \(Y_z = \lVert Pz \rVert _2 - \lVert z \rVert _2\) for \(z \in T\), similar to the proof of \Cref{prop:high-prob-cov-estimation}. Applying the Talagrand comparison inequality~\citep[Theorem 3.2]{dirksen2015tail}, with probability at least \(1 - 2e^{-u}\),
	\[
		\sup_{z \in T} \lvert \lVert Pz \rVert_2 - \lVert z \rVert _2 \rvert
		\leq \frac{C}{\sqrt{m}} (\gamma(T) + \sqrt{u} \cdot \rad(T)).
	\]
	We analyze the radius and Gaussian complexity of \(T\):
	\begin{itemize}
		\item \(\rad(T)\): For any \(z \in T\), \(z = x \pm g^{\prime}_{\perp}\). Since \(x \perp g^{\prime}_{\perp}\) as \(x \in \range(F)\) and \(g^{\prime}_{\perp} \in \ker(F)\), the Pythagorean theorem gives \(\lVert z \rVert_2^2 = \lVert x \rVert_2^2 + \lVert g^{\prime}_{\perp} \rVert_2^2 = 1 + 1 = 2\), giving \(\rad(T) = \sqrt{2} = O(1)\).
		\item \(\gamma(T)\): By definition, \(\gamma(T) = \mathbb{E}[\sup_{z \in T} \lvert \langle h, z \rangle \rvert]\) for \(h \sim \mathcal{N}(0, I_d)\). For \(z = x \pm g^{\prime}_{\perp}\), we have \(\langle h, x\pm g^{\prime}_{\perp} \rangle = \langle h, x \rangle \pm \langle h, g^{\prime}_{\perp} \rangle\). Thus,
		      \[
			      \gamma(T)
			      \leq \mathbb{E}\left[ \sup_{x \in US^{r-1}} \lvert \langle h, x \rangle \rvert \right] + \mathbb{E}[\lvert \langle h, g^{\prime}_{\perp} \rangle \rvert].
		      \]
		      The first term is the Gaussian complexity of the unit sphere in an \(r\)-dimensional subspace, which is bounded by \(\sqrt{r}\). The second term is \(\mathbb{E}[Z]\) for \(Z \sim \mathcal{N}(0, 1)\), which is \(\sqrt{2 / \pi}\). Overall, \(\gamma(T) \leq \sqrt{r} + \sqrt{2 / \pi} \lesssim \sqrt{r}\).
	\end{itemize}
	With again \(\lvert a^2 - b^2 \rvert \leq \lvert a - b \rvert (\lvert a - b \rvert + 2b)\) with \(a = \lVert Pz \rVert _2\) and \(b = \lVert z \rVert _2 = \sqrt{2}\), we have
	\[
		\sup_{z \in T} \lvert \lVert Pz \rVert _2^2 - \lVert z \rVert _2^2 \rvert
		\leq \frac{C}{\sqrt{m}} (\gamma(T) + \sqrt{u} \rad(T)) \left( \frac{C}{\sqrt{m}} (\gamma(T) + \sqrt{u} \rad(T)) + 2 \rad(T)\right).
	\]
	Distributing the terms and substituting \(\rad(T) = \sqrt{2}\) and \(\gamma(T) \leq \sqrt{r} + 1\), we have
	\[
		\sup_{z \in T} \lvert \lVert Pz \rVert _2^2 - \lVert z \rVert _2^2 \rvert
		\leq C \left( \frac{r + u}{m} + \sqrt{\frac{r + u}{m}}\right).
	\]

	Setting \(u = \log(4k/\delta)\) ensures that \(2e^{-u} = \delta/(2k)\). Hence, for a fixed \(g^{\prime}_{\perp}\), we have \(\lVert U^{\top} (P^{\top} P - I_d) g^{\prime}_{\perp} \rVert _2 \leq \varepsilon\) with probability at least \(1 - \delta/(2k)\), provided that \(m = \Omega((r + \log(k / \delta)) / \varepsilon^2)\). By a union bound over \(j \in \{1, \ldots, k\}\), the bound holds simultaneously for all \(k\) test gradients with probability at least \(1-\delta/2\).

	Finally, taking a union bound over the two failure events (the subspace event and the \(k\) bilinear events), the same argument (with \(\varepsilon\) replaced by \(\varepsilon/4\)) yields that with probability at least \(1-\delta\), \(\bigl\lVert U^{\top}(P^{\top}P-I_d)U\bigr\rVert_2 \leq \varepsilon/4\) and
	\[
		\bigl\lVert U^{\top}(P^{\top}P-I_d)g^{\prime}_{j,\perp}\bigr\rVert_2
		\leq (\varepsilon/4) \lVert g^{\prime}_{j,\perp}\rVert_2
		\quad \text{for all }j\in\{1,\dots,k\}.
	\]
	On this event, we apply \Cref{lma:two-conc-suffice} with parameter \(\varepsilon/4\). The regularized leakage bound then holds with prefactor \(\varepsilon/4\leq\varepsilon\). For the unregularized leakage bound, we obtain
	\[
		\bigl\lvert (Pg)^{\top}(PFP^{\top})^{\dagger}(Pg^{\prime}_{j,\perp})\bigr\rvert
		\leq \frac{\varepsilon}{4}\cdot\frac{1+\varepsilon/4}{(1-\varepsilon/4)^2}\cdot\frac{\lVert g\rVert_2\lVert g^{\prime}_{j,\perp}\rVert_2}{\lambda_{\min}^{+}(F)}
		\leq \varepsilon\cdot\frac{\lVert g\rVert_2\lVert g^{\prime}_{j,\perp}\rVert_2}{\lambda_{\min}^{+}(F)},
	\]
	using \(\frac{1+\varepsilon/4}{(1-\varepsilon/4)^2}\leq \frac{20}{9}\) as in \Cref{prop:leakage}. This completes the proof.
\end{proof}
\section{Proofs for \texorpdfstring{\Cref{subsec:factorized-influence-leakage}}{Section 3.2} (Leakage of Factorized Influence)}\label{adxsec:factorized-influence-leakage}
In this subsection, we extend the leakage analysis in \Cref{adxsec:projection-leakage} to the factorized influence setting. We consider curvature matrices of the form
\[
    F = A \otimes E \in \mathbb{R}^{(d_A d_E)\times(d_A d_E)},
\]
where \(A\succeq 0\) and \(E\succeq 0\). We analyze a factorized sketch
\[
    P = P_A \otimes P_E,
    \qquad
    P_A\in\mathbb{R}^{m_A\times d_A},
    P_E\in\mathbb{R}^{m_E\times d_E},
\]
so that \(P\in\mathbb{R}^{(m_A m_E)\times(d_A d_E)} = \mathbb{R}^{m \times d}\) with \(m = m_A m_E\) and \(d = d_A d_E\). Throughout, we assume \(P_A\) and \(P_E\) are both oblivious sketches as defined in \Cref{thm:regularized-projection-upper-bound}. We will show that the only new work needed is to bound the cross-term quantity
\(\|U^\top(P^\top P-I)g^{\prime}_{\perp}\|_2\) (for kernel components \(g^{\prime}_{\perp}\in\ker(F)\)) appearing in \Cref{lma:two-conc-suffice} via factor-level primitive bounds.

\begin{theorem*}
    Let \(A, E \succeq 0\) and \(F \coloneqq A \otimes E\), and let \(P = P_A \otimes P_E\) with \(P_A\in\mathbb{R}^{m_A\times d_A}\) and \(P_E\in\mathbb{R}^{m_E\times d_E}\). Let \(r_A\coloneqq\rank(A)\), \(r_E\coloneqq\rank(E)\), and \(r\coloneqq\rank(F)=r_A r_E\).

    Let \(\{g^{\prime}_j\}_{j=1}^k \subset \mathbb{R}^{d_A d_E}\) be test gradients of the form \(g^{\prime}_j=a^{\prime}_j\otimes e^{\prime}_j\). For each \(j\), define the kernel component \(g^{\prime}_{j,\perp}\coloneqq \Pi_{\ker(F)}g^{\prime}_j\). Write
    \(a^{\prime}_j=a^{\prime}_{j,\parallel}+a^{\prime}_{j,\perp}\) with \(a^{\prime}_{j,\parallel}\in\range(A)\) and \(a^{\prime}_{j,\perp}\perp\range(A)\), and similarly \(e^{\prime}_j=e^{\prime}_{j,\parallel}+e^{\prime}_{j,\perp}\).
    Define \(k_A \coloneqq \sum_{j=1}^k \mathbbm{1}(a^{\prime}_{j,\perp}\neq 0)\), \(k_E \coloneqq \sum_{j=1}^k \mathbbm{1}(e^{\prime}_{j,\perp}\neq 0)\), and
    \(k_A^{\prime} \coloneqq \dim\bigl(\Span(\{a^{\prime}_{j,\perp}\}_{j=1}^k)\bigr)\), \(k_E^{\prime} \coloneqq \dim\bigl(\Span(\{e^{\prime}_{j,\perp}\}_{j=1}^k)\bigr)\).
    For any \(\varepsilon,\delta\in(0,1)\), if
    \[
        m_A = \Omega\Bigg(\frac{r_A+\min\{\log(\frac{k_A}{\delta}), k_A^{\prime}+\log(\frac{1}{\delta})\}}{\varepsilon^2}\Bigg),
        m_E = \Omega\Bigg(\frac{r_E+\min\{\log(\frac{k_E}{\delta}), k_E^{\prime}+\log(\frac{1}{\delta})\}}{\varepsilon^2}\Bigg),
    \]
    then with probability at least \(1-\delta\), the following bounds hold simultaneously for all \(j\in\{1,\dots,k\}\):
    \begin{itemize}[leftmargin=*]
        \item \textbf{Unregularized:} \(\lvert \widetilde{\tau}_0(g, g^{\prime}_{j,\perp}) \rvert \leq \varepsilon  \lVert g \rVert _2\lVert g^{\prime}_{j,\perp}\rVert _2 / \lambda_{\min}^{+}(F)\).
        \item \textbf{Regularized:} \(\lvert \widetilde{\tau}_{\lambda}(g, g^{\prime}_{j,\perp}) \rvert \leq \varepsilon  \lVert g \rVert _2\lVert g^{\prime}_{j,\perp}\rVert _2 \bigl(\frac{1}{\lambda} + \frac{2\lVert F \rVert _2}{\lambda^2}\bigr)\) for any \(\lambda>0\),
    \end{itemize}
\end{theorem*}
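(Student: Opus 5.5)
The proof reduces to \Cref{lma:two-conc-suffice}. It is a deterministic statement about a fixed realization of \(P\), so it applies verbatim to \(P=P_A\otimes P_E\). It therefore suffices to show that, with probability at least \(1-\delta\), conditions \labelcref{lma:two-conc-suffice-i} and \labelcref{lma:two-conc-suffice-ii} hold with accuracy \(c\varepsilon\) for a small absolute constant \(c\). Condition \labelcref{lma:two-conc-suffice-i} is handled with \(U=U_A\otimes U_E\) and \(\Delta_A,\Delta_E\) as in \Cref{thm:factorized-covariance}. Expanding
\(P^\top P-I=\Delta_A\otimes I+I\otimes\Delta_E+\Delta_A\otimes\Delta_E\) gives
\[
\lVert U^\top(P^\top P-I)U\rVert_2\le \eta_A+\eta_E+\eta_A\eta_E,
\qquad \eta_A\coloneqq\lVert U_A^\top\Delta_AU_A\rVert_2,\ \eta_E\coloneqq\lVert U_E^\top\Delta_EU_E\rVert_2 .
\]
Applying \Cref{lma:covariance-concentration} to \(U_A\) and to \(U_E\), whose stable ranks are \(r_A\) and \(r_E\), shows that \(\eta_A,\eta_E\le c\varepsilon\) once \(m_A\gtrsim (r_A+\log(1/\delta))/\varepsilon^2\) and \(m_E\gtrsim (r_E+\log(1/\delta))/\varepsilon^2\).

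For condition \labelcref{lma:two-conc-suffice-ii}, write \(\Pi_F=\Pi_A\otimes\Pi_E\). Then
\[
g'_{j,\perp}=a'_{j,\perp}\otimes e'_{j,\parallel}+a'_{j,\parallel}\otimes e'_{j,\perp}+a'_{j,\perp}\otimes e'_{j,\perp},
\]
and the three terms are mutually orthogonal, so \(\lVert g'_{j,\perp}\rVert_2^2\) is the sum of their squared norms. The plan is to prove a cross-term lemma (\Cref{lma:kfac-cross-term}) that applies \((U_A^\top\otimes U_E^\top)(P_A^\top P_A\otimes P_E^\top P_E)\) to each term and uses \(U_A^\top a'_\perp=0\), \(U_E^\top e'_\perp=0\). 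On the first term this gives
\[
\bigl(U_A^\top\Delta_A a'_{\perp}\bigr)\otimes\bigl(U_E^\top P_E^\top P_E e'_{\parallel}\bigr),
\]
whose norm is at most \(\alpha_j(1+\eta_E)\lVert a'_\perp\rVert\,\lVert e'_\parallel\rVert\), where \(\alpha_j\coloneqq\lVert U_A^\top\Delta_A \hat a'_{j,\perp}\rVert_2\) and \(\hat a'_{j,\perp}\) is the unit vector along \(a'_{j,\perp}\). Symmetrically, the second term has norm at most \((1+\eta_A)\beta_j\lVert a'_\parallel\rVert\,\lVert e'_\perp\rVert\), with \(\beta_j\) defined analogously. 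The third term factors as \((U_A^\top\Delta_A a'_\perp)\otimes(U_E^\top\Delta_E e'_\perp)\), with norm at most \(\alpha_j\beta_j\lVert a'_\perp\rVert\,\lVert e'_\perp\rVert\). Summing these bounds and applying Cauchy--Schwarz against the orthogonal decomposition gives \(\lVert U^\top(P^\top P-I)g'_{j,\perp}\rVert_2\lesssim \max(\alpha_j,\beta_j)\,\lVert g'_{j,\perp}\rVert_2\), provided \(\eta_A,\eta_E\le 1\).

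The remaining task is to bound the factor-level primitives \(\alpha_j\) and \(\beta_j\) by \(c\varepsilon\), uniformly over the at most \(k_A\) (respectively \(k_E\)) indices where they are nonzero. This is exactly the single-sketch cross-term problem already solved for i.i.d.\ sketches, with \(\range(A)\) playing the role of \(\range(F)\). There are two routes, and we take whichever gives the smaller sketch size. The union-bound route reruns the Talagrand/polarization argument of \Cref{prop:multiple-k} on \(P_A\), which requires \(m_A\gtrsim (r_A+\log(k_A/\delta))/\varepsilon^2\). The subspace route applies \Cref{lma:covariance-concentration} on \(\Span(\range(A)\cup\{a'_{j,\perp}\})\), as in \Cref{prop:multiple-k-prime}, which requires \(m_A\gtrsim (r_A+k_A'+\log(1/\delta))/\varepsilon^2\). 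The same two routes apply to \(P_E\). We then allocate failure probability \(\delta/4\) to each of the four events and take a union bound. On the intersection, \Cref{lma:two-conc-suffice} applies with parameter \(c\varepsilon\), which yields both leakage bounds; the unregularized prefactor \((1+\varepsilon)/(1-\varepsilon)^2\) is absorbed exactly as in \Cref{prop:multiple-k}.

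I expect the main obstacle to be the cross-term lemma. The issue is that \(P^\top P\) mixes the two factors through the nonzero term \(U_E^\top P_E^\top P_E e'_\parallel\), which is not small, so the factor-level deviations must be kept separate from the factor-level isometry constants. The difficulty is bookkeeping rather than substance: we need a clean bound of the form \((1+\eta)\cdot\text{primitive}\), and we need the orthogonality of the three kernel pieces so that their norms combine into \(\lVert g'_{j,\perp}\rVert_2\) without losing more than a constant factor.
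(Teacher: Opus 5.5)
Your proposal is correct and follows the paper's proof essentially step for step: the deterministic reduction to \Cref{lma:two-conc-suffice}, factor-level stability via the expansion $P^{\top}P-I=\Delta_A\otimes I+I\otimes\Delta_E+\Delta_A\otimes\Delta_E$, a Kronecker cross-term lemma, and control of the factor-level primitives on $a'_{j,\perp}$ and $e'_{j,\perp}$ by either a union bound or a subspace argument, with the failure probability split four ways. The only cosmetic difference is in the cross-term lemma: you keep $U_E^{\top}P_E^{\top}P_E e'_{\parallel}$ intact and bound it by $(1+\eta_E)\lVert e'_{\parallel}\rVert_2$, which gives three terms instead of the five in \Cref{lma:kfac-cross-term}; the paper's two extra primitives, on $a'_{\parallel}$ and $e'_{\parallel}$, are derived from the same operator-norm events anyway.
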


\paragraph{Setup and Notation.}
We fix orthonormal bases \(U_A\in\mathbb{R}^{d_A\times r_A}\) and \(U_E\in\mathbb{R}^{d_E\times r_E}\) for \(\range(A)\) and \(\range(E)\), respectively, and write \(U\coloneqq U_A\otimes U_E\) for the induced orthonormal basis of \(\range(F)=\range(A)\otimes\range(E)\) (so \(r=\rank(F)=r_A r_E\)).

For factorized test gradients \(g^{\prime}=a^{\prime}\otimes e^{\prime}\), we decompose \(a^{\prime} = a^{\prime}_{\parallel} + a^{\prime}_{\perp}\) with \(a^{\prime}_{\parallel}\in\range(A)\) and \(a^{\prime}_{\perp}\perp\range(A)\), and similarly \(e^{\prime} = e^{\prime}_{\parallel} + e^{\prime}_{\perp}\). The orthogonal projection of \(g^{\prime}\) onto \(\ker(F)\) is
\begin{equation}\label{eq:kfac-gperp}
    g^{\prime}_{\perp}
    =
    a^{\prime}_{\parallel}\otimes e^{\prime}_{\perp}
    +
    a^{\prime}_{\perp}\otimes e^{\prime}_{\parallel}
    +
    a^{\prime}_{\perp}\otimes e^{\prime}_{\perp}.
\end{equation}
In particular, \(g^{\prime}_{\perp}\in\ker(F)\), so (as in the i.i.d.\ case) it suffices to analyze leakage terms with kernel components \(g^{\prime}_{\perp}\in\ker(F)\).

\subsection{Proof Plan for \texorpdfstring{\Cref{thm:factorized-influence-leakage}}{Theorem 6}}\label{adxsubsec:proof-plan-for-thm:factorized-influence-leakage}
The factorized proof follows the same structure as the i.i.d.\ sketch case in \Cref{adxsec:projection-leakage}:
\begin{enumerate}
    \item \textbf{Deterministic reduction to two concentration conditions}. By \Cref{lma:two-conc-suffice}, it is enough to verify
          \begin{enumerate*}[label=(\roman*)]
              \item subspace concentration on \(\range(F)\), i.e., \(\lVert U^{\top}(P^{\top}P-I)U\rVert_2\leq\varepsilon\), and
              \item cross-term concentration \(\lVert U^{\top}(P^{\top}P-I)g^{\prime}_{j,\perp}\rVert_2\leq\varepsilon\lVert g^{\prime}_{j,\perp}\rVert_2\)
                    for the relevant kernel components \(\{g^{\prime}_{j,\perp}\}_{j=1}^k\).
          \end{enumerate*}
    \item \textbf{Stability on \(\range(F)=\range(A)\otimes\range(E)\)}. We control \(\lVert U^{\top}(P^{\top}P-I)U\rVert_2\) by bounding the corresponding factor-level deviations on \(\range(A)\) and \(\range(E)\).
    \item \textbf{Cross-term via Kronecker reduction with primitive bounds}. We expand \(P^{\top}P-I\) into factor sketch deviations and use \Cref{lma:kfac-cross-term} to reduce the cross-term \(\lVert U^{\top}(P^{\top}P-I)g^{\prime}_{j,\perp}\rVert_2\) to a collection of factor-level primitive quantities. These primitives are then controlled uniformly over the \(k\) test gradients using either a union bound over the nonzero out-of-range factor components (yielding the logarithmic dependence on \(k_A,k_E\)) or a subspace argument on their spans (yielding the \(k_A^{\prime},k_E^{\prime}\) dependence); see \Cref{prop:kfac-primitive-multi}.
    \item \textbf{Conclusion}. Plugging the primitive bounds into \Cref{lma:kfac-cross-term} and then into \Cref{lma:two-conc-suffice} yields \Cref{thm:factorized-influence-leakage}.
\end{enumerate}

The single-gradient proofs in \Cref{prop:leakage} (and the uniform extensions in \Cref{prop:multiple-k-prime}) depend on \(P\) only through two inequalities in \Cref{lma:two-conc-suffice}. In the factorized influence setting, the only additional step is to control the cross-term \(\lVert U^\top(P^\top P-I)g^{\prime}_{\perp}\rVert _2\) for \(g^{\prime}_{\perp}\in\ker(F)\) from factor-level primitive quantities. Define the factor sketch deviations
\[
    \Delta_A \coloneqq P_A^\top P_A - I_{d_A}, \qquad
    \Delta_E \coloneqq P_E^\top P_E - I_{d_E}.
\]
A direct expansion shows
\begin{equation}\label{eq:kfac-PtP-expand}
    P^\top P - I_{d_A d_E}
    = \Delta_A\otimes I_{d_E} + I_{d_A}\otimes \Delta_E + \Delta_A\otimes \Delta_E.
\end{equation}

The same expansion also makes the stability condition in \Cref{lma:two-conc-suffice} explicit.

\begin{lemma}\label{lma:kfac-stability}
    Assume \(\lVert U_A^{\top}\Delta_A U_A\rVert_2\leq \varepsilon\) and \(\lVert U_E^{\top}\Delta_E U_E\rVert_2\leq \varepsilon\) for some \(\varepsilon\in(0,1)\). Then with \(U=U_A\otimes U_E\),
    \[
        \lVert U^{\top}(P^{\top}P-I)U\rVert_2
        \leq \lVert U_A^{\top}\Delta_A U_A\rVert_2 + \lVert U_E^{\top}\Delta_E U_E\rVert_2 + \lVert U_A^{\top}\Delta_A U_A\rVert_2 \lVert U_E^{\top}\Delta_E U_E\rVert_2
        \leq 3\varepsilon.
    \]
\end{lemma}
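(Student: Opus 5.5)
The plan is to substitute the Kronecker expansion \eqref{eq:kfac-PtP-expand} into \(U^{\top}(P^{\top}P-I)U\) and then handle each of the three resulting terms using the mixed-product identity. Because \(U=U_A\otimes U_E\), we have \(U^{\top}=U_A^{\top}\otimes U_E^{\top}\), and so
\[
U^{\top}(\Delta_A\otimes I_{d_E})U=(U_A^{\top}\Delta_A U_A)\otimes(U_E^{\top}U_E)=(U_A^{\top}\Delta_A U_A)\otimes I_{r_E},
\]
where the last step uses that \(U_E\) has orthonormal columns. By the same reasoning, \(U^{\top}(I_{d_A}\otimes\Delta_E)U=I_{r_A}\otimes(U_E^{\top}\Delta_E U_E)\). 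For the third term, \(U^{\top}(\Delta_A\otimes\Delta_E)U=(U_A^{\top}\Delta_A U_A)\otimes(U_E^{\top}\Delta_E U_E)\).

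Applying the triangle inequality to the sum of these three matrices, together with the multiplicativity \(\lVert X\otimes Y\rVert_2=\lVert X\rVert_2\lVert Y\rVert_2\) and the fact that \(\lVert I\rVert_2=1\), yields the first inequality of the statement. To get the final bound, insert the two hypotheses: the result is at most \(\varepsilon+\varepsilon+\varepsilon^2\). Since \(\varepsilon<1\), we have \(\varepsilon^2\le\varepsilon\), which gives the bound \(3\varepsilon\).

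None of these steps is a real obstacle, because the lemma is deterministic. The one point that needs care is justifying \(U_A^{\top}U_A=I_{r_A}\) and \(U_E^{\top}U_E=I_{r_E}\), since this is what makes the identity factors collapse to identities of the right size. It is also worth noting that \(U_A\otimes U_E\) is indeed an orthonormal basis of \(\range(F)\), although this fact is used only to interpret the result and not in the inequality itself.
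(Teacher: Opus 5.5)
Your proposal is correct and follows the paper's proof essentially step for step: it substitutes the expansion \(P^{\top}P-I=\Delta_A\otimes I+I\otimes\Delta_E+\Delta_A\otimes\Delta_E\), uses the mixed-product identity with \(U_A^{\top}U_A=I_{r_A}\) and \(U_E^{\top}U_E=I_{r_E}\) to get the three Kronecker terms, and then applies the triangle inequality and \(\lVert X\otimes Y\rVert_2=\lVert X\rVert_2\lVert Y\rVert_2\). Your closing step \(\varepsilon+\varepsilon+\varepsilon^2\leq 3\varepsilon\) for \(\varepsilon<1\) makes explicit the final inequality that the paper leaves implicit.
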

\begin{proof}
    Using \Cref{eq:kfac-PtP-expand} and \(U^{\top}=(U_A\otimes U_E)^{\top}=U_A^{\top}\otimes U_E^{\top}\), we have
    \[
        U^{\top}(P^{\top}P-I)U
        =(U_A^{\top}\Delta_A U_A)\otimes I_{r_E} + I_{r_A}\otimes (U_E^{\top}\Delta_E U_E) + (U_A^{\top}\Delta_A U_A)\otimes (U_E^{\top}\Delta_E U_E).
    \]
    Taking operator norms and using \(\lVert X\otimes Y\rVert_2=\lVert X\rVert_2\lVert Y\rVert_2\) gives the claim.
\end{proof}

\begin{lemma}\label{lma:kfac-cross-term}
    Fix \(P_A,P_E\) (hence \(P\)), and let \(U_A,U_E\) be orthonormal bases for \(\range(A)\) and \(\range(E)\), and \(U\coloneqq U_A\otimes U_E\). Let \(g^{\prime} = a^{\prime}\otimes e^{\prime}\), decompose \(a^{\prime}=a^{\prime}_{\parallel}+a^{\prime}_{\perp}\) and \(e^{\prime}=e^{\prime}_{\parallel}+e^{\prime}_{\perp}\), and let \(g^{\prime}_{\perp}\) be the orthogonal projection of \(g^{\prime}\) onto \(\ker(F)\) given by \Cref{eq:kfac-gperp}. Define \(\Delta_A\coloneqq P_A^{\top}P_A-I_{d_A}\) and \(\Delta_E\coloneqq P_E^{\top}P_E-I_{d_E}\).

    Then
    \begin{equation}\label{eq:kfac-cross-term-9}
        \begin{split}
            \lVert U^{\top}(P^{\top}P-I)g^{\prime}_{\perp}\rVert_2
             & \leq
            \lVert U_A^{\top}\Delta_A a^{\prime}_{\perp}\rVert_2 \lVert e^{\prime}_{\parallel}\rVert_2
            +
            \lVert a^{\prime}_{\parallel}\rVert_2 \lVert U_E^{\top}\Delta_E e^{\prime}_{\perp}\rVert_2                    \\
             & \quad+
            \lVert U_A^{\top}\Delta_A a^{\prime}_{\parallel}\rVert_2 \lVert U_E^{\top}\Delta_E e^{\prime}_{\perp}\rVert_2
            +
            \lVert U_A^{\top}\Delta_A a^{\prime}_{\perp}\rVert_2 \lVert U_E^{\top}\Delta_E e^{\prime}_{\parallel}\rVert_2 \\
             & \quad+
            \lVert U_A^{\top}\Delta_A a^{\prime}_{\perp}\rVert_2 \lVert U_E^{\top}\Delta_E e^{\prime}_{\perp}\rVert_2.
        \end{split}
    \end{equation}
    In particular, if for some \(\varepsilon\in(0,1)\),
    \begin{equation}\label{eq:kfac-primitive-eps}
        \lVert U_A^{\top}\Delta_A x\rVert_2 \leq \varepsilon\lVert x\rVert_2
        \ \text{ for }x\in\{a^{\prime}_{\parallel},a^{\prime}_{\perp}\},
        \qquad
        \lVert U_E^{\top}\Delta_E y\rVert_2 \leq \varepsilon\lVert y\rVert_2
        \ \text{ for }y\in\{e^{\prime}_{\parallel},e^{\prime}_{\perp}\},
    \end{equation}
    then
    \begin{equation}\label{eq:kfac-cross-term-final}
        \lVert U^{\top}(P^{\top}P-I)g^{\prime}_{\perp}\rVert_2
        \leq (2\varepsilon+3\varepsilon^2)\bigl(\lVert a^{\prime}_{\parallel}\rVert_2\lVert e^{\prime}_{\perp}\rVert_2+\lVert a^{\prime}_{\perp}\rVert_2\lVert e^{\prime}_{\parallel}\rVert_2+\lVert a^{\prime}_{\perp}\rVert_2\lVert e^{\prime}_{\perp}\rVert_2\bigr)
        \leq 5\sqrt{3} \varepsilon\lVert g^{\prime}_{\perp}\rVert_2.
    \end{equation}
\end{lemma}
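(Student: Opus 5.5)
The plan is to expand \(g^{\prime}_{\perp}\) using \Cref{eq:kfac-gperp} and \(P^{\top}P-I\) using \Cref{eq:kfac-PtP-expand}, which produces \(3\times 3=9\) Kronecker-structured terms. Each term has the form \(U^{\top}(X\otimes Y)(x\otimes y) = (U_A^{\top}Xx)\otimes(U_E^{\top}Yy)\). Here \(X\in\{\Delta_A, I\}\), \(Y\in\{\Delta_E,I\}\) (not both identity), \(x\in\{a^{\prime}_{\parallel},a^{\prime}_{\perp}\}\), and \(y\in\{e^{\prime}_{\parallel},e^{\prime}_{\perp}\}\), with \((x,y)\neq(a^{\prime}_{\parallel},e^{\prime}_{\parallel})\). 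The norm of each term factorizes as \(\lVert U_A^{\top}Xx\rVert_2\lVert U_E^{\top}Yy\rVert_2\). The key simplification is that \(U_A^{\top}a^{\prime}_{\perp}=0\) and \(U_E^{\top}e^{\prime}_{\perp}=0\), so any term with an identity factor acting on a perpendicular component vanishes.

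I would go through the cases one by one.
\begin{itemize}
\item For \(a^{\prime}_{\parallel}\otimes e^{\prime}_{\perp}\): the \(\Delta_A\otimes I\) term vanishes. The \(I\otimes\Delta_E\) term gives \(\lVert a^{\prime}_{\parallel}\rVert_2\lVert U_E^{\top}\Delta_E e^{\prime}_{\perp}\rVert_2\), using \(\lVert U_A^{\top}a^{\prime}_{\parallel}\rVert_2=\lVert a^{\prime}_{\parallel}\rVert_2\). The cross term gives \(\lVert U_A^{\top}\Delta_A a^{\prime}_{\parallel}\rVert_2\lVert U_E^{\top}\Delta_E e^{\prime}_{\perp}\rVert_2\).
\item For \(a^{\prime}_{\perp}\otimes e^{\prime}_{\parallel}\): the case is symmetric, giving the first and fourth terms of \Cref{eq:kfac-cross-term-9}.
\item For \(a^{\prime}_{\perp}\otimes e^{\prime}_{\perp}\): only the \(\Delta_A\otimes\Delta_E\) term survives, giving the fifth term.
\end{itemize}
The triangle inequality then yields \Cref{eq:kfac-cross-term-9}, with exactly five surviving terms.

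For \Cref{eq:kfac-cross-term-final}, I substitute \Cref{eq:kfac-primitive-eps}. The coefficient of \(\lVert a^{\prime}_{\parallel}\rVert_2\lVert e^{\prime}_{\perp}\rVert_2\) becomes \(\varepsilon+\varepsilon^2\), that of \(\lVert a^{\prime}_{\perp}\rVert_2\lVert e^{\prime}_{\parallel}\rVert_2\) likewise becomes \(\varepsilon+\varepsilon^2\), and that of \(\lVert a^{\prime}_{\perp}\rVert_2\lVert e^{\prime}_{\perp}\rVert_2\) becomes \(\varepsilon^2\). All are dominated by \(2\varepsilon+3\varepsilon^2\), which gives the first inequality.

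For the second inequality, I use that the three summands in \Cref{eq:kfac-gperp} are mutually orthogonal, since their Kronecker factors are orthogonal in at least one slot. Hence \(\lVert g^{\prime}_{\perp}\rVert_2^2\) equals the sum of the squared products of norms. Cauchy--Schwarz on three terms then bounds the sum of products by \(\sqrt{3}\lVert g^{\prime}_{\perp}\rVert_2\). Finally, \(2\varepsilon+3\varepsilon^2\le5\varepsilon\) for \(\varepsilon<1\), which gives \(5\sqrt3\,\varepsilon\).

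The main obstacle is bookkeeping: verifying which of the nine terms vanish and keeping the Kronecker mixed-product and transposition conventions consistent. The fact that \(\lVert X\otimes Y\rVert_2=\lVert X\rVert_2\lVert Y\rVert_2\) is clean for vectors, so no analytic difficulty is expected.
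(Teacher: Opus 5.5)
Your proposal is correct and follows the paper's proof essentially verbatim. You expand \((P^{\top}P-I)g^{\prime}_{\perp}\) into nine Kronecker terms, discard the four containing \(U_A^{\top}a^{\prime}_{\perp}\) or \(U_E^{\top}e^{\prime}_{\perp}\), apply the triangle inequality with \(\lVert u\otimes v\rVert_2=\lVert u\rVert_2\lVert v\rVert_2\), and finish with orthogonality of the three summands of \(g^{\prime}_{\perp}\), Cauchy--Schwarz, and \(2\varepsilon+3\varepsilon^2\le 5\varepsilon\); your case-by-case bookkeeping is accurate and even exhibits the sharper coefficients \(\varepsilon+\varepsilon^2\) and \(\varepsilon^2\).
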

\begin{proof}
    Start from the decompositions \Cref{eq:kfac-PtP-expand,eq:kfac-gperp}:
    \[
        (P^{\top}P-I)g^{\prime}_{\perp}
        =(\Delta_A\otimes I+I\otimes\Delta_E+\Delta_A\otimes\Delta_E) \bigl(a^{\prime}_{\parallel}\otimes e^{\prime}_{\perp}+a^{\prime}_{\perp}\otimes e^{\prime}_{\parallel}+a^{\prime}_{\perp}\otimes e^{\prime}_{\perp}\bigr).
    \]

    Expanding gives nine Kronecker products. Applying \(U^{\top}=U_A^{\top}\otimes U_E^{\top}\) yields the explicit expansion
    \begin{align}\label{eq:kfac-cross-term-9terms}
         & U^{\top}(P^{\top}P-I)g^{\prime}_{\perp} \notag                                                                                                                                  \\
         & =
        \underbrace{\bigl(U_A^{\top}\Delta_A a^{\prime}_{\parallel}\bigr)\otimes\bigl(U_E^{\top} e^{\prime}_{\perp}\bigr)}_{(\Delta_A\otimes I)(a^{\prime}_{\parallel}\otimes e^{\prime}_{\perp})}
        +
        \underbrace{\bigl(U_A^{\top}\Delta_A a^{\prime}_{\perp}\bigr)\otimes\bigl(U_E^{\top} e^{\prime}_{\parallel}\bigr)}_{(\Delta_A\otimes I)(a^{\prime}_{\perp}\otimes e^{\prime}_{\parallel})}
        +
        \underbrace{\bigl(U_A^{\top}\Delta_A a^{\prime}_{\perp}\bigr)\otimes\bigl(U_E^{\top} e^{\prime}_{\perp}\bigr)}_{(\Delta_A\otimes I)(a^{\prime}_{\perp}\otimes e^{\prime}_{\perp})} \\
         & \quad+
        \underbrace{\bigl(U_A^{\top} a^{\prime}_{\parallel}\bigr)\otimes\bigl(U_E^{\top}\Delta_E e^{\prime}_{\perp}\bigr)}_{(I\otimes\Delta_E)(a^{\prime}_{\parallel}\otimes e^{\prime}_{\perp})}
        +
        \underbrace{\bigl(U_A^{\top} a^{\prime}_{\perp}\bigr)\otimes\bigl(U_E^{\top}\Delta_E e^{\prime}_{\parallel}\bigr)}_{(I\otimes\Delta_E)(a^{\prime}_{\perp}\otimes e^{\prime}_{\parallel})}
        +
        \underbrace{\bigl(U_A^{\top} a^{\prime}_{\perp}\bigr)\otimes\bigl(U_E^{\top}\Delta_E e^{\prime}_{\perp}\bigr)}_{(I\otimes\Delta_E)(a^{\prime}_{\perp}\otimes e^{\prime}_{\perp})}  \\
         & \quad+
        \underbrace{\bigl(U_A^{\top}\Delta_A a^{\prime}_{\parallel}\bigr)\otimes\bigl(U_E^{\top}\Delta_E e^{\prime}_{\perp}\bigr)}_{(\Delta_A\otimes\Delta_E)(a^{\prime}_{\parallel}\otimes e^{\prime}_{\perp})}
        +
        \underbrace{\bigl(U_A^{\top}\Delta_A a^{\prime}_{\perp}\bigr)\otimes\bigl(U_E^{\top}\Delta_E e^{\prime}_{\parallel}\bigr)}_{(\Delta_A\otimes\Delta_E)(a^{\prime}_{\perp}\otimes e^{\prime}_{\parallel})}
        +
        \underbrace{\bigl(U_A^{\top}\Delta_A a^{\prime}_{\perp}\bigr)\otimes\bigl(U_E^{\top}\Delta_E e^{\prime}_{\perp}\bigr)}_{(\Delta_A\otimes\Delta_E)(a^{\prime}_{\perp}\otimes e^{\prime}_{\perp})}.
    \end{align}
    Since \(U_A^{\top}a^{\prime}_{\perp}=0\) and \(U_E^{\top}e^{\prime}_{\perp}=0\), four terms vanish, leaving the five nonzero contributions
    \begin{align}\label{eq:kfac-cross-term-explicit}
         & U^{\top}(P^{\top}P-I)g^{\prime}_{\perp}                                                                    \\
         & \qquad=
        \bigl(U_A^{\top}\Delta_A a^{\prime}_{\perp}\bigr)\otimes\bigl(U_E^{\top} e^{\prime}_{\parallel}\bigr)
        +
        \bigl(U_A^{\top} a^{\prime}_{\parallel}\bigr)\otimes\bigl(U_E^{\top}\Delta_E e^{\prime}_{\perp}\bigr)
        +
        \bigl(U_A^{\top}\Delta_A a^{\prime}_{\parallel}\bigr)\otimes\bigl(U_E^{\top}\Delta_E e^{\prime}_{\perp}\bigr) \\
         & \qquad\quad+
        \bigl(U_A^{\top}\Delta_A a^{\prime}_{\perp}\bigr)\otimes\bigl(U_E^{\top}\Delta_E e^{\prime}_{\parallel}\bigr)
        +
        \bigl(U_A^{\top}\Delta_A a^{\prime}_{\perp}\bigr)\otimes\bigl(U_E^{\top}\Delta_E e^{\prime}_{\perp}\bigr).
    \end{align}

    Taking Euclidean norms and using \(\lVert u\otimes v\rVert_2=\lVert u\rVert_2\lVert v\rVert_2\), together with \(\lVert U_A^{\top}a^{\prime}_{\parallel}\rVert_2=\lVert a^{\prime}_{\parallel}\rVert_2\) and \(\lVert U_E^{\top}e^{\prime}_{\parallel}\rVert_2=\lVert e^{\prime}_{\parallel}\rVert_2\), yields \Cref{eq:kfac-cross-term-9}.

    Under \Cref{eq:kfac-primitive-eps}, the first two (single-factor) terms in \Cref{eq:kfac-cross-term-9} are bounded by \(\varepsilon\lVert a^{\prime}_{\perp}\rVert_2\lVert e^{\prime}_{\parallel}\rVert_2\) and \(\varepsilon\lVert a^{\prime}_{\parallel}\rVert_2\lVert e^{\prime}_{\perp}\rVert_2\), respectively.
    The last three (product) terms are each bounded by \(\varepsilon^2\lVert a^{\prime}_{\cdot}\rVert_2\lVert e^{\prime}_{\cdot}\rVert_2\), where \(a^{\prime}_{\cdot}\) can be either \(a^{\prime}_{\parallel}\) or \(a^{\prime}_{\perp}\), same for \(e^{\prime}_{\cdot}\). Summing and regrouping gives the first inequality in \Cref{eq:kfac-cross-term-final}.

    For the second inequality, note that the three summands in \Cref{eq:kfac-gperp} are pairwise orthogonal (since \(a^{\prime}_{\parallel}\perp a^{\prime}_{\perp}\) and \(e^{\prime}_{\parallel}\perp e^{\prime}_{\perp}\)), so
    \[
        \lVert g^{\prime}_{\perp}\rVert_2^2
        = \lVert a^{\prime}_{\parallel}\rVert_2^2\lVert e^{\prime}_{\perp}\rVert_2^2
        + \lVert a^{\prime}_{\perp}\rVert_2^2\lVert e^{\prime}_{\parallel}\rVert_2^2
        + \lVert a^{\prime}_{\perp}\rVert_2^2\lVert e^{\prime}_{\perp}\rVert_2^2.
    \]
    By Cauchy--Schwarz,
    \[
        \lVert a^{\prime}_{\parallel}\rVert_2\lVert e^{\prime}_{\perp}\rVert_2
        + \lVert a^{\prime}_{\perp}\rVert_2\lVert e^{\prime}_{\parallel}\rVert_2
        + \lVert a^{\prime}_{\perp}\rVert_2\lVert e^{\prime}_{\perp}\rVert_2
        \le \sqrt{3} \lVert g^{\prime}_{\perp}\rVert_2.
    \]
    Since \(\varepsilon\le 1\), we have \(2\varepsilon+3\varepsilon^2\le 5\varepsilon\), yielding the second inequality in \Cref{eq:kfac-cross-term-final}.
\end{proof}

\Cref{lma:kfac-cross-term} shows that to apply \Cref{lma:two-conc-suffice} in the factorized influence setting, it suffices to control factor-level deviations \(\lVert U_A^{\top}\Delta_A(\cdot)\rVert_2\) and \(\lVert U_E^{\top}\Delta_E(\cdot)\rVert_2\) on the relevant vectors. Once these are controlled with parameter \(\varepsilon\), the cross-term condition \(\lVert U^{\top}(P^{\top} P-I)g^{\prime}_{\perp}\rVert_2\leq \widetilde\varepsilon \lVert g^{\prime}_{\perp}\rVert_2\) holds with \(\widetilde\varepsilon=O(\varepsilon)\).

\subsection{Proof of Concentration of Factor-Level Primitives}\label{adxsubsec:kfac-probabilistic}
We now show how to obtain the factor-level bounds \Cref{eq:kfac-primitive-eps} with high probability from the same concentration tools used in \Cref{adxsubsec:leakage-multiple-test}. The key point is that the K-FAC structure allows us to control the relevant quantities by augmenting and controlling \(U_A\) and \(U_E\) separately, rather than working in dimension \(d_A d_E\) directly.

\begin{proposition}\label{prop:kfac-primitive-multi}
    Let \(\{g^{\prime}_j\}_{j=1}^k\) with \(g^{\prime}_j=a^{\prime}_j\otimes e^{\prime}_j\), and let \(g^{\prime}_{j,\perp}\) be the projection onto \(\ker(F)\). Define \(U_A,U_E\) as above and write \(a^{\prime}_j=a^{\prime}_{j,\parallel}+a^{\prime}_{j,\perp}\) and \(e^{\prime}_j=e^{\prime}_{j,\parallel}+e^{\prime}_{j,\perp}\). Denote
    \[
        k_A \coloneqq \sum_{j=1}^k \mathbbm{1}(a^{\prime}_{j,\perp}\neq 0), \qquad
        k_E \coloneqq \sum_{j=1}^k \mathbbm{1}(e^{\prime}_{j,\perp}\neq 0),
    \]
    and also,
    \[
        k_A^{\prime} \coloneqq \dim\bigl(\Span(\{a^{\prime}_{j,\perp}\}_{j=1}^k)\bigr),\qquad
        k_E^{\prime} \coloneqq \dim\bigl(\Span(\{e^{\prime}_{j,\perp}\}_{j=1}^k)\bigr).
    \]
    Assume \(P_A\) and \(P_E\) satisfy the same sketch assumptions as in \Cref{lma:covariance-concentration} (independently across factors). Then, for any \(\varepsilon,\delta\in(0,1)\), if
    \[
        m_A = \Omega\left(\frac{r_A+\min\{\log(k_A/\delta), k_A^{\prime}+\log(1/\delta)\}}{\varepsilon^2}\right)
    \]
    and
    \[
        m_E = \Omega\left(\frac{r_E+\min\{\log(k_E/\delta), k_E^{\prime}+\log(1/\delta)\}}{\varepsilon^2}\right),
    \]
    then with probability at least \(1-\delta\), the following bounds hold simultaneously for all \(j\in\{1,\dots,k\}\):
    \[
        \lVert U_A^{\top} (P_A^{\top} P_A-I)a^{\prime}_{j,\parallel}\rVert_2 \leq \varepsilon \lVert a^{\prime}_{j,\parallel}\rVert_2,
        \qquad
        \lVert U_A^{\top} (P_A^{\top} P_A-I)a^{\prime}_{j,\perp}\rVert_2 \leq \varepsilon \lVert a^{\prime}_{j,\perp}\rVert_2,
    \]
    and
    \[
        \lVert U_E^{\top} (P_E^{\top} P_E-I)e^{\prime}_{j,\parallel}\rVert_2 \leq \varepsilon \lVert e^{\prime}_{j,\parallel}\rVert_2,
        \qquad
        \lVert U_E^{\top} (P_E^{\top} P_E-I)e^{\prime}_{j,\perp}\rVert_2 \leq \varepsilon \lVert e^{\prime}_{j,\perp}\rVert_2.
    \]
    Consequently, the cross-term condition
    \[
        \lVert U^{\top}(P^{\top} P-I)g^{\prime}_{j,\perp}\rVert_2
        \leq 5\sqrt{3} \varepsilon \lVert g^{\prime}_{j,\perp}\rVert_2
    \]
    holds for all \(j\) simultaneously.\footnote{Equivalently, one can run the primitive bounds \Cref{eq:kfac-primitive-eps} with accuracy \(\varepsilon/(5\sqrt{3})\) to obtain a cross-term tolerance of \(\varepsilon\); this only changes \(m_A,m_E\) by constant factors in the \(\Omega(\cdot)\) conditions.}
\end{proposition}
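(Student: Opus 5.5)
The plan is to handle the two factors separately and identically; by symmetry, write the argument for \(P_A\) and then intersect with the corresponding event for \(P_E\), each at failure probability \(\delta/2\). The \(A\)-factor primitives split into two types. The \emph{in-range} primitives \(\lVert U_A^{\top}\Delta_A a^{\prime}_{j,\parallel}\rVert_2\) are controlled for all \(j\) at once by subspace stability on \(\range(A)\). Writing \(a^{\prime}_{j,\parallel}=U_A h_j\), we have \(\lVert U_A^{\top}\Delta_A U_A h_j\rVert_2\leq \lVert U_A^{\top}\Delta_A U_A\rVert_2\lVert a^{\prime}_{j,\parallel}\rVert_2\). \Cref{lma:covariance-concentration} with \(M=U_A\) (stable rank \(r_A\)) gives \(\lVert U_A^{\top}\Delta_A U_A\rVert_2\leq\varepsilon\) with probability \(1-\delta/4\) once \(m_A=\Omega((r_A+\log(1/\delta))/\varepsilon^2)\). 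This is implied by the stated condition in either branch of the minimum. The \emph{out-of-range} primitives \(\lVert U_A^{\top}\Delta_A a^{\prime}_{j,\perp}\rVert_2\) are vacuous when \(a^{\prime}_{j,\perp}=0\), so only the \(k_A\) nonzero ones matter.

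For these, I will use the two complementary routes exactly as in \Cref{prop:multiple-k-prime,prop:multiple-k}, now in dimension \(d_A\).

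\textbf{Subspace route.} Let \(S_A\coloneqq\Span(\range(A)\cup\{a^{\prime}_{j,\perp}\}_j)\), which has dimension \(r_A+k_A^{\prime}\), with orthonormal basis \(W_A\). \Cref{lma:covariance-concentration} gives \(\lVert W_A^{\top}\Delta_A W_A\rVert_2\leq\varepsilon\), and hence \(\lvert x^{\top}\Delta_A y\rvert\leq\varepsilon\lVert x\rVert_2\lVert y\rVert_2\) for all \(x,y\in S_A\). Taking \(x\) ranging over the unit sphere of \(\range(A)\) and \(y=a^{\prime}_{j,\perp}\) yields the bound for every \(j\). The same event also covers the in-range primitives.

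\textbf{Union-bound route.} For each fixed unit \(a^{\prime}_{\perp}\perp\range(A)\), repeat the polarization and Talagrand-comparison argument of \Cref{prop:multiple-k} with \(T=\{x\pm a^{\prime}_{\perp}: x\in U_A S^{r_A-1}\}\). Here \(\rad(T)=\sqrt2\) and \(\gamma(T)\lesssim\sqrt{r_A}\), giving failure probability \(\delta/(4k_A)\) at \(u=\log(8k_A/\delta)\). A union bound over the \(k_A\) nonzero components then finishes. Since either route suffices, the requirement is the minimum of the two, matching the stated \(m_A\). Homogeneity handles non-unit vectors.

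Finally, combine via a union bound over the \(A\)- and \(E\)-events, giving total probability at least \(1-\delta\). On this event \Cref{eq:kfac-primitive-eps} holds for every \(j\), so \Cref{lma:kfac-cross-term} directly gives the cross-term bound \(5\sqrt3\,\varepsilon\lVert g^{\prime}_{j,\perp}\rVert_2\) simultaneously for all \(j\). The main point needing care is the union-bound route: verifying that the sub-Gaussian-increment process argument transfers verbatim to the factor sketch \(P_A\) (it does, since \(P_A\) has i.i.d.\ isotropic sub-Gaussian rows), and that the in-range primitives need no union bound at all because they are uniformly controlled by subspace stability. This matters because otherwise a spurious \(\log k\) term would appear even when \(k_A=0\).
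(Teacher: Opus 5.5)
Your proposal is correct and follows essentially the same argument as the paper. It controls the in-range primitives deterministically from the operator-norm event on \(\range(A)\) (and \(\range(E)\)), controls the out-of-range primitives either by the union-bound route of \Cref{prop:multiple-k} or by the subspace route on \(\Span(\range(A)\cup\{a^{\prime}_{j,\perp}\})\), takes a union bound across the two factors, and then applies \Cref{lma:kfac-cross-term}; your care about avoiding a spurious \(\log k\) term when \(k_A=0\) corresponds to the paper's use of \(\max\{k_A,1\}\).
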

\begin{proof}
    We prove the \(A\)-factor bounds; the \(E\)-factor bounds are identical. Firstly, by \Cref{lma:covariance-concentration} applied to the \(r_A\)-dimensional subspace \(\range(A)\), with probability at least \(1-\delta/4\),
    \[
        \lVert U_A^{\top}(P_A^{\top} P_A-I)U_A\rVert_2 \leq \varepsilon,
    \]
    provided \(m_A=\Omega(\varepsilon^{-2}(r_A+\log(4/\delta)))\). Next, to control \(\lVert U_A^{\top}(P_A^{\top} P_A-I)a^{\prime}_{j,\perp}\rVert_2\) uniformly over \(j\), we use either:
    \begin{enumerate}[label=(\roman*)]
        \item\label{prop:kfac-primitive-multi-i} a union bound over the \(k_A\) nonzero vectors \(\{a^{\prime}_{j,\perp}\}\), giving a \(\log k_A\) dependence, or
        \item\label{prop:kfac-primitive-multi-ii} a subspace argument on \(\Span(\range(A)\cup\{a^{\prime}_{j,\perp}\}_{j=1}^k)\), giving a dependence on \(k_A^{\prime}\).
              These two routes yield the stated \(\min\{\log k_A, k_A^{\prime}\}\) dependence.
    \end{enumerate}
    Concretely, route \labelcref{prop:kfac-primitive-multi-i} follows exactly as in \Cref{prop:multiple-k}: for a fixed unit vector \(v\perp\range(A)\), \(\lVert U_A^{\top}(P_A^{\top} P_A-I)v\rVert_2 \leq \varepsilon\) holds with probability at least \(1-\delta/(4\max\{k_A,1\})\) provided
    \[
        m_A
        =\Omega\left(\frac{r_A+\log(4\max\{k_A,1\}/\delta)}{\varepsilon^{2}}\right)
    \]
    and a union bound over the nonzero \(a^{\prime}_{j,\perp}\) gives the desired uniform control.

    On the other hand, route \labelcref{prop:kfac-primitive-multi-ii} is obtained by applying \Cref{lma:covariance-concentration} to the \((r_A+k_A^{\prime})\)-dimensional subspace \(\Span(\range(A)\cup\{a^{\prime}_{j,\perp}\}_{j=1}^k)\), which yields the same uniform bound with
    \[
        m_A
        =\Omega\left(\frac{r_A+k_A^{\prime}+\log(4/\delta)}{\varepsilon^{2}}\right).
    \]
    For \(a^{\prime}_{j,\parallel}\in\range(A)\), the desired inequality follows deterministically from the operator-norm event:
    \[
        \begin{split}
            \lVert U_A^{\top}(P_A^{\top} P_A-I)a^{\prime}_{j,\parallel}\rVert_2
             & = \lVert U_A^{\top}(P_A^{\top} P_A-I)U_A(U_A^{\top} a^{\prime}_{j,\parallel})\rVert_2            \\
             & \leq \lVert U_A^{\top}(P_A^{\top} P_A-I)U_A\rVert_2\cdot \lVert a^{\prime}_{j,\parallel}\rVert_2
            \leq \varepsilon\lVert a^{\prime}_{j,\parallel}\rVert_2.
        \end{split}
    \]
    Repeating the above argument for the \(E\)-factor and union bounding the \(A\) and \(E\) events gives the four primitive inequalities simultaneously for all \(j\). The claimed cross-term bound then follows by \Cref{lma:kfac-cross-term}.
\end{proof}

On the event in \Cref{prop:kfac-primitive-multi}, \Cref{lma:kfac-cross-term} gives the cross-term condition required by \Cref{lma:two-conc-suffice}. The stability condition on \(\range(F)\) follows from the factor operator-norm events via \Cref{lma:kfac-stability}. Thus \Cref{lma:two-conc-suffice} applies and yields the stated unregularized and regularized leakage bounds for \(g^{\prime}_{j,\perp}\), uniformly over \(j\).

\end{document}